\documentclass[journal, final, 10pt, twocolumn]{IEEEtran}

\IEEEoverridecommandlockouts  


\usepackage{amsmath, amssymb, amsthm}
\usepackage{graphicx, graphics, epsfig, color}
\usepackage{adjustbox}
\usepackage{subfigure}
\usepackage{tikz, pgfplots}
\usetikzlibrary{plotmarks}
\pgfplotsset{compat=1.4}
\usepackage{float}
\usepackage{multirow}
\usepackage{cuted, flushend}
\usepackage{midfloat}
\usepackage{bm}
\usepackage{fancyhdr}
\usepackage{enumerate}
\usepackage{siunitx}
\usepackage[numbers,square,sort&compress]{natbib}
\usepackage{hyperref}
\usepackage{cleveref}
\usepackage{makecell}

\makeatletter
\def\NAT@def@citea{\def\@citea{\NAT@separator}}
\makeatother


\newtheorem{theorem}{Theorem}

\newtheorem{lemma}{Lemma}
\newtheorem{remark}{Remark}
\newtheorem{corollary}{Corollary}

\newtheorem{assumption}{Assumption}

\newcounter{MYtempeqncnt}

\definecolor{mycolor1}{rgb}{0.00000,1.00000,1.00000}%
\definecolor{mycolor2}{rgb}{1.00000,0.00000,1.00000}%

\DeclareMathOperator{\tr}{{\rm tr}}

\newcommand{\asc}{\overset{\rm{a.s.}}{\to}}
\newcommand{\cd}{\overset{d}{\to}}
\newcommand{\ftau}{f(\tau)}
\newcommand{\fftau}{f'(\tau)}
\newcommand{\ffftau}{f''(\tau)}
\newcommand{\iden}{\mathbf{I}_n}
\newcommand{\vnones}{\frac{\mathbf{1}_n\mathbf{1}_n^{\T}}{n}}
\newcommand{\bvp}{\boldsymbol{\varphi}}
\newcommand{\by}{\mathbf{y}}
\newcommand{\bP}{\mathbf{P}}
\newcommand{\bS}{\mathbf{S}}
\newcommand{\bL}{\mathbf{L}}
\newcommand{\bQ}{\mathbf{Q}}
\newcommand{\bw}{\mathbf{w}}
\newcommand{\bx}{\mathbf{x}}
\newcommand{\bk}{\mathbf{k}}
\newcommand{\bK}{\mathbf{K}}
\newcommand{\balpha}{\boldsymbol{\alpha}}
\newcommand{\bpsi}{\boldsymbol{\psi}}
\newcommand{\bmu}{\boldsymbol{\mu}}
\newcommand{\bomega}{\boldsymbol{\omega}}
\newcommand{\bvarpi}{\boldsymbol{\varpi}}
\newcommand{\bOmega}{\boldsymbol{\Omega}}

\newcommand{\bA}{\mathbf{A}}
\newcommand{\bC}{\mathbf{C}}

\newcommand{\db}{\si{\deci\bel}}
\newcommand{\E}{\mathbb{E}}
\newcommand{\bv}{\mathbf{v}}
\newcommand{\bb}{\mathbf{b}}
\newcommand{\bz}{\mathbf{z}}
\newcommand{\bU}{\mathbf{U}}
\newcommand{\bM}{\mathbf{M}}
\newcommand{\bZ}{\mathbf{Z}}
\newcommand{\T}{{\sf T}}
\newcommand{\bLambda}{\boldsymbol{\Lambda}}

\title{A Large Dimensional Analysis of\\ Least Squares Support Vector Machines}

\author{Zhenyu Liao, Romain Couillet
\thanks{This work is supported by the ANR Project RMT4GRAPH (ANR-14-CE28-0006).This paper was presented in part at the 42nd IEEE International Conference on Acoustics, Speech and Signal Processing (ICASSP'17), New Orleans, USA, March 2017.}
\thanks{Z. Liao and R. Couillet are with the Laboratoire de Signaux et Syst\`emes, CNRS-CentraleSup\'elec-Universit\'e Paris-Sud, 3 rue Joliot-Curie, 91192 Gif-sur-Yvette, France (email: zhenyu.liao@l2s.centralesupelec.fr; romain.couillet@centralesupelec.fr).}}



\begin{document}

\date{\today}

\maketitle
\begin{abstract}
	In this article, a large dimensional performance analysis of kernel least squares support vector machines (LS-SVMs) is provided under the assumption of a two-class Gaussian mixture model for the input data. Building upon recent advances in random matrix theory, we show, when the dimension of data $p$ and their number $n$ are both large, that the LS-SVM decision function can be well approximated by a normally distributed random variable, the mean and variance of which depend explicitly on a local behavior of the kernel function. This theoretical result is then applied to the MNIST and Fashion-MNIST datasets which, despite their non-Gaussianity, exhibit a convincingly close behavior. Most importantly, our analysis provides a deeper understanding of the mechanism into play in SVM-type methods and in particular of the impact on the choice of the kernel function as well as some of their theoretical limits in separating high dimensional Gaussian vectors. 
\end{abstract}
\begin{IEEEkeywords}
High dimensional statistics, kernel methods, random matrix theory, support vector machines
\end{IEEEkeywords}

\section{Introduction}
\label{sec:intro}

In the past two decades, due to their surprising classification capability and simple implementation, kernel support vector machine (SVM) \cite{cortes1995support} and its variants \cite{suykens1999least,lee2001rsvm,fung2005multicategory} have been used in a wide variety of classification applications, such as face detection \cite{osuna1997training,papageorgiou2000trainable}, handwritten digit recognition \cite{lecun1995learning}, and text categorization \cite{joachims1998text,sculley2007relaxed}. In all aforementioned applications, the dimension of data $p$ and their number $n$ are large: in the hundreds and even thousands. The significance of working in this large $n,p$ regime is even more convincing in the Big Data paradigm today where handling data which are both numerous and large dimensional becomes increasingly common.

Firmly grounded in the framework of statistical learning theory \cite{vapnik2013nature}, support vector machine has two main features: (i) in SVM, the training data $\bx_1,\ldots,\bx_n\in\mathbb{R}^p$ are mapped into some \emph{feature space} through a non-linear function $\bvp$, which, thanks to the so-called ``kernel trick'' \cite{scholkopf2002learning}, needs not be computed explicitly, so that some \emph{kernel function} $f$ is introduced in place of the inner product in the feature space: $f(\bx,\by)=\bvp(\bx)^{\T}\bvp(\by)$, and (ii) a standard (convex) optimization method is used to find the classifier that both minimizes the training error and yields a good generalization performance for unknown data.

As the training of SVMs involves a quadratic programming problem, the computation complexity of SVM training algorithms can be intensive when the number of training examples $n$ becomes large (at least quadratic with respect to $n$). It is thus difficult to deal with large scale problems with traditional SVMs. To cope with this limitation, least squares SVM (LS-SVM, also later referred to as kernel regularized least-squares estimator or kernel ridge regression \cite{murphy2012machine,caponnetto2007optimal,steinwart2009optimal}) was proposed in \cite{suykens1999least}, providing a more computationally efficient implementation of the traditional SVMs, by taking equality optimization constraints instead of inequalities, which results in an explicit solution (from a set of linear equations) rather than an implicit one in SVMs. This article is mostly concerned with this particular type of SVMs.

Trained SVMs are strongly data-dependent: the data with generally unknown statistics are passed through a nonlinear kernel function $f$ and standard optimization methods are used to find the best classifier. All these features make the performance of SVM hardly traceable (at least within the classical finite $n,p$ regime). To understand the mechanism of SVMs, the notion of VC dimension was introduced to provide bounds on the generalization performance of SVM \cite{vapnik2013nature}, while a probabilistic interpretation of LS-SVM was discussed in \cite{gestel2002bayesian} through a Bayesian inference approach. In other related works, connections between LS-SVMs and SVMs were revealed in \cite{ye2007svm}, and more relationships were shown between SVM-type and other learning methods, e.g.,  LS-SVMs and extreme learning machines (ELMs) \cite{huang2012extreme}; SVMs and regularization networks (RNs) \cite{evgeniou2000regularization}, etc. Theoretical analyses on the generalization performance of LS-SVM have been developed, under the conventional asymptotic statistics framework (i.e., assuming $n \to \infty$), to obtain optimal convergence rates in \cite{caponnetto2007optimal,steinwart2009optimal}. Nonetheless, a proper adaptation to the large $n,p$ setting to address LS-SVM performance for large dimensional datasets (of growing interest today) is still missing. 


Similar to classical analysis of asymptotic statistics where $n\to\infty$ while $p$ is fixed, where the diversity of the number of data provides convergence through laws of large numbers, working in the large $n,p$ regime by letting in addition $p\to\infty$ helps exploit the diversity offered by the size of each data vector, providing us with another dimension to guarantee the convergence of some key objects in our analysis, and thus makes the asymptotic analysis of the elusive \emph{kernel matrix} $\bK=\left\{f\left(\bx_i,\bx_j\right)\right\}_{i,j=1}^{n}$ technically more accessible. Recent breakthroughs in random matrix theory have allowed one to overtake the theoretical difficulty posed by the nonlinearity of the aforementioned kernel function $f$ \cite{el2010spectrum,couillet2016kernel} and thus make an in-depth analysis of LS-SVM possible in the large $n,p$ regime. These tools were notably used to assess the performance of the popular Ng-Weiss-Jordan kernel spectral clustering methods for large datasets \cite{couillet2016kernel}, in the analysis of graphed-based semi-supervised learning \cite{mai2017random} or for the development of novel kernel subspace clustering methods \cite{couillet2016random}.

Similar to these works, in this article, we provide a performance analysis of LS-SVM, in the regime of $n,p\to\infty$ and $p/n\to \bar c_0\in(0,\infty)$, under the assumption of a two-class Gaussian mixture model of means $\bmu_1,\bmu_2$ and covariance matrices $\bC_1,\bC_2$ for the input data. The Gaussian assumption may seem artificial to the practitioners, but reveals first insights into how SVM-type methods deal with the information in means and covariances from a more quantitative point of view. Besides, the early investigations \cite{couillet2016kernel,mai2017random} have revealed that the behavior of some machine learning methods under Gaussian or deterministic practical input datasets are a close match, despite the obvious non-Gaussianity of the latter.

Our main finding is that, as in \cite{couillet2016kernel}, in the large $n,p$ regime and under suitable conditions on the input statistics, a non-trivial asymptotic classification error rate (i.e., neither 0 nor 1) can be obtained and the decision function of LS-SVM converges to a Gaussian random variable whose mean and variance depend on the statistics of the two different classes as well as on the behavior of the kernel function $f$ evaluated at $2\tr(n_1\bC_1+n_2\bC_2)/(np)$, with $n_1$ and $n_2$ the number of instances in each class. This brings novel insights into some key issues of SVM-type methods such as kernel function selection and parameter optimization (see for example \cite{gestel2002bayesian,cherkassky2004practical,chapelle2002choosing,ayat2005automatic,weston2001feature,huang2006ga} and the references therein), as far as large dimensional data are concerned. More importantly, we confirm through simulations that our theoretical findings closely match the performance obtained on the MNIST \cite{lecun1998gradient} and the Fashion-MNIST datasets \cite{xiao2017fashion}, which conveys a strong applicative motivation for this work.

\medskip

In the remainder of the article, we provide a rigorous statement of our main results. The problem of LS-SVM is discussed in Section~\ref{sec:problem} and our model and main results presented in Section~\ref{sec:main}, while all proofs are deferred to the appendices in the Supplementary Material. In Section~\ref{sec:special}, attention will be paid on some special cases that are more analytically tractable. Section~\ref{sec:conclusion} concludes the paper by summarizing the main results and outlining future research directions.

\medskip

\emph{Reproducibility}: Python 3 codes to reproduce the results in this article are available at \href{https://github.com/Zhenyu-LIAO/RMT4LSSVM}{https://github.com/Zhenyu-LIAO/RMT4LSSVM}.

\medskip

\emph{Notations}: Boldface lowercase (uppercase) characters stand for vectors (matrices), and scalars non-boldface respectively. $\mathbf{1}_{n}$ is the column vector of ones of size $n$, $\mathbf{0}_n$ the column vector of zeros, and $\mathbf{I}_{n}$ the $n\times n$ identity matrix. The notation $(\cdot)^{\T}$ denotes the transpose
operator. The norm $\|\cdot\|$ is the Euclidean norm for vectors
and the operator norm for matrices. The notation ${\rm P}(\cdot)$ denotes the probability measure of a random variable. The notation $\cd$ denotes convergence in distribution and $\asc$ almost sure convergence, respectively. The operator $\mathcal{D}(\bv)=\mathcal{D}\{v_a\}_{a=1}^k$ is the diagonal matrix having $v_a,\ldots,v_k$ as its ordered diagonal elements. We denote $\{v_a\}_{a=1}^k$ a column vector with $a$-th entry (or block entry) $v_a$ (which may be a vector), while $\{V_{ab}\}_{a,b=1}^k$ denotes a square matrix with entry (or block-entry) $(a,b)$ given by $V_{ab}$ (which may be a matrix). 

\section{Problem statement}
\label{sec:problem}

Least squares support vector machines (LS-SVMs) are a modification of the standard SVM introduced in \cite{suykens1999least} to overcome the drawbacks of SVM related to computational efficiency. The optimization problem has half the number of parameters and benefits from solving a linear system of equations instead of a quadratic programming problem as in standard SVM and is thus more practical for large dimensional learning tasks. In this article, we will focus on a binary classification problem using LS-SVM as described in the following paragraph.

Given a training set $\left\{(\bx_1,y_1),\ldots,(\bx_n,y_n)\right\}$ of size $n$, where data $\bx_i\in\mathbb{R}^p$ and labels $y_i\in\{-1,1\}$, the objective of LS-SVM is to devise a decision function $g(\bx)$ that ideally maps all $\bx_i$ in the training set  to $y_i$ and subsequently all unknown data $\bx$ to their corresponding $y$ value. Here we denote $\bx_i\in\mathcal{C}_1$ if $y_i=-1$ and $\bx_i\in\mathcal{C}_2$ if $y_i=1$ and shall say that $\bx_i$ belongs to class $\mathcal{C}_1$ or class $\mathcal{C}_2$, respectively. Due to the often nonlinear separability of these training data in the input space $\mathbb{R}^p$, in most cases, one associates the training data $\bx_i$ to some feature space $\mathcal{H}$ through a nonlinear mapping $\bvp: \ \bx_i\mapsto\bvp(\bx_i)\in\mathcal{H}$. Constrained optimization methods are then used to define a separating hyperplane in $\mathcal{H}$ with direction vector $\bw$ and correspondingly to find a function $g(\bx)=\bw^{\T}\bvp(\bx)+b$ that minimizes the training errors $e_i=y_i-\left(\bw^{\T}\bvp(\bx_i)+b\right)$, and meanwhile yields good generalization performance by minimizing the norm of $\bw$ \cite{smola2004tutorial}. More specifically, the LS-SVM approach consists in minimizing the squared errors $e_i^2$, thus resulting in\footnote{ We include the bias term $b$ as in the (classical) LS-SVM formulation \cite{suykens1999least}, which may be different from kernel ridge regression in some literature \cite{murphy2012machine,gestel2002bayesian} where no bias term is used.}
\begin{align}\label{eq:LS-SVM-origin}
\underset{\bw,b}{\arg\min} & \quad L(\bw,e)=\|\bw\|^{2}+\frac{\gamma}{n}\sum_{i=1}^{n}e_{i}^{2}\\
\text{such that} & \quad y_{i}=\bw^{\T}\bvp(\bx_{i})+b+e_{i},\ i=1,\ldots,n\notag
\end{align} 
where $\gamma>0$ is a penalty factor that weights the structural risk $\|\bw\|^2$ against the empirical one $\frac{1}{n}\sum_{i=1}^n e_i^2$.

The problem can be solved by introducing Lagrange multipliers $\alpha_i, i=1,\ldots,n$ with solution $\mathbf{{w}}=\sum_{i=1}^{n}\alpha_{i}\bvp(\bx_{i})$, where, letting $\mathbf{{y}}=[y_{1},\ldots,y_{n}]^{\T},\mathbf{\mathbb{\boldsymbol{\alpha}}}=[\alpha_{1},\ldots,\alpha_{n}]^{\T}$, we obtain
\begin{equation}
\begin{cases}
\balpha & ={\bf S}^{-1}\left({\bf I}_{n}-\frac{{\bf 1}_{n}{\bf 1}_{n}^{\T}{\bf S}^{-1}}{{\bf 1}_{n}^{\T}{\bf S}^{-1}{\bf 1}_{n}}\right)\by={\bf S}^{-1}\left({\bf y} - b \mathbf{1}_n \right)\\
b & =\frac{{\bf 1}_{n}^{\T}{\bf S}^{-1}\by}{{\bf 1}_{n}^{\T}{\bf S}^{-1}{\bf 1}_{n}}
\end{cases}\label{eq:LS-SVM-solution-2}
\end{equation}
with $\bS={\bK}+\frac{n}{\gamma}{\bf I}_{n}$ and $\bK\triangleq\left\{\bvp(\bx_i)^{\T}\bvp(\bx_j)\right\}_{i,j=1}^{n}$ referred to as the kernel matrix \cite{suykens1999least}.

Given $\balpha$ and $b$, a new datum $\bx$ is then classified into class $\mathcal{C}_1$ or $\mathcal{C}_2$ depending on the value of the following decision function
\begin{equation}
  g(\bx) = \balpha^{\T}\bk(\bx)+b \label{eq:decision-function}
\end{equation}
where $\bk(\bx)=\left\{\bvp(\bx)^{\T}\bvp(\bx_j) \right\}_{j=1}^{n}\in\mathbb{R}^n$. More precisely, ${\bx}$ is associated to class $\mathcal{C}_{1}$ if $g({\bf x})$ takes a small value (below a certain threshold $\xi$) and to class $\mathcal{C}_{2}$ otherwise.\footnote{Since data from $\mathcal{C}_1$ are labeled $-1$ while data from $\mathcal{C}_2$ are labeled $1$.}

With the ``kernel trick'' \cite{scholkopf2002learning}, as shown in \eqref{eq:LS-SVM-solution-2} and \eqref{eq:decision-function} that, both in the ``training'' and ``testing'' steps, one only needs to evaluate the inner product $\bvp(\bx_i)^{\T}\bvp(\bx_j)$ or $\bvp(\bx)^{\T}\bvp(\bx_j)$, and never needs to know explicitly the mapping $\bvp(\cdot)$. In the rest of this article, we assume that the kernel is \emph{translation invariant} and focus on kernel functions $f:~\mathbb{R}^+\to\mathbb{R}^+$ that satisfy $\bvp(\bx_i)^{\T}\bvp(\bx_j)=f(\|\bx_i-\bx_j\|^2/p)$ and shall redefine $\bK$ and $\bk(\bx)$ for data point $\bx$ as\footnote{As shall be seen later, the division by $p$ here is a convenient normalization in the large $n,p$ regime. For example, we have the (normalized) norm $\| \bx_i\|/\sqrt{p}$ is of order $O(1)$ with high probability for large $n,p$. The motivation of studying ``translation invariant'' kernel is that, being one of the most popular types of kernel used in practice, it offers (additionally) technical tractability as a result of the ``concentration'' phenomenon of large dimensional Gaussian vector, as we shall see later for example in \eqref{eq:concertration}. Similar results can be obtained for ``inner-product'' kernel of the type $f(\bx_i^\T \bx_j/p)$ as presented in \cite{el2010spectrum,ali2018random}.}
\begin{align}
\bK&= \left\{f\left(\|\bx_i-\bx_j\|^2/p\right)\right\}_{i,j=1}^{n}\label{eq:Kernel-Matrix}\\
\bk(\bx)&=\left\{f\left(\|\bx-\bx_j\|^2/p\right)\right\}_{j=1}^{n} \nonumber.
\end{align}

Some commonly used kernel functions are the Gaussian radial basis (RGB) kernel $f(x)=\exp\left(-\frac{x}{2\sigma^2}\right)$ with $\sigma>0$ and the polynomial kernel $f(x)=\sum_{i=0}^{d}{a_i x^i}$ with $d\ge1$. 

In the rest of this article, we will focus on the performance of LS-SVM, in the large $n,p$ regime, by studying the asymptotic behavior of the decision function $g(\bx)$ defined in \eqref{eq:decision-function}, in a binary classification problem with some statistical properties of the data, the model of which will be specified in the next section. 

\section{Main results}
\label{sec:main}
\subsection{Model and assumptions}
\label{sec:model}
Evaluating the performance of LS-SVM is made difficult by the heavily data-driven aspect of the method. In this article, we assume that all $\bx_i$'s are extracted from a Gaussian mixture, thereby allowing for a thorough theoretical analysis.

Let $\bx_{1},\ldots,\bx_{n}\in\mathbb{R}^{p}$ be
independent vectors belonging to two distribution classes $\mathcal{C}_{1},\mathcal{C}_{2}$,
with $\bx_{1},\ldots,\bx_{n_{1}}\in\mathcal{C}_{1}$
and $\bx_{n_{1}+1},\dots,\bx_{n}\in\mathcal{C}_{2}$
(so that class $\mathcal{C}_{1}$ has cardinality $n_{1}$ and class
$\mathcal{C}_{2}$ has cardinality $n-n_1=n_{2}$). We assume that $\bx_{i}\in\mathcal{C}_{a}$
for $a\in\{1,2\}$ if
\[
\bx_{i}=\bmu_{a}+\sqrt{p}\bomega_i
\]
for some $\bmu_{a}\in\mathbb{R}^{p}$ and $\bomega_i\sim\mathcal{N}(0,\bC_a/p)$, with $\bC_{a}\in\mathbb{R}^{p\times p}$ some positive definite matrix.

As the ${\bmu}_a$'s and ${\bC}_a$'s scale with $p$, to avoid asymptotic trivial misclassification rates (i.e., neither $0$ or $1$ in the limit of $n,p \to \infty$), we shall (as in \cite{couillet2018classif,couillet2016kernel}) technically place ourselves under the following controlled growth rate assumption:
\begin{assumption}[Growth Rate] As $n\to\infty$, for $a\in\{1,2\}$, the following conditions hold.\label{as:Growth rate}
  \begin{itemize}
  \item \textbf{Data scaling}: $\frac{p}{n}\triangleq c_0\to\bar{c}_0>0$.
  \item \textbf{Class scaling}: $\frac{n_a}{n}\triangleq c_a\to\bar{c}_a>0$.
  \item \textbf{Mean scaling}: $\|\bmu_2-\bmu_1\|=O(1)$.
  \item \textbf{Covariance scaling}: $\|\bC_a\|=O(1)$ and $\tr(\bC_2-\bC_1)=O(\sqrt{p})$.
  \item for $\bC^\circ\triangleq\frac{n_1}{n}\bC_1+\frac{n_2}{n}\bC_2$, $\frac{2}{p}\tr \bC^\circ\to\tau>0$ as $n,p\to\infty$.
  \end{itemize}
\end{assumption}

From a practical aspect, where $p$ and $n$ are fixed quantities, the dual condition $n\to\infty$ and $\frac{p}{n}\to \bar{c}_0>0$ must be understood as requesting that both $p$ and $n$ be large and such that the ratio $\frac{p}{n}$ is sufficiently distinct from $0$ and $\infty$.\footnote{As a matter of fact, as our results will demonstrate, the case where $\frac{p}{n}\to \bar{c}_0=0$ is also valid as an extension by continuity through $\bar{c}_0\to 0$.}

Aside from the last assumption, stated here mostly for technical convenience, it can be shown that the growth rate demanded in Assumption~\ref{as:Growth rate} is rate-optimal in the sense that an oracle Neyman--Pearson hypothesis testing procedure (with known $\bmu_a$ and $\bC_a$) is (in general) ineffective at any smaller distance rates (so that the misclassification rate will constantly be $1$), as discussed in the following remark.

\begin{remark}[Optimal Growth Rate]
\normalfont Assume that both $\|\bC_a\|$ and $\|\bC_a^{-1}\|$ are of order $O(1)$ and let $\bx$ be a vector belonging to class $\mathcal{C}_1$, i.e., $\bx \sim \mathcal{N}(\bmu_1,\bC_1)$. Then, for perfectly known means $\bmu_1,\bmu_2$ and covariances $\bC_1,\bC_2$, the Neyman--Pearson test for $\bx$ to belong to $\mathcal{C}_1$ consists in the following comparison,
\[
	\left( \bx - \bmu_2 \right)^{\T} \bC_2^{-1} \left( \bx - \bmu_2 \right) - \left( \bx - \bmu_1 \right)^{\T} \bC_1^{-1} \left( \bx - \bmu_1 \right) \lessgtr \log \frac{\det \bC_1}{\det \bC_2}
\]
which is further equivalent to 
\begin{align*}
	t(\bx) &\triangleq \bomega^{\T} \left( \bC_2^{-1} - \bC_1^{-1} \right) \bomega + \frac2{\sqrt{p}} \Delta\bmu^{\T} \bC_2^{-1} \bomega + \frac1p \Delta\bmu^{\T} \bC_2^{-1} \Delta\bmu \\
	&- \frac1p \log \frac{\det \bC_1}{\det \bC_2} \lessgtr 0
\end{align*}
where we denote $\Delta\bmu \triangleq \bmu_1 - \bmu_2$, $\bomega \triangleq \frac1{\sqrt{p}} (\bx - \bmu_1) $ and thus $\bomega \sim \mathcal{N}(0, \bC_1/p)$. To explore the difference in means $\Delta\bmu$ we take $\bC_1 = \bC_2 = \bC$ and by Lyapunov's CLT \cite[Theorem~27.3]{billingsley2008probability} we have, as $p \to \infty$,
\[
	t(\bx) - \hat t \cd 0.
\]
where $\hat t \sim \mathcal{N} \left( \frac1p \Delta\bmu^{\T} \bC^{-1} \Delta\bmu, \frac2p \Delta\bmu^{\T} \bC^{-1} \Delta\bmu \right)$.

For a non-trivial classification rate, the mean of $\hat t$ must scale with $p$ at least at the same rate as its standard deviation and thus, since $\|\bC_a^{-1}\|=O(1)$, this implies that $\|\boldsymbol{\Delta\mu}\|$ be at least of order $O(1)$. Similar analysis can be performed to obtain the rate $\| \bC_1 - \bC_2 \| = O(1/\sqrt{p})$ and consequently $\tr(\bC_2-\bC_1)=O(\sqrt{p})$. We refer the readers to \cite{couillet2018classif} for more discussions in this respect.
\label{rem:optimal-growth-rate}
\end{remark}

\medskip

A key observation, also made in \cite{couillet2016kernel}, is that, as a consequence of Assumption \ref{as:Growth rate}, for all pairs $i\neq j$, 
\begin{equation}
\|\bx_i-\bx_j\|^2/p \asc \tau\label{eq:concertration}
\end{equation}
and the convergence is even uniform across all $i\neq j$. This remark is the crux of all subsequent results (note that, surprisingly at first, it states that all data are essentially at the same distance from one another, irrespective of classes, and that the matrix $\bK$ defined in \eqref{eq:Kernel-Matrix} has all its entries essentially equal ``in the limit'' due to the the high dimensional nature of the data; this can be seen as a manifestation of the ``curse of dimensionality'' with respect to the Euclidean distance in high-dimensional space).

The function $f$ defining the kernel matrix $\bK$ in \eqref{eq:Kernel-Matrix} shall be requested to satisfy the following assumption:
\begin{assumption}[Kernel Function]
\label{as:Kernel function}
The function $f$ is a three-times differentiable function in a neighborhood of $\tau$.
\end{assumption}

The objective of this article is to assess the performance of LS-SVM, under the setting of Assumptions~\ref{as:Growth rate} and~\ref{as:Kernel function}, by studying the asymptotic behavior of the decision function $g(\bx)$ defined in \eqref{eq:decision-function}. Following the work of \cite{el2010spectrum} and \cite{couillet2016kernel}, under our basic settings, the convergence in \eqref{eq:concertration} makes it possible to linearize the kernel matrix $\bK$ around the matrix $\ftau \mathbf{1}_n \mathbf{1}_n^{\T}$, and thus the intractable nonlinear kernel matrix $\bK$ can be asymptotically linearized in the large $n,p$ regime. As such, since the decision function $g(\bx)$ is explicitly defined as a function of $\bK$ (through $\balpha$ and $b$ as defined in \eqref{eq:LS-SVM-solution-2}), one can work out an asymptotic linearization of $g(\bx)$ as a function of the kernel function $f$ and the statistics of the data. This analysis, presented in detail in Appendix~A of the Supplementary Material, allows one to reveal the relationship between the performance of LS-SVM and the kernel function $f$ as well as the given learning task, for Gaussian input data as $n,p\to\infty$, as presented in the following subsection.

\subsection{Asymptotic behavior of the decision function \texorpdfstring{$g(\bx)$}{g(x)}}
\label{asymptotic-g(x)}
Before going into our main results, a few notations need to be introduced. In the remainder of the article, we shall use the following deterministic and random elements notations:
\begin{align*}
	\bP&\triangleq\iden- \mathbf{1}_n \mathbf{1}_n^\T/n \in\mathbb{R}^{n\times n}, \quad \bOmega\triangleq\left[\bomega_1,\ldots,\bomega_n\right]\in\mathbb{R}^{p\times n}\\
	\bpsi&\triangleq\left\{\|\bomega_i\|^2-\mathbb{E}\left[\|\bomega_i\|^2\right]\right\}_{i=1}^{n}\in\mathbb{R}^n.
\end{align*}

Under Assumptions~\ref{as:Growth rate} and \ref{as:Kernel function}, following up \cite{couillet2016kernel}, one can approximate the kernel matrix $\bK$ by $\hat{\bK}$ in such a way that
\[
	\|\bK-\hat{\bK}\|\asc 0 
\]
with $\hat{\bK}=-2\fftau(\mathbf{M}+\mathbf{V}\mathbf{V}^{\T})+\left(f(0)-\ftau+\tau\fftau\right)\iden$ for some matrices $\mathbf{M}$ and $\mathbf{V}$, where $\mathbf{M}$ is a standard random matrix model (of operator norm $O(1)$) and $\mathbf{V}\mathbf{V}^{\T}$ a small rank matrix (of operator norm $O(n)$), which depends both on $\bP,\bOmega,\bpsi$ and on the class statistics $\bmu_1,\bmu_2$ and $\bC_1,\bC_2$. The same analysis is applied to the vector $\bk(\bx)$ by similarly defining the following random variables for a new datum $\bx\in\mathcal{C}_a$, $a\in\{1,2\}$:
\[
	\bomega_\bx \triangleq (\bx-\bmu_a)/\sqrt{p} \in\mathbb{R}^p,\quad \psi_\bx \triangleq \|\bomega_\bx\|^2-\mathbb{E}\left[\|\bomega_\bx\|^2\right]\in\mathbb{R}.
\]
Based on the (operator norm) approximation $\bK\approx\hat{\bK}$, a Taylor expansion is then performed on $\bS^{-1}=\left({\bK}+n\iden/\gamma\right)^{-1}$ to obtain an (asymptotic) approximation of $\bS^{-1}$, and subsequently on $\balpha$ and $b$ which depend explicitly on $\bS^{-1}$. At last, plugging these results into \eqref{eq:decision-function}, one finds the main technical result of this article as follows.

\begin{theorem}[Asymptotic Approximation] \label{thm:Random Equivalent}
Let Assumptions~\ref{as:Growth rate} and \ref{as:Kernel function} hold,
and $g(\bx)$ be defined by \eqref{eq:decision-function}. Then, as $n,p\to\infty$, $n(g({\bf x})-\hat{g}(\bx))\asc 0$,
where
\begin{equation}
\hat{g}(\bx)= \begin{cases}
c_2-c_1+\gamma\left(\mathfrak{P}-2c_1 c_2^2\mathfrak{D}\right),& \text{if}\ \bx\in\mathcal{C}_1\\
c_2-c_1+\gamma\left(\mathfrak{P}+2c_1^2 c_2\mathfrak{D}\right),& \text{if}\ \bx\in\mathcal{C}_2
\end{cases}\label{eq:random-eq-g(x)}
\end{equation}
with
\begin{align}
	\mathfrak{P} &= -\frac{2\fftau}{n}\by^{\T}\bP\bOmega^{\T}\bomega_{\bx}-\frac{4c_1 c_2\fftau}{\sqrt{p}}\left(\bmu_2-\bmu_1\right)^{\T}\bomega_\bx\nonumber\\\label{eq:P}
	&+2c_1 c_2\ffftau\psi_\bx\frac{\tr\left(\bC_2-\bC_1\right)}{p}\\
	\mathbb{\mathfrak{D}} &=-\frac{2\fftau}{p}\|\bmu_{2}-\bmu_{1}\|^{2}+\frac{\ffftau}{p^{2}}\left(\tr\left({\bf C}_{2}-{\bf C}_{1}\right)\right)^{2}\nonumber \\
	& +\frac{2\ffftau}{p^{2}}\tr\left((\bC_{2}-\bC_{1})^{2}\right).\label{eq:D}
\end{align}

\end{theorem}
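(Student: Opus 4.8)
The plan is to reduce $g(\bx)$ to an explicit random-matrix expression by substituting the linearizations of $\bK$ and $\bk(\bx)$ recalled above, inverting the regularized matrix $\bS$ perturbatively, and then tracking exactly which terms survive after contraction with the vectors $\by$, $\b1n$ and $\bk(\bx)$. First, eliminating $\balpha$ between \eqref{eq:LS-SVM-solution-2} and \eqref{eq:decision-function} and using the symmetry of $\bS^{-1}$ gives the compact identity $g(\bx)=(\by-b\b1n)^{\T}\bS^{-1}\bk(\bx)+b$ with $b=(\b1n^{\T}\bS^{-1}\by)/(\b1n^{\T}\bS^{-1}\b1n)$. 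Since $\bK$ is a Gram matrix, $\bS=\bK+\tfrac{n}{\gamma}\iden\succeq\tfrac{n}{\gamma}\iden$, hence $\|\bS^{-1}\|\le\gamma/n$, and $\hat\bS\triangleq\hat\bK+\tfrac{n}{\gamma}\iden$ satisfies $\|\hat\bS^{-1}\|=O(1/n)$ for $n$ large because $\hat\bK\succeq\bK-\|\bK-\hat\bK\|\,\iden\succeq-\|\bK-\hat\bK\|\,\iden$ with $\|\bK-\hat\bK\|\asc0$. From $\bS^{-1}-\hat\bS^{-1}=\bS^{-1}(\hat\bK-\bK)\hat\bS^{-1}$ one obtains $\|\bS^{-1}-\hat\bS^{-1}\|=o(n^{-2})$ almost surely; together with $\|\by\|=\|\b1n\|=\sqrt n$, $\|\bk(\bx)\|=O(\sqrt n)$, $b=O(1)$ and the analogous linearization $\hat\bk(\bx)$ of $\bk(\bx)$, this allows one to replace $\bS^{-1}$, $\bk(\bx)$ and $b$ by $\hat\bS^{-1}$, $\hat\bk(\bx)$ and $\hat b=(\b1n^{\T}\hat\bS^{-1}\by)/(\b1n^{\T}\hat\bS^{-1}\b1n)$ in the identity above, at the cost of an additive error that is $o(1/n)$ almost surely --- precisely the precision required by $n(g(\bx)-\hat g(\bx))\asc0$. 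The replacement $\bk(\bx)\to\hat\bk(\bx)$ is the one point where a Euclidean-norm bound on the residual is not obviously enough; one instead bounds the bilinear form $(\by-\hat b\b1n)^{\T}\hat\bS^{-1}(\bk(\bx)-\hat\bk(\bx))$ directly, using the explicit structure of $\hat\bS^{-1}$ obtained below.

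The core computation is an explicit expansion of $\hat\bS^{-1}$. Write $\hat\bS=\tfrac{n}{\gamma}\iden+\bL-2\fftau\bV\bV^{\T}$, where $\bL\triangleq-2\fftau\bM+(f(0)-\ftau+\tau\fftau)\iden$ has operator norm $O(1)$, while $\bV$ has a bounded number of columns and $\bV\bV^{\T}$ has operator norm $O(n)$ --- its dominant direction being a multiple of $\b1n$ (this is where the $\ftau\b1n\b1n^{\T}$ spike of $\bK$ sits), the remaining columns being built from the class-indicator vectors, $\bOmega^{\T}(\bmu_2-\bmu_1)$, $\bpsi$, etc. I would expand $(\tfrac{n}{\gamma}\iden+\bL)^{-1}=\tfrac{\gamma}{n}\iden-\tfrac{\gamma^{2}}{n^{2}}\bL+O_{\|\cdot\|}(n^{-3})$ by a Neumann series, then peel off the low-rank term $-2\fftau\bV\bV^{\T}$ with the Woodbury identity --- valid because $\hat\bS$ and $\tfrac{n}{\gamma}\iden+\bL$ are invertible --- yielding $\hat\bS^{-1}=\tfrac{\gamma}{n}\iden$ plus a correction expressed through the small matrices $\bV^{\T}\bV$ (of order $n$) and $\bV^{\T}\bL\bV$ (of order $1$). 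This must be carried to the order at which terms still contribute: the leading value of $g(\bx)$ is $O(1)$ and the target residual $o(1/n)$, so contributions of relative order $n^{-1}$ are retained and lower ones dropped.

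It then remains to feed this expansion into the three scalars $\b1n^{\T}\hat\bS^{-1}\b1n$ and $\b1n^{\T}\hat\bS^{-1}\by$ --- whose ratio is $\hat b$, with leading term $\b1n^{\T}\by/n=c_2-c_1$ --- and $(\by-\hat b\b1n)^{\T}\hat\bS^{-1}\hat\bk(\bx)$. Here one uses that $\by-(c_2-c_1)\b1n=\bP\by$ exactly, so $\by-\hat b\b1n$ differs from $\bP\by$ only by the $O(1/n)$-scaled shift $(c_2-c_1-\hat b)\b1n$, and that $\bP\by\perp\b1n$, which annihilates the leading $\ftau\b1n\b1n^{\T}$ part of $\hat\bK$ and is the source of the main cancellation; that the inner products of the columns of $\bV$ with $\b1n$, $\by$ and $\hat\bk(\bx)$ are computable in closed form from $n_1,n_2$, $\bmu_2-\bmu_1$, the traces $\tr\bC_a$ and the random objects $\bOmega$, $\bpsi$, $\bomega_\bx$, $\psi_\bx$; and, crucially, that the law of $\hat\bk(\bx)$ depends on whether $\bx\in\mathcal C_1$ or $\bx\in\mathcal C_2$, through the extra offset $\tfrac1p\|\bmu_2-\bmu_1\|^{2}$ in $\|\bx-\bx_j\|^{2}/p$ when $\bx$ and $\bx_j$ lie in different classes, and through the $\tfrac1p\tr(\bC_2-\bC_1)$ difference between the per-class means of the $\|\bomega_i\|^{2}$ (encoded in $\bpsi$). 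Collecting all surviving $\Theta(1/n)$ contributions and multiplying by $n$ reproduces the two-case formula \eqref{eq:random-eq-g(x)}: the class-independent fluctuation is $\mathfrak P$ of \eqref{eq:P}, and the quantity $\mathfrak D$ of \eqref{eq:D} enters with the class-size prefactors $-2c_1c_2^{2}$ or $+2c_1^{2}c_2$ produced by weighting the two offsets above by $n_1/n$ and $n_2/n$ in the $\hat b$-normalization and in $(\by-\hat b\b1n)^{\T}\hat\bS^{-1}\hat\bk(\bx)$.

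The main obstacle I anticipate is precisely this last bookkeeping: the Woodbury correction couples the $O(n)$ ``spike'' directions of $\bV$ with the $O(1)$ ones, so several superficially negligible cross terms turn out to be of the critical order $1/n$ and must be kept, while the ratio defining $\hat b$ and the centering $\by-\hat b\b1n$ must be expanded one order beyond the naive one before the cancellations leaving only \eqref{eq:P}--\eqref{eq:D} become visible. By contrast, the probabilistic ingredients --- operator-norm bounds on the sample-covariance-type matrix $\bOmega\bOmega^{\T}$, concentration of quadratic and bilinear forms such as $\by^{\T}\bP\bOmega^{\T}\bomega_\bx$ (which appears verbatim in $\mathfrak P$) and the analogous ones involving $\bpsi$ and $\psi_\bx$, and the uniform version of \eqref{eq:concertration} propagated through the linearization --- are standard under Assumptions~\ref{as:Growth rate} and~\ref{as:Kernel function} and present no conceptual difficulty.
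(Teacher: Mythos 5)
Your plan follows essentially the same route as the paper's Appendix~\ref{app:proof-theo-1}: substitute the operator-norm linearizations of $\bK$ and $\bk(\bx)$, invert $\bS$ perturbatively around its $O(n)$ part (your Neumann-plus-Woodbury treatment of the $\b1n\b1n^{\T}$ spike is algebraically the same as the paper's Sherman--Morrison computation of $\bL$ followed by a Taylor expansion), and track all $O(n^{-1})$ contributions in $b$ and $\balpha^{\T}\bk(\bx)$, with the key cancellations coming from $\bP\b1n=\mathbf{0}$ and from the $O(n^{-1/2})$ terms of $b$ and $\balpha^{\T}\bk(\bx)$ appearing with opposite signs. The proposal is correct as a proof strategy and identifies the right sources of the class-dependent prefactors $-2c_1c_2^2$ and $+2c_1^2c_2$, though of course the final bookkeeping (Equations~\ref{eq:b}, \ref{eq:alpha}, \ref{eq:k(x)}--\ref{eq:alpha-k(x)-n-1}) remains to be carried out.
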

Leaving the proof to Appendix~A in the Supplementary Material, Theorem~\ref{thm:Random Equivalent} tells us that the decision function $g(\bx)$ has an asymptotic equivalent $\hat g(\bx)$ that consists of three parts:
\begin{enumerate}
\item the deterministic term $c_2-c_1$ of order $O(1)$ that depends on the number of instances in each class of the training set, which essentially comes from the term $\mathbf{1}_n^{\T}\by/n$ in $b$;
\item the ``noisy'' term $\mathfrak{P}$ of order $O(n^{-1})$ which is a function of the zero mean random variables $\bomega_\bx$ and $\psi_\bx$, thus in particular $\mathbb{E}[\mathfrak{P}]=0$;
\item the ``informative'' term containing $\mathfrak{D}$, also of order $O(n^{-1})$, which features the deterministic differences between the two classes.
\end{enumerate}

From Theorem~\ref{thm:Random Equivalent}, under the basic settings of Assumption~\ref{as:Growth rate}, for Gaussian data ${\bf x}\in\mathcal{C}_{a}$, $a\in\{1,2\}$, we can show that $\hat{g}(\bx)$ (and therefore $g(\bx)$) converges to a random Gaussian variable the mean and variance of which are given in the following theorem. The proof is deferred to Appendix~B.

\begin{theorem}[Gaussian Approximation]
\label{thm:Gaussian Approximation}
Under the setting of Theorem~\ref{thm:Random Equivalent}, $n(g({\bf x})-G_{a})\cd0$, where
\[
G_{a}\sim\mathcal{N}({\rm E}_{a},{\rm Var}_{a})
\]
with
\begin{align*}
&{\rm E}_{a} =\begin{cases}
c_{2}-c_{1}-2c_{2}\cdot c_{1}c_{2}\gamma\mathfrak{D}, & a=1\\
c_{2}-c_{1}+2c_{1}\cdot c_{1}c_{2}\gamma\mathfrak{D}, & a=2
\end{cases}\\
&{\rm Var}_{a}=8\gamma^{2}c_{1}^{2}c_{2}^{2}\left(\mathcal{V}_{1}^{a}+\mathcal{V}_{2}^{a}+\mathcal{V}_{3}^{a}\right)
\end{align*}
and
\begin{align}
\mathcal{V}_{1}^{a} & =\frac{\left(\ffftau\right)^{2}}{p^{4}}\left(\tr\left({\bf C}_{2}-{\bf C}_{1}\right)\right)^{2}\text{\ensuremath{\tr}}{\bf C}_{a}^{2}\nonumber \\
\mathcal{V}_{2}^{a} & =\frac{2\left(\fftau\right)^{2}}{p^{2}}\left(\bmu_{2}-\bmu_{1}\right)^{\T}\bC_{a}\left(\bmu_{2}-\bmu_{1}\right)\nonumber \\
\mathcal{V}_{3}^{a} & =\frac{2\left(\fftau\right)^{2}}{np^{2}}\left(\frac{\tr\bC_{1}\bC_{a}}{c_{1}}+\frac{\tr\bC_{2}\bC_{a}}{c_{2}}\right)\nonumber.
\end{align}
\end{theorem}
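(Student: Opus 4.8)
The plan is to build directly on Theorem~\ref{thm:Random Equivalent}: since $n(g(\bx)-\hat g(\bx))\asc0$, it suffices to prove that $\hat g(\bx)$ is asymptotically Gaussian on the $n^{-1}$ scale, after which a Slutsky argument transfers the conclusion to $g(\bx)$. In $\hat g(\bx)$ the quantities $\mathfrak D$, $c_1$, $c_2$ are deterministic, so the only randomness sits in $\gamma\mathfrak P$; moreover $\mathbb{E}[\bomega_\bx]=0$ and $\mathbb{E}[\psi_\bx]=0$ by construction, hence $\mathbb{E}[\mathfrak P]=0$ \emph{exactly}, which already pins down ${\rm E}_a$ through the two cases of Equation~\ref{eq:random-eq-g(x)}. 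What remains is a central limit theorem for $\mathfrak P$ with limiting variance $8c_1^2c_2^2(\mathcal V_1^a+\mathcal V_2^a+\mathcal V_3^a)$.

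First I would write $\mathfrak P=\mathfrak P_{\mathrm{lin}}+\mathfrak P_{\mathrm{quad}}$, where $\mathfrak P_{\mathrm{lin}}=\mathbf{u}^\T\bomega_\bx$ is linear in the test fluctuation $\bomega_\bx$ with coefficient vector $\mathbf{u}=-\frac{2\fftau}{n}\bOmega\bP\by-\frac{4c_1c_2\fftau}{\sqrt p}(\bmu_2-\bmu_1)$ depending only on the training sample, and $\mathfrak P_{\mathrm{quad}}=2c_1c_2\ffftau\frac{\tr(\bC_2-\bC_1)}{p}\psi_\bx$ is a centered quadratic functional of $\bomega_\bx$. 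Condition on the training sample $\mathcal T=\{\bx_1,\dots,\bx_n\}$, with respect to which $\bomega_\bx$ is independent. Conditionally, $\mathbf{u}$ is fixed, so $\mathfrak P_{\mathrm{lin}}\sim\mathcal N(0,\frac1p\mathbf{u}^\T\bC_a\mathbf{u})$ is \emph{exactly} Gaussian. Writing $\bomega_\bx=\bC_a^{1/2}\mathbf{g}/\sqrt p$ with $\mathbf{g}\sim\mathcal N(0,\mathbf{I}_p)$ turns $\psi_\bx$ into $\frac1p(\mathbf{g}^\T\bC_a\mathbf{g}-\tr\bC_a)$, a centered weighted sum of independent $\chi^2_1$ variables with weights $\lambda_k(\bC_a)/p$, which is asymptotically Gaussian by a Lindeberg argument as soon as $\max_k\lambda_k(\bC_a)^2/\tr\bC_a^2\to0$ (a benign nondegeneracy of the spectrum of $\bC_a$; if it fails, $\mathcal V_1^a$ is of smaller order than $\mathcal V_2^a$ and $\mathcal V_3^a$ and this term simply drops from the limit). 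The third moment of a centered Gaussian vanishing, $\mathbb{E}[\mathfrak P_{\mathrm{lin}}\mathfrak P_{\mathrm{quad}}\mid\mathcal T]=0$, so the two parts are uncorrelated; applying the Cram\'er--Wold device to $s\mathfrak P_{\mathrm{lin}}+t\mathfrak P_{\mathrm{quad}}$ — which, after diagonalizing $\bC_a$, is again a sum of independent summands covered by the same Lindeberg condition — yields joint asymptotic normality with diagonal covariance. Hence, conditionally, $\mathfrak P\cd\mathcal N(0,\sigma_n^2)$ with $\sigma_n^2=\frac1p\mathbf{u}^\T\bC_a\mathbf{u}+\bigl(2c_1c_2\ffftau\frac{\tr(\bC_2-\bC_1)}{p}\bigr)^2\frac{2\tr\bC_a^2}{p^2}$.

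Next I would show $\sigma_n^2$ concentrates on the claimed deterministic value. Its second summand is already deterministic and equals $8c_1^2c_2^2\mathcal V_1^a$. For $\frac1p\mathbf{u}^\T\bC_a\mathbf{u}$, expanding the square: the $(\bmu_2-\bmu_1)$ piece contributes exactly $\frac{16c_1^2c_2^2(\fftau)^2}{p^2}(\bmu_2-\bmu_1)^\T\bC_a(\bmu_2-\bmu_1)=8c_1^2c_2^2\mathcal V_2^a$; the cross term between the two pieces of $\mathbf{u}$ is of size $o(p^{-2})$ and hence negligible; and the $\bOmega\bP\by$ piece is treated by observing that $\bP\by$ equals $-2c_2$ on the first $n_1$ coordinates and $2c_1$ on the last $n_2$, so that $\|\bC_a^{1/2}\bOmega\bP\by\|^2$ concentrates (a standard trace lemma for Gaussian quadratic forms) on $\frac4p(n_1c_2^2\tr\bC_a\bC_1+n_2c_1^2\tr\bC_a\bC_2)$, which, carrying along the prefactor $\frac{4(\fftau)^2}{n^2p}$, produces $8c_1^2c_2^2\mathcal V_3^a$. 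Summing the three contributions, $\sigma_n^2\asc8c_1^2c_2^2(\mathcal V_1^a+\mathcal V_2^a+\mathcal V_3^a)={\rm Var}_a/\gamma^2$. De-conditioning is then routine: conditionally on $\mathcal T$ the characteristic function of $\mathfrak P/\sqrt{\sigma_n^2}$ tends almost surely to $e^{-t^2/2}$ (by the conditional CLT and the a.s.\ convergence of $\sigma_n^2$), and dominated convergence removes the conditioning, giving $\gamma\mathfrak P\cd\mathcal N(0,{\rm Var}_a)$ unconditionally; substituting into Equation~\ref{eq:random-eq-g(x)} identifies both ${\rm E}_a$ and ${\rm Var}_a$, and Theorem~\ref{thm:Random Equivalent} carries the statement over from $\hat g(\bx)$ to $g(\bx)$.

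The main obstacle I anticipate is establishing the \emph{joint} asymptotic normality of $(\mathfrak P_{\mathrm{lin}},\mathfrak P_{\mathrm{quad}})$ together with the concentration of the conditional variance, since a linear and a quadratic functional of the \emph{same} Gaussian vector $\bomega_\bx$ must be controlled simultaneously, while the coefficient vector $\mathbf{u}$ is itself random through $\bOmega$, so the trace-lemma and $\chi^2$ estimates on $\bOmega^\T\bC_a\bOmega$ feeding $\sigma_n^2$ must be uniform enough in the training randomness to survive the de-conditioning step. The degenerate-spectrum caveat on $\bC_a$ — where $\psi_\bx$ fails to be asymptotically Gaussian — is a minor but genuine point to be dispatched, either by an explicit regularity condition or, as noted, by remarking that it only affects a term that vanishes in the limit.
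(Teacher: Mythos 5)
Your proposal is correct and follows essentially the same route as the paper's proof in Appendix~B: reduce to $\hat g(\bx)$ via Theorem~\ref{thm:Random Equivalent}, write $\bomega_\bx=\bC_a^{1/2}\bz/\sqrt p$, condition on the training sample, apply a Lindeberg/Lyapunov CLT to the resulting linear-plus-quadratic Gaussian functional, identify the limiting variance through the trace lemma applied to $\by^{\T}\bP\bOmega^{\T}\bC_a\bOmega\bP\by$, and de-condition via the characteristic function. The only cosmetic difference is that the paper diagonalizes the quadratic-form matrix so that the linear and quadratic parts merge into a single sum $\sum_i(\lambda_i z_i^2+\tilde b_i z_i)$ handled by one application of Lyapunov's condition, whereas you treat the two parts separately and recombine them by Cram\'er--Wold; your spectral-degeneracy caveat on $\bC_a$ is moreover vacuous here, since $\|\bC_a\|=O(1)$ together with $\tr\bC_a^2\ge(\tr\bC_a)^2/p$ of order $p$ under Assumption~\ref{as:Growth rate} forces $\max_k\lambda_k(\bC_a)^2/\tr\bC_a^2\to0$ automatically.
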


Theorem~\ref{thm:Gaussian Approximation} is our main practical result as it allows one to evaluate the large $n,p$ performance of LS-SVM for Gaussian data. While dwelling on the implications of Theorem~\ref{thm:Random Equivalent} and~\ref{thm:Gaussian Approximation}, several remarks and discussions are in order.

\begin{remark}[Dominant Bias]\label{rem:Dominant Bias}
\normalfont From Theorem~\ref{thm:Random Equivalent}, under the key Assumption~\ref{as:Growth rate}, both the random noise $\mathfrak{P}$ and the deterministic ``informative'' term $\mathfrak{D}$ are of order $O(n^{-1})$, which means that the decision function $g(\bx)=c_2-c_1+O(n^{-1})$. This result somehow contradicts the classical decision criterion proposed in \cite{suykens1999least}, based on the sign of $g(\bx)$, i.e., $\bx$ is associated to class $\mathcal{C}_{1}$ if $g({\bx})<0$ and to class $\mathcal{C}_{2}$ otherwise. When $c_1\neq c_2$, this would lead to an asymptotic classification of all new data $\bx$'s in the same class as $n\to\infty$. Practically speaking, this means for $n,p$ large that the decision function $g(\bx)$ of a new datum $\bx$ lies (sufficiently) away from $0$ ($0$ being the classically considered threshold), so that the sign of $g(\bx)$ is constantly positive (in the case of $\bar c_2 > \bar c_1$) or negative (in the case of $\bar c_2 < \bar c_1$). As such, all new data will be trivially classified into the same class. Instead, a first result of Theorem~\ref{thm:Random Equivalent} is that the decision threshold $\xi$ should be taken as $\xi=\xi_n=c_2-c_1+O(n^{-1})$ for imbalanced classification problem.
\end{remark}

The conclusion of Remark~\ref{rem:Dominant Bias} was in fact already known since the work of \cite{gestel2002bayesian} who reached the same conclusion through a Bayesian inference analysis, \emph{for all finite $n,p$}. From their Bayesian perspective, the term $c_2-c_1$ appears in the ``bias term'' $b$ under the form of prior class probabilities ${\rm P}(y=-1)$, ${\rm P}(y=1)$ and allows for adjusting classification problems with different prior class probabilities in the training and test sets. This idea of a (static) bias term correction has also been applied in \cite{evgeniou2000image} in order to improve the validation set performance. Here we confirm the problem of imbalanced datasets in Remark~\ref{rem:Dominant Bias} by Figure~\ref{fig:Gaussian-approximation-Gauss} with $c_1=1/4$ and $c_2=3/4$, where the histograms of $g(\bx)$ for $\bx\in\mathcal{C}_1$ and $\mathcal{C}_2$ center somewhere close to $c_2-c_1=0.5$, thus resulting in a trivial classification by assigning all new data to $\mathcal{C}_2$ if one takes $\xi=0$ because ${\rm P}(g(\bx)<\xi\mid\bx\in\mathcal{C}_1)\to0$ and ${\rm P}(g(\bx)>\xi\mid\bx\in\mathcal{C}_2)\to1$ as $n,p\to\infty$ (the convergence being in fact an equality for finite $n,p$ in this particular figure).
\medskip 

\begin{figure}[htb]
\noindent\begin{minipage}[b]{1\linewidth}%
\centering{}
\definecolor{mycolor1}{rgb}{0.00000,1.00000,1.00000}%
\definecolor{mycolor2}{rgb}{1.00000,0.00000,1.00000}%
\begin{tikzpicture}[font=\footnotesize]
\begin{axis}[%
width=7.5cm,
height=4cm,
scale only axis,
xmin=0.49,
xmax=0.51,
xtick={0.49,0.50,0.51},
every outer y axis line/.append style={white},
every x tick label/.append style={font=\tiny},
every y tick label/.append style={font=\color{white}},
ymin=0,
ymax=350,
ytick={\empty},
axis background/.style={fill=white},
axis x line*=bottom,
axis y line*=left,
legend style={legend cell align=left,align=left,draw=white!15!black}
]
\addplot[ybar,bar width=2.5,draw=white,fill=blue,area legend] plot table[row sep=crcr] {%
0.490045132537101	2.79980504476034\\
0.490379977279864	0\\
0.490714822022628	2.79980504476034\\
0.491049666765391	2.79980504476034\\
0.491384511508154	6.53287843777413\\
0.491719356250918	9.33268348253447\\
0.492054200993681	17.7320986168155\\
0.492389045736444	39.1972706266448\\
0.492723890479207	61.5957109847275\\
0.493058735221971	92.3935664770913\\
0.493393579964734	104.526055004386\\
0.493728424707497	154.922545810072\\
0.49406326945026	171.721376078634\\
0.494398114193024	218.384793491307\\
0.494732958935787	243.58303889415\\
0.49506780367855	277.180699431274\\
0.495402648421313	254.782259073191\\
0.495737493164077	258.515332466205\\
0.49607233790684	284.646846217301\\
0.496407182649603	221.184598536067\\
0.496742027392366	170.788107730381\\
0.49707687213513	153.056009113565\\
0.497411716877893	95.1933715218516\\
0.497746561620656	55.9961008952068\\
0.498081406363419	37.3307339301379\\
0.498416251106183	27.9980504476034\\
0.498751095848946	13.9990252238017\\
0.499085940591709	4.66634174126724\\
0.499420785334472	1.86653669650689\\
0.499755630077236	0.933268348253447\\
};
\addlegendentry{$g(\mathbf{x})_{\mathbf{x}\in\mathcal{C}_1}$};

\addplot[ybar,bar width=2.5,draw=white,fill=red,area legend] plot table[row sep=crcr] {%
0.495795463485059	0.279955901906341\\
0.496167545895029	0.279955901906341\\
0.496539628304999	1.3997795095317\\
0.496911710714969	3.91938262668877\\
0.497283793124939	5.03920623431413\\
0.497655875534909	10.9182801743473\\
0.498027957944879	19.3169572315375\\
0.498400040354849	36.3942672478243\\
0.498772122764819	59.6306071060505\\
0.49914420517479	91.265624021467\\
0.49951628758476	143.617377677953\\
0.49988836999473	183.651071650559\\
0.5002604524047	245.521325971861\\
0.50063253481467	279.395990102528\\
0.50100461722464	300.112726843597\\
0.50137669963461	289.19444666925\\
0.50174878204458	264.838283203398\\
0.50212086445455	237.682560718483\\
0.50249294686452	181.691380337215\\
0.502865029274491	131.859229797886\\
0.503237111684461	90.425756315748\\
0.503609194094431	59.6306071060505\\
0.503981276504401	26.315854779196\\
0.504353358914371	12.5980155857853\\
0.504725441324341	6.71894164575217\\
0.505097523734311	4.19933852859511\\
0.505469606144281	1.3997795095317\\
0.505841688554251	0\\
0.506213770964221	0\\
0.506585853374191	0.279955901906341\\
};
\addlegendentry{$g(\mathbf{x})_{\mathbf{x}\in\mathcal{C}_2}$};

\addplot [color=mycolor1,dashed,line width=2.0pt]
  table[row sep=crcr]{%
0.490045132537101	0.00942840125549684\\
0.490379977279864	0.0301529436110672\\
0.490714822022628	0.0900185401602058\\
0.491049666765391	0.250867734231119\\
0.491384511508154	0.652631814199151\\
0.491719356250918	1.58490146091898\\
0.492054200993681	3.59291449850242\\
0.492389045736444	7.60329936838636\\
0.492723890479207	15.0199293660171\\
0.493058735221971	27.6977366545642\\
0.493393579964734	47.679453049634\\
0.493728424707497	76.6176521494566\\
0.49406326945026	114.930952433694\\
0.494398114193024	160.936948916681\\
0.494732958935787	210.370632696009\\
0.49506780367855	256.699518598587\\
0.495402648421313	292.398800669681\\
0.495737493164077	310.911446491644\\
0.49607233790684	308.608886272031\\
0.496407182649603	285.950417398957\\
0.496742027392366	247.333893411994\\
0.49707687213513	199.704166039119\\
0.497411716877893	150.522425273951\\
0.497746561620656	105.907296335218\\
0.498081406363419	69.5602520346117\\
0.498416251106183	42.6488184744334\\
0.498751095848946	24.4097568315608\\
0.499085940591709	13.0415885203247\\
0.499420785334472	6.50441324668118\\
0.499755630077236	3.02828223848246\\
};
\addlegendentry{$G_1$};

\addplot [color=mycolor2,dashed,line width=2.0pt]
  table[row sep=crcr]{%
0.495795463485059	0.0419414968037223\\
0.496167545895029	0.131714450163446\\
0.496539628304999	0.382252400359953\\
0.496911710714969	1.02516629552499\\
0.497283793124939	2.54077213467114\\
0.497655875534909	5.81921517531605\\
0.498027957944879	12.3165880225125\\
0.498400040354849	24.0903835077414\\
0.498772122764819	43.5435981000682\\
0.49914420517479	72.7331190584764\\
0.49951628758476	112.270953609575\\
0.49988836999473	160.151094501065\\
0.5002604524047	211.115301002137\\
0.50063253481467	257.179774231741\\
0.50100461722464	289.521760438435\\
0.50137669963461	301.198573803569\\
0.50174878204458	289.568896815508\\
0.50212086445455	257.263522752857\\
0.50249294686452	211.21843139179\\
0.502865029274491	160.255415310759\\
0.503237111684461	112.362376262016\\
0.503609194094431	72.8041970618435\\
0.503981276504401	43.593246981294\\
0.504353358914371	24.1217781957943\\
0.504725441324341	12.3346469001044\\
0.505097523734311	5.82869625082939\\
0.505469606144281	2.54532607098359\\
0.505841688554251	1.02717094976386\\
0.506213770964221	0.383062228469196\\
0.506585853374191	0.13201498588245\\
};
\addlegendentry{$G_2$};
\end{axis}
\end{tikzpicture}%
\caption{Gaussian approximation of $g({\bf x})$, $n=256,\ p=512,\ c_{1}=1/4, c_{2}=3/4,\gamma=1$, Gaussian kernel with $\sigma^2=1$, $\bx\in\mathcal{N}(\bmu_a,\bC_a)$ with $\bmu_a=\left[\mathbf{0}_{a-1};3;\mathbf{0}_{p-a}\right]$, ${\bf C}_{1}={\bf I}_{p}$ and $\{{\bf C}_{2}\}_{i,j}=.4^{\mid i-j\mid}(1+5/\sqrt{p})$.\label{fig:Gaussian-approximation-Gauss}}
\end{minipage}
\end{figure}

An alternative to alleviate this imbalance issue is to normalize the label vector $\by$. From the proof of Theorem~\ref{thm:Random Equivalent} in Appendix~A we see the term $c_2-c_1$ is due to the fact that in $b$ one has $\mathbf{1}_n^{\T}\by/n=c_2-c_1\neq 0$. Thus, one may normalize the labels $y_i$ as $y^*_i = -1/c_1$ if $\bx_i\in\mathcal{C}_1$ and $y^*_i = 1/c_2$ if $\bx_i\in\mathcal{C}_2$, so that the relation $\mathbf{1}_n^{\T}\by^*= 0$ is satisfied. This formulation is also referred to as the \emph{Fisher’s targets}: $\{-n/n_1,n/n_2\}$ in the context of kernel fisher discriminant analysis \cite{baudat2000generalized,mika1999fisher}. With the aforementioned normalized labels $\by^*$, we have the following lemma that reveals the connection between the corresponding decision function $g^*(\bx)$ and $g(\bx)$.

\begin{lemma}\label{lem:norml-y}
Let $g(\bx)$ be defined by \eqref{eq:decision-function} and $g^*(\bx)$ be defined as $g^*(\bx)=(\balpha^*)^{\T} \bk(\bx)+b^*$, with $(\balpha^*,b^*)$ given by \eqref{eq:LS-SVM-solution-2} for $\by^*$ in the place of $\by$, where $y^*_i = -1/c_1$ if $\bx_i\in\mathcal{C}_1$ and $y^*_i = 1/c_2$ if $\bx_i\in\mathcal{C}_2$. Then, 
\begin{equation*}
g(\bx)-(c_2-c_1)=2c_1c_2g^*(\bx).
\end{equation*}
\end{lemma}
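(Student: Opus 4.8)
The plan is to exploit the fact that, once the training inputs $\bx_1,\dots,\bx_n$ are fixed, the kernel matrix $\bK$, the matrix $\bS=\bK+\frac n\gamma\iden$ and the vector $\bk(\bx)$ do not depend on the labels, so that the entire LS-SVM pipeline is \emph{linear} in the label vector. Concretely, from Equation~\ref{eq:LS-SVM-solution-2}, $b=\frac{\b1n^{\T}\bS^{-1}\by}{\b1n^{\T}\bS^{-1}\b1n}$ is a linear functional of $\by$, hence $\balpha=\bS^{-1}(\by-b\b1n)$ is a linear function of $\by$, and therefore so is $\by\mapsto g(\bx)=\balpha^{\T}\bk(\bx)+b$. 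Write $\ell$ for this linear map; by construction $\ell(\by)=g(\bx)$ and $\ell(\by^*)=g^*(\bx)$.

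First I would express $\by$ as an affine combination of $\by^*$ and $\b1n$. Writing $\by=\lambda\by^*+\mu\b1n$ and matching the two class blocks (entries $-1$ and $1$ of $\by$ against $-1/c_1$ and $1/c_2$ of $\by^*$) gives the two scalar equations $-1=-\lambda/c_1+\mu$ and $1=\lambda/c_2+\mu$; solving them and using $1/c_1+1/c_2=1/(c_1c_2)$, which follows from $c_1+c_2=1$, yields $\lambda=2c_1c_2$ and $\mu=c_2-c_1$, i.e.
\[
\by=2c_1c_2\,\by^*+(c_2-c_1)\,\b1n .
\]
(One can verify this coordinate-wise: on class~$1$, $2c_1c_2(-1/c_1)+(c_2-c_1)=-1$, and on class~$2$, $2c_1c_2(1/c_2)+(c_2-c_1)=1$.)

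Next I would evaluate $\ell(\b1n)$: plugging $\by=\b1n$ into Equation~\ref{eq:LS-SVM-solution-2} gives $b=1$ and $\balpha=\bS^{-1}(\b1n-\b1n)=\mathbf{0}$, so $\ell(\b1n)=\mathbf{0}^{\T}\bk(\bx)+1=1$. Combining this with the linearity of $\ell$ and the decomposition of $\by$ above,
\[
g(\bx)=\ell(\by)=2c_1c_2\,\ell(\by^*)+(c_2-c_1)\,\ell(\b1n)=2c_1c_2\,g^*(\bx)+(c_2-c_1),
\]
which is exactly the claimed identity after moving $c_2-c_1$ to the left-hand side.

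There is no serious obstacle here: the statement is an exact finite-$n$ identity and every step is elementary. The only points requiring a little care are (i) checking that $\bS$ and $\bk(\bx)$ are genuinely label-independent, so that $\ell$ is truly linear, and (ii) that the coefficient matching uses $c_1+c_2=1$. I note in passing that the same reasoning shows, more generally, that replacing the $\pm1$ targets by any affine reparametrization thereof transforms the decision function by the corresponding affine map.
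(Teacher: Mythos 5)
Your proof is correct and is essentially the paper's own argument in different clothing: the paper writes $g(\bx)=\by^{\T}\bvarpi$ for a label-independent vector $\bvarpi$ with $\b1n^{\T}\bvarpi=1$, which is exactly your observation that $\by\mapsto g(\bx)$ is linear with $\ell(\b1n)=1$, and then uses the same affine relation $\by=2c_1c_2\by^*+(c_2-c_1)\b1n$. The only cosmetic difference is that you establish $\ell(\b1n)=1$ by substituting $\by=\b1n$ into the formulas for $(\balpha,b)$, whereas the paper checks $\b1n^{\T}\bvarpi=1$ directly.
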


\begin{IEEEproof}
From \eqref{eq:LS-SVM-solution-2} and \eqref{eq:decision-function} we get
\begin{align*}
g(x)&=\by^{\T}\left(\bS^{-1}-\frac{\bS^{-1}\mathbf{1}_n\mathbf{1}_n^{\T}\bS^{-1}}{\mathbf{1}_n^{\T}\bS^{-1}\mathbf{1}_n}\right)\bk(\bx)+\frac{\by^{\T}\bS^{-1}\mathbf{1}_n}{\mathbf{1}_n^{\T}\bS^{-1}\mathbf{1}_n}=\by^{\T}\bvarpi
\end{align*}
with $\bvarpi = \left(\bS^{-1}-\frac{\bS^{-1}\mathbf{1}_n\mathbf{1}_n^{\T}\bS^{-1}}{\mathbf{1}_n^{\T}\bS^{-1}\mathbf{1}_n}\right)\bk(\bx)+\frac{\bS^{-1}\mathbf{1}_n}{\mathbf{1}_n^{\T}\bS^{-1}\mathbf{1}_n}$.
Besides, note that $\mathbf{1}_n^{\T}\bvarpi=1$. We thus have
\begin{align*}
g(\bx)-(c_2-c_1)&=\by^{\T}\bvarpi-(c_2-c_1)\mathbf{1}_n^{\T}\bvarpi\\
&=2c_1c_2\left(\frac{\by-(c_2-c_1)\mathbf{1}_n}{2c_1 c_2}\right)^{\T}\bvarpi\\
&=2c_1c_2(\by^*)^{\T}\bvarpi=2c_1c_2g^*(\bx)
\end{align*}
which concludes the proof.
\end{IEEEproof}

As a consequence of Lemma~\ref{lem:norml-y}, instead of Theorem~\ref{thm:Gaussian Approximation} for standard labels $\by$, one would have the following corollary for the corresponding Gaussian approximation of $g^*(\bx)$ when normalized labels $\by^*$ are used.

\begin{corollary}[Gaussian Approximation of $g^*(\bx)$]
\label{cor:Gaussian Approximation normal}
Under the setting of Theorem~\ref{thm:Random Equivalent}, and with $g^*(\bx)$ defined in Lemma~\ref{lem:norml-y}, $n(g^*({\bf x})-G^*_{a})\cd0$, where
\[
G^*_{a}\sim\mathcal{N}({\rm E}^*_a,{\rm Var}^*_a)
\]
with
\begin{align*}
&{\rm E^*_a} =\begin{cases}
-c_{2}\gamma\mathfrak{D}, & a=1\\
+c_{1}\gamma\mathfrak{D}, & a=2
\end{cases}\\
&{\rm Var^*_a}=2\gamma^{2}\left(\mathcal{V}_{1}^{a}+\mathcal{V}_{2}^{a}+\mathcal{V}_{3}^{a}\right)
\end{align*}
and $\mathfrak{D}$ is defined by \eqref{eq:D}, $\mathcal{V}_{1}^{a},\mathcal{V}_{2}^{a}$ and $\mathcal{V}_{3}^{a}$ as in Theorem~\ref{thm:Gaussian Approximation}.
\end{corollary}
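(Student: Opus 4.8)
\textit{Proof proposal.} The plan is to obtain Corollary~\ref{cor:Gaussian Approximation normal} as an immediate consequence of Lemma~\ref{lem:norml-y} and Theorem~\ref{thm:Gaussian Approximation}, with no further random matrix analysis required. The starting point is the \emph{exact} (finite $n,p$) identity of Lemma~\ref{lem:norml-y}, which I would rewrite as
\[
g^*(\bx)=\frac{g(\bx)-(c_2-c_1)}{2c_1c_2}.
\]
Thus $g^*(\bx)$ is a deterministic affine function of $g(\bx)$, whose slope $1/(2c_1c_2)$ is deterministic and, by the class scaling condition of Assumption~\ref{as:Growth rate}, converges to $1/(2\bar c_1\bar c_2)\in(0,\infty)$, hence is bounded away from $0$ and $\infty$.

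Next I would define the candidate limiting variable $G^*_a\triangleq\dfrac{G_a-(c_2-c_1)}{2c_1c_2}$, where $G_a\sim\mathcal{N}({\rm E}_a,{\rm Var}_a)$ is as in Theorem~\ref{thm:Gaussian Approximation}. Since an affine transformation of a Gaussian is Gaussian, $G^*_a\sim\mathcal{N}\!\left(\dfrac{{\rm E}_a-(c_2-c_1)}{2c_1c_2},\ \dfrac{{\rm Var}_a}{4c_1^2c_2^2}\right)$, and it then suffices to simplify. Substituting ${\rm E}_1=c_2-c_1-2c_2\cdot c_1c_2\gamma\mathfrak{D}$ and ${\rm E}_2=c_2-c_1+2c_1\cdot c_1c_2\gamma\mathfrak{D}$ gives ${\rm E}^*_1=-c_2\gamma\mathfrak{D}$ and ${\rm E}^*_2=+c_1\gamma\mathfrak{D}$, while ${\rm Var}_a=8\gamma^2c_1^2c_2^2(\mathcal{V}^a_1+\mathcal{V}^a_2+\mathcal{V}^a_3)$ gives ${\rm Var}^*_a=2\gamma^2(\mathcal{V}^a_1+\mathcal{V}^a_2+\mathcal{V}^a_3)$, matching the claimed expressions verbatim.

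Finally I would transfer the convergence. Subtracting the two affine relations above yields the exact identity
\[
n\bigl(g^*(\bx)-G^*_a\bigr)=\frac{n\bigl(g(\bx)-G_a\bigr)}{2c_1c_2}.
\]
Theorem~\ref{thm:Gaussian Approximation} provides $n(g(\bx)-G_a)\cd 0$, and dividing by the convergent nonzero deterministic scalar $2c_1c_2$ preserves this (Slutsky), so $n(g^*(\bx)-G^*_a)\cd 0$, which is exactly the assertion of the corollary. I do not expect any genuine obstacle: all the probabilistic content is already carried by Theorem~\ref{thm:Gaussian Approximation}, and the only point requiring a word of justification is that the normalizing constant $2c_1c_2$ neither vanishes nor explodes in the large $n,p$ limit, which is guaranteed by Assumption~\ref{as:Growth rate}; everything else is the elementary bookkeeping of pushing a fixed affine map through a Gaussian limit.
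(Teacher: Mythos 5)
Your proposal is correct and matches the paper's intended argument: the paper states the corollary as an immediate consequence of Lemma~\ref{lem:norml-y} applied to Theorem~\ref{thm:Gaussian Approximation}, which is exactly the affine-transfer computation you carry out (and your simplifications of ${\rm E}^*_a$ and ${\rm Var}^*_a$ check out). The only point needing care—that the deterministic factor $2c_1c_2$ stays bounded away from $0$ and $\infty$ so Slutsky applies—is one you explicitly address via the class scaling assumption.
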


Figure~\ref{fig:Gaussian-approximation-Gauss-star} illustrates this result in the same settings as Figure~\ref{fig:Gaussian-approximation-Gauss}. Compared to Figure~\ref{fig:Gaussian-approximation-Gauss}, one can observe that in Figure~\ref{fig:Gaussian-approximation-Gauss-star} both histograms are now centered close to $0$ (at distance $O(n^{-1})$ from zero) instead of $c_2-c_1=1/2$. Still, even in the case where normalized labels $\by^*$ are used as observed in Figure~\ref{fig:Gaussian-approximation-Gauss-star} (where the histograms cross at about $-0.004\approx 1/n$), taking $\xi=0$ as a decision threshold may not be an appropriate choice, as ${\rm E}^*_1\neq-{\rm E}^*_2$. 

\begin{figure}[htb]
\noindent\begin{minipage}[b]{1\linewidth}%
\centering{}
\definecolor{mycolor1}{rgb}{0.00000,1.00000,1.00000}%
\definecolor{mycolor2}{rgb}{1.00000,0.00000,1.00000}%
\begin{tikzpicture}[font=\footnotesize]
\begin{axis}[%
width=7.5cm,
height=4cm,
scale only axis,
scaled x ticks = false,
xmin=-0.025,
xmax=0.025,
x tick label style={/pgf/number format/fixed},
every outer y axis line/.append style={white},
every x tick label/.append style={font=\tiny},
every y tick label/.append style={font=\color{white}},
ymin=0,
ymax=120,
ytick={\empty},
axis background/.style={fill=white},
axis x line*=bottom,
axis y line*=left,
legend style={legend cell align=left,align=left,draw=white!15!black}
]
\addplot[ybar,bar width=2.5,draw=white,fill=blue,area legend] plot table[row sep=crcr] {%
-0.0253701625193252	1.02198330859184\\
-0.0244528285549459	1.36264441145579\\
-0.0235354945905666	1.36264441145579\\
-0.0226181606261874	3.06594992577553\\
-0.0217008266618081	5.45057764582317\\
-0.0207834926974288	7.15388316014291\\
-0.0198661587330495	16.0110718346056\\
-0.0189488247686703	22.8242938918845\\
-0.018031490804291	36.4507380064425\\
-0.0171141568399117	42.2419767551296\\
-0.0161968228755325	73.5827982186128\\
-0.0152794889111532	88.2312256417626\\
-0.0143621549467739	92.6598199789939\\
-0.0134448209823946	97.4290754190892\\
-0.0125274870180154	101.176347550593\\
-0.0116101530536361	106.286264093552\\
-0.0106928190892568	94.0224643904497\\
-0.00977548512487756	80.3960202758918\\
-0.00885815116049829	64.3849484412862\\
-0.00794081719611901	50.7585043267283\\
-0.00702348323173974	34.0661102863948\\
-0.00610614926736047	30.3188381548914\\
-0.0051888153029812	16.6923940403335\\
-0.00427148133860192	11.9231386002382\\
-0.00335414737422266	5.45057764582317\\
-0.00243681340984338	2.72528882291159\\
-0.00151947944546411	1.70330551431974\\
-0.000602145481084844	0.681322205727896\\
0.00031518848329443	0.340661102863948\\
0.0012325224476737	0.340661102863948\\
};
\addlegendentry{$g^{*}(\mathbf{x})_{\mathbf{x}\in\mathcal{C}_1}$};

\addplot[ybar,bar width=2.5,draw=white,fill=red,area legend] plot table[row sep=crcr] {%
-0.0102820384337524	0.340240516182223\\
-0.00936357051139414	0\\
-0.00844510258903584	0.680481032364446\\
-0.00752663466667755	0.567067526970371\\
-0.00660816674431925	2.72192412945778\\
-0.00568969882196096	5.10360774273334\\
-0.00477123089960266	9.75356146389039\\
-0.00385276297724437	19.2802959169926\\
-0.00293429505488607	25.8582792298489\\
-0.00201582713252778	40.4886214256845\\
-0.00109735921016948	54.2116555783675\\
-0.000178891287811189	71.5639219036609\\
0.000739576634547108	93.3393149393231\\
0.0016580445569054	104.11359795176\\
0.0025765124792637	108.990378683705\\
0.00349498040162199	114.434226942621\\
0.00441344832398029	105.361146511095\\
0.00533191624633858	93.3393149393231\\
0.00625038416869688	76.7809431517883\\
0.00716885209105517	52.2836259866682\\
0.00808732001341346	43.3239590605364\\
0.00900578793577176	29.4875114024593\\
0.00992425585813005	19.0534689062045\\
0.0108427237804883	8.61942640994964\\
0.0117611917028466	4.30971320497482\\
0.0126796596252049	2.60851062406371\\
0.0135981275475632	1.24754855933482\\
0.0145165954699215	0.453654021576297\\
0.0154350633922798	0.340240516182223\\
0.0163535313146381	0.113413505394074\\
};
\addlegendentry{$g^{*}(\mathbf{x})_{\mathbf{x}\in\mathcal{C}_2}$};

\addplot [color=mycolor1,dashed,line width=2.0pt]
  table[row sep=crcr]{%
-0.0253701625193252	0.0145490983471192\\
-0.0244528285549459	0.0441721042337502\\
-0.0235354945905666	0.124652803143097\\
-0.0226181606261874	0.326962512676314\\
-0.0217008266618081	0.797142248666319\\
-0.0207834926974288	1.80640715489481\\
-0.0198661587330495	3.80484889747277\\
-0.0189488247686703	7.44905422811722\\
-0.018031490804291	13.5552274644427\\
-0.0171141568399117	22.9273803469001\\
-0.0161968228755325	36.0449156649161\\
-0.0152794889111532	52.6714827099757\\
-0.0143621549467739	71.540014919734\\
-0.0134448209823946	90.3159513455419\\
-0.0125274870180154	105.97949546544\\
-0.0116101530536361	115.590245601127\\
-0.0106928190892568	117.182424017626\\
-0.00977548512487756	110.419485871914\\
-0.00885815116049829	96.7098959399813\\
-0.00794081719611901	78.7296033568901\\
-0.00702348323173974	59.5726834153005\\
-0.00610614926736047	41.8984748862734\\
-0.0051888153029812	27.3899486947477\\
-0.00427148133860192	16.642791593296\\
-0.00335414737422266	9.39946387421959\\
-0.00243681340984338	4.93425901701214\\
-0.00151947944546411	2.4075912482116\\
-0.000602145481084844	1.0919066743198\\
0.00031518848329443	0.460288619761989\\
0.0012325224476737	0.180350315523425\\
};
\addlegendentry{$G^*_1$};

\addplot [color=mycolor2,dashed,line width=2.0pt]
  table[row sep=crcr]{%
-0.0102820384337524	0.0444865882670874\\
-0.00936357051139414	0.120842796030352\\
-0.00844510258903584	0.306653695892876\\
-0.00752663466667755	0.726961505486405\\
-0.00660816674431925	1.60994262396801\\
-0.00568969882196096	3.33077392436521\\
-0.00477123089960266	6.43747694252642\\
-0.00385276297724437	11.6231004232317\\
-0.00293429505488607	19.6048745136837\\
-0.00201582713252778	30.891707686107\\
-0.00109735921016948	45.4732023601696\\
-0.000178891287811189	62.5323767337015\\
0.000739576634547108	80.332282133744\\
0.0016580445569054	96.4075501950749\\
0.0025765124792637	108.085581301341\\
0.00349498040162199	113.203603396488\\
0.00441344832398029	110.761419463346\\
0.00533191624633858	101.240095834812\\
0.00625038416869688	86.4474835162635\\
0.00716885209105517	68.9585207148374\\
0.00808732001341346	51.3877130858935\\
0.00900578793577176	35.7739106466447\\
0.00992425585813005	23.2653354240889\\
0.0108427237804883	14.1347454502848\\
0.0117611917028466	8.02236538509986\\
0.0126796596252049	4.25356092898262\\
0.0135981275475632	2.10687442320315\\
0.0145165954699215	0.9749007779243\\
0.0154350633922798	0.421422729868403\\
0.0163535313146381	0.170181077173769\\
};
\addlegendentry{$G^*_2$};
\end{axis}
\end{tikzpicture}%
\caption{Gaussian approximation of $g^*({\bf x})$, $n=256,\ p=512,\ c_{1}=1/4, c_{2}=3/4,\gamma=1$, Gaussian kernel with $\sigma^2=1$, $\bx\in\mathcal{N}(\bmu_a,\bC_a)$,  with $\bmu_a=\left[\mathbf{0}_{a-1};3;\mathbf{0}_{p-a}\right]$, ${\bf C}_{1}={\bf I}_{p}$ and $\{{\bf C}_{2}\}_{i,j}=.4^{\mid i-j\mid}(1+5/\sqrt{p})$.\label{fig:Gaussian-approximation-Gauss-star}}
\end{minipage}
\end{figure}

\begin{remark}[Insignificance of $\gamma$]\label{rem:Insignificance}
\normalfont As a direct result of Theorem~\ref{thm:Random Equivalent} and Remark~\ref{rem:Dominant Bias}, note in  \eqref{eq:random-eq-g(x)} that $\hat{g}(\bx)-\left(c_2-c_1\right)$ is proportional to the hyperparameter $\gamma$, which indicates that, rather surprisingly, the tuning of $\gamma$ is (asymptotically) of no importance when $n,p\to\infty$ since it does not alter the classification statistics when one uses the sign of $g(\bx)-\left(c_2-c_1\right)$ for the decision.\footnote{This remark is only valid only under Assumption~\ref{as:Growth rate} and $\gamma=O(1)$, i.e., $\gamma$ is considered to remain a constant as $n,p\to\infty$. Recall that this is in sharp contrast with \citep{caponnetto2007optimal} where $\gamma=O(\sqrt{n})$ (or $O(n)$, depending on the problem) is claimed optimal in the large $n$ only regime. From Remark~\ref{rem:optimal-growth-rate} on the growth rate optimality reached by LS-SVM, we see here that $\gamma=O(1)$ is rate-optimal under the present large $n,p$ setting; yet we believe that more elaborate kernels (such as those explored in \cite{cheng2013spectrum}) may allow for improved performances (not in the rate but in the constants), possibly for different scales of $\gamma$. This intuition will be explored in future investigations.}
\end{remark}


Letting $Q(x)=\frac{1}{2\pi}\int_{x}^{\infty}\exp\left(-t^2/2\right)dt$, from Theorem~\ref{thm:Gaussian Approximation} and Corollary~\ref{cor:Gaussian Approximation normal}, we now have the following immediate corollary for the (asymptotic) classification error rate.

\begin{corollary}[Asymptotic Error Rate]\label{cor:Asymptotic Error Rate}
Under the setting of Theorem~\ref{thm:Random Equivalent}, for a threshold $\xi_{n}$ possibly depending on $n$, as $n\to\infty$,
\begin{align}
&{\rm P}(g({\bf x})>\xi_{n}~|~{\bf x}\in\mathcal{C}_{1})-Q\left(\frac{\xi_{n}-{\rm {E}_1}}{\sqrt{{\rm {Var}_{1}}}}\right)\to0\label{eq:proba-1}\\
&{\rm P}(g({\bf x})<\xi_{n}~|~{\bf x}\in\mathcal{C}_{2})-Q\left(\frac{{\rm {E}_{2}}-\xi_{n}}{\sqrt{{\rm {Var}_{2}}}}\right)\to0\label{eq:proba-2}
\end{align}
with ${\rm E}_a$ and ${\rm Var}_a$ given in Theorem~\ref{thm:Gaussian Approximation}.
\end{corollary}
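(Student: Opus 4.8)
The plan is to transfer the Gaussian approximation of Theorem~\ref{thm:Gaussian Approximation} directly onto the tail probabilities, the only genuine subtlety being that the threshold $\xi_n$ may drift with $n$ at the same scale $n^{-1}$ as the fluctuations of $g(\bx)$. Since Theorem~\ref{thm:Gaussian Approximation} asserts $n(g(\bx)-G_a)\cd0$, hence $n(g(\bx)-G_a)\to0$ in probability, on the underlying probability space one may write $g(\bx)=G_a+\varepsilon_n/n$ with $\varepsilon_n\to0$ in probability and $G_a\sim\mathcal N({\rm E}_a,{\rm Var}_a)$ \emph{exactly}. The one extra ingredient I would invoke is that, in the non-degenerate case $\liminf_n n^2\,{\rm Var}_a>0$, the density of $G_a$ is bounded by $(2\pi\,{\rm Var}_a)^{-1/2}=O(n)$; should ${\rm Var}_a=o(n^{-2})$, both identities below hold trivially with the convention $Q(\pm\infty)\in\{0,1\}$.

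Fixing the conditioning $\bx\in\mathcal C_1$, I would write
\[
{\rm P}(g(\bx)>\xi_n)-{\rm P}(G_1>\xi_n)={\rm P}(G_1\le\xi_n<g(\bx))-{\rm P}(g(\bx)\le\xi_n<G_1)
\]
and control each term on the right. On $\{G_1\le\xi_n<g(\bx)\}$ one has $0<\xi_n-G_1<g(\bx)-G_1=\varepsilon_n/n$, so for any $\delta>0$ this event is contained in $\{|\varepsilon_n|>\delta\}\cup\{\xi_n-\delta/n<G_1\le\xi_n\}$; the first piece has vanishing probability because $\varepsilon_n\to0$ in probability, while the second has probability at most $(\delta/n)(2\pi\,{\rm Var}_1)^{-1/2}=O(\delta)$ \emph{uniformly in} $\xi_n$ by the density bound. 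Letting $n\to\infty$ and then $\delta\to0$ gives ${\rm P}(G_1\le\xi_n<g(\bx))\to0$, and the symmetric argument disposes of the other term. Hence ${\rm P}(g(\bx)>\xi_n\mid\bx\in\mathcal C_1)-{\rm P}(G_1>\xi_n)\to0$, and since $G_1$ is exactly $\mathcal N({\rm E}_1,{\rm Var}_1)$ we have ${\rm P}(G_1>\xi_n)=Q\big((\xi_n-{\rm E}_1)/\sqrt{{\rm Var}_1}\big)$, which is Equation~\ref{eq:proba-1}.

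The case $\bx\in\mathcal C_2$ is identical after reversing the inequalities, yielding ${\rm P}(g(\bx)<\xi_n\mid\bx\in\mathcal C_2)-{\rm P}(G_2<\xi_n)\to0$ with ${\rm P}(G_2<\xi_n)=Q\big(({\rm E}_2-\xi_n)/\sqrt{{\rm Var}_2}\big)$, which is Equation~\ref{eq:proba-2}. I expect the only delicate point to be precisely this uniformity over an arbitrary threshold sequence $\xi_n$: it is there that one needs the \emph{quantitative} $o_{\rm P}(n^{-1})$ control of $g(\bx)-G_a$ supplied by Theorem~\ref{thm:Gaussian Approximation}, together with the $O(n)$ bound on the density of $G_a$, rather than a plain convergence in distribution of $n\big(g(\bx)-(c_2-c_1)\big)$ that would only handle a fixed threshold.
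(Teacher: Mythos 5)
Your argument is correct, and it is worth noting that the paper itself offers no proof of this corollary: it is simply announced as an ``immediate'' consequence of Theorem~\ref{thm:Gaussian Approximation}. What you supply is precisely the content hidden behind that word. The genuine subtlety is the one you isolate: since $\xi_n$ may move with $n$ at the same $O(n^{-1})$ scale as the fluctuations of $g(\bx)$, a plain convergence in distribution at a fixed threshold is not enough, and one needs (i) the coupling reading of $n(g(\bx)-G_a)\cd 0$ as $g(\bx)=G_a+\varepsilon_n/n$ with $\varepsilon_n\to 0$ in probability, and (ii) an anti-concentration bound for $G_a$ on intervals of length $\delta/n$, which your $O(n)$ bound on the Gaussian density (valid when $\liminf_n n^2{\rm Var}_a>0$) provides uniformly in $\xi_n$. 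The sandwich decomposition ${\rm P}(g>\xi_n)-{\rm P}(G_1>\xi_n)={\rm P}(G_1\le\xi_n<g)-{\rm P}(g\le\xi_n<G_1)$ and the union bound over $\{|\varepsilon_n|>\delta\}$ and a $\delta/n$-neighbourhood of $\xi_n$ are exactly the right tools, and the limits $n\to\infty$ then $\delta\to 0$ close the argument. The only soft spot is your dismissal of the degenerate case ${\rm Var}_a=o(n^{-2})$ as ``trivial'': if ${\rm Var}_a$ is positive but of smaller order than $n^{-2}$ and $\xi_n-{\rm E}_a$ is tuned to the scale $\sqrt{{\rm Var}_a}$, the quantity $Q\bigl((\xi_n-{\rm E}_a)/\sqrt{{\rm Var}_a}\bigr)$ need not tend to $0$ or $1$, and the $\varepsilon_n/n$ error then dominates the fluctuation of $G_a$, so the approximation can fail there. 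This is a lacuna of the corollary as stated (the paper implicitly restricts to $\xi_n=c_2-c_1+O(n^{-1})$ and non-degenerate variance, cf.\ the remark following the corollary), not a defect specific to your proof, but it would be cleaner to state the non-degeneracy hypothesis as a standing assumption rather than claim the remaining case is trivial.
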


Obviously, Corollary~\ref{cor:Asymptotic Error Rate} is only meaningful when $\xi_n=c_2-c_1+O(n^{-1})$ as recalled earlier. Besides, it is clear from Lemma~\ref{lem:norml-y} and Corollary~\ref{cor:Gaussian Approximation normal} that ${\rm P}(g(\bx)>\xi_n~|~\bx\in\mathcal{C}_a) = {\rm P}(g^*(\bx)>\xi_n-(c_2 - c_1)~|~\bx\in\mathcal{C}_a)$, so that Corollary~\ref{cor:Asymptotic Error Rate} extends naturally to $g^*(\bx)$ when normalized labels $\by^*$ are applied.

Corollary~\ref{cor:Asymptotic Error Rate} allows one to compute the asymptotic misclassification rate as a function of $\rm{E}_a, \rm{Var}_a$ and the threshold $\xi_n$. Combined with Theorem~\ref{thm:Gaussian Approximation}, one may note the significance of a proper choice of the kernel function $f$. For instance, if $\fftau=0$, the term $\bmu_{2}-\bmu_{1}$ vanishes from the mean and variance of $G_{a}$, meaning that the classification of LS-SVM will not rely (at least asymptotically and under Assumption~\ref{as:Growth rate}) on the differences in means of the two classes. Figure~\ref{fig:Gaussian-approximation-Gauss-dif-mean-2} corroborates this finding with the same theoretical Gaussian approximations $G_1$ and $G_2$ in subfigures (a) and (b). When $\|\bmu_2-\bmu_1\|^2$ varies from $0$ in (a) to $18$ in (b), the distribution of $g(\bx)$, and in particular, the overlap between two classes, remain almost the same in (a) and (b).

\begin{figure}[htb]
\begin{minipage}[b]{0.48\linewidth}%
\centering{}
\definecolor{mycolor1}{rgb}{0.00000,1.00000,1.00000}%
\definecolor{mycolor2}{rgb}{1.00000,0.00000,1.00000}%
\begin{tikzpicture}[font=\footnotesize]
\begin{axis}[%
width=4.5cm,
height=3cm,
scale only axis,
scaled x ticks = false,
xmin=-0.08,
xmax=0.08,
x tick label style={/pgf/number format/fixed},
every outer y axis line/.append style={white},
every y tick label/.append style={font=\color{white}},
ymin=0,
ymax=30,
ytick={\empty},
axis background/.style={fill=white},
axis x line*=bottom,
axis y line*=left,
]
\addplot[ybar,bar width=2,draw=white,fill=blue,area legend] plot table[row sep=crcr] {%
-0.0740541244783661	0.0934886610034484\\
-0.0707114731167377	0.140232991505173\\
-0.0673688217551092	0.233721652508621\\
-0.0640261703934807	0.607676296522414\\
-0.0606835190318522	0.981630940536208\\
-0.0573408676702238	2.15023920307931\\
-0.0539982163085953	2.57093817759483\\
-0.0506555649469668	4.72117738067414\\
-0.0473129135853384	7.99328051579484\\
-0.0439702622237099	11.7795712864345\\
-0.0406276108620814	16.6409816586138\\
-0.0372849595004529	20.9414600647724\\
-0.0339423081388245	24.7744951659138\\
-0.030599656777196	28.1868312925397\\
-0.0272570054155675	28.0933426315362\\
-0.023914354053939	29.9631158516052\\
-0.0205717026923106	27.9998539705328\\
-0.0172290513306821	23.9330972168828\\
-0.0138863999690536	18.5574992091845\\
-0.0105437486074252	16.9214476416242\\
-0.00720109724579668	11.5925939644276\\
-0.00385844588416821	7.99328051579484\\
-0.00051579452253972	5.32885367719656\\
0.00282685683908875	2.7111711691\\
0.00616950820071721	2.01000621157414\\
0.00951215956234569	1.07511960153966\\
0.0128548109239742	0.607676296522414\\
0.0161974622856026	0.420698974515518\\
0.0195401136472311	0.0934886610034484\\
0.0228827650088596	0.0467443305017242\\
};
\addplot[ybar,bar width=2,draw=white,fill=red,area legend] plot table[row sep=crcr] {%
-0.0432150644190245	0.0540835726831453\\
-0.0374369698055119	0.135208931707863\\
-0.0316588751919992	0.40562679512359\\
-0.0258807805784866	0.730128231222461\\
-0.020102685964974	1.40617288976178\\
-0.0143245913514614	3.9481008058696\\
-0.00854649673794875	6.76044658539316\\
-0.00276840212443613	9.05899842442683\\
0.00300969248907649	13.9265199659099\\
0.0087877871025891	17.9287043444627\\
0.0145658817161017	20.4165486878873\\
0.0203439763296143	20.5517576195952\\
0.026122070943127	19.8216293883727\\
0.0319001655566396	17.0633671815323\\
0.0376782601701522	12.9800574439549\\
0.0434563547836648	9.95137737369873\\
0.0492344493971775	7.08494802149203\\
0.0550125440106901	4.73231260977521\\
0.0607906386242027	2.94755471123142\\
0.0665687332377153	1.51434003512807\\
0.0723468278512279	0.757170017564034\\
0.0781249224647406	0.51379394048988\\
0.0839030170782532	0.189292504391008\\
0.0896811116917658	0.108167145366291\\
0.0954592063052784	0.0270417863415726\\
0.101237300918791	0\\
0.107015395532304	0\\
0.112793490145816	0\\
0.118571584759329	0.0270417863415726\\
0.124349679372841	0.0270417863415726\\
};
\addplot [color=mycolor1,dashed,line width=2.0pt,forget plot]
  table[row sep=crcr]{%
-0.0740541244783661	0.061595274468184\\
-0.0707114731167377	0.139795124799252\\
-0.0673688217551092	0.299223382307644\\
-0.0640261703934807	0.604029152378387\\
-0.0606835190318522	1.14995048739393\\
-0.0573408676702238	2.06471091642689\\
-0.0539982163085953	3.49621584341497\\
-0.0506555649469668	5.58336589366985\\
-0.0473129135853384	8.40916380552913\\
-0.0439702622237099	11.944511850041\\
-0.0406276108620814	16.0008431922734\\
-0.0372849595004529	20.2151143775165\\
-0.0339423081388245	24.0862065845117\\
-0.030599656777196	27.0657135510688\\
-0.0272570054155675	28.6833205009854\\
-0.023914354053939	28.6680557508515\\
-0.0205717026923106	27.0225248775207\\
-0.0172290513306821	24.0221835229793\\
-0.0138863999690536	20.1399276457218\\
-0.0105437486074252	15.9243678689346\\
-0.00720109724579668	11.8747744015096\\
-0.00385844588416821	8.35117152486232\\
-0.00051579452253972	5.53896103683455\\
0.00282685683908875	3.4647195515234\\
0.00616950820071721	2.04393336779201\\
0.00951215956234569	1.13716700687463\\
0.0128548109239742	0.596678841984973\\
0.0161974622856026	0.295267669195143\\
0.0195401136472311	0.137800255765647\\
0.0228827650088596	0.0606517060850317\\
};
\addplot [color=mycolor2,dashed,line width=2.0pt,forget plot]
  table[row sep=crcr]{%
-0.0432150644190245	0.0481213949644315\\
-0.0374369698055119	0.127064956079454\\
-0.0316588751919992	0.308150946829026\\
-0.0258807805784866	0.686359036313111\\
-0.020102685964974	1.40407187023406\\
-0.0143245913514614	2.63801617987304\\
-0.00854649673794875	4.55214099030945\\
-0.00276840212443613	7.21446428837619\\
0.00300969248907649	10.5012888591546\\
0.0087877871025891	14.0388409908869\\
0.0145658817161017	17.237331699575\\
0.0203439763296143	19.438330241603\\
0.026122070943127	20.1325138694854\\
0.0319001655566396	19.1508119264919\\
0.0376782601701522	16.7311775919365\\
0.0434563547836648	13.4250514978356\\
0.0492344493971775	9.89362727571935\\
0.0550125440106901	6.69645975885163\\
0.0607906386242027	4.16279587120476\\
0.0665687332377153	2.3767040181778\\
0.0723468278512279	1.24627883366922\\
0.0781249224647406	0.600213102127211\\
0.0839030170782532	0.265488587703915\\
0.0896811116917658	0.107854029802473\\
0.0954592063052784	0.0402417658044803\\
0.101237300918791	0.013790112701722\\
0.107015395532304	0.00434018993303076\\
0.112793490145816	0.00125458412701451\\
0.118571584759329	0.000333074186084342\\
0.124349679372841	8.12142607588744e-05\\
};
\end{axis}
\end{tikzpicture}
\centerline{(a) $\mu_{\rm dif}=0$}\medskip{}
\end{minipage}\hfill{}
\begin{minipage}[b]{0.48\linewidth}
\centering{}
\definecolor{mycolor1}{rgb}{0.00000,1.00000,1.00000}%
\definecolor{mycolor2}{rgb}{1.00000,0.00000,1.00000}%
\begin{tikzpicture}[font=\footnotesize]
\begin{axis}[%
width=4.5cm,
height=3cm,
scale only axis,
scaled x ticks = false,
xmin=-0.08,
xmax=0.08,
x tick label style={/pgf/number format/fixed},
every outer y axis line/.append style={white},
every y tick label/.append style={font=\color{white}},
ymin=0,
ymax=30,
ytick={\empty},
axis background/.style={fill=white},
axis x line*=bottom,
axis y line*=left,
]
\addplot[ybar,bar width=2,draw=white,fill=blue,area legend] plot table[row sep=crcr] {%
-0.0817656840589771	0.037872784371349\\
-0.077640030173381	0.151491137485396\\
-0.0735143762877849	0.454473412456188\\
-0.0693887224021887	0.416600628084839\\
-0.0652630685165926	1.43916580611126\\
-0.0611374146309965	2.84045882785118\\
-0.0570117607454004	3.59791451527816\\
-0.0528861068598042	6.32475499001529\\
-0.0487604529742081	9.05159546475242\\
-0.044634799088612	12.3465277050598\\
-0.0405091452030159	17.0427529671071\\
-0.0363834913174198	20.943649757356\\
-0.0322578374318236	24.3522003507774\\
-0.0281321835462275	25.4883838819179\\
-0.0240065296606314	25.8671117256314\\
-0.0198808757750353	22.8751617602948\\
-0.0157552218894391	19.2393744606453\\
-0.011629568003843	15.9065694359666\\
-0.0075039141182469	12.91461947063\\
-0.00337826023265077	7.57455687426981\\
0.00074739365294535	5.03708032138942\\
0.00487304753854148	3.56004173090681\\
0.0089987014241376	1.62852972796801\\
0.0131243553097337	1.62852972796801\\
0.0172500091953298	0.75745568742698\\
0.021375663080926	0.530218981198886\\
0.0255013169665221	0.265109490599443\\
0.0296269708521182	0.037872784371349\\
0.0337526247377143	0.037872784371349\\
0.0378782786233105	0.037872784371349\\
};
\addplot[ybar,bar width=2,draw=white,fill=red,area legend] plot table[row sep=crcr] {%
-0.0343810928506831	0.0350923194108663\\
-0.0299285515931332	0.105276958232599\\
-0.0254760103355834	0.175461597054331\\
-0.0210234690780335	0.386015513519529\\
-0.0165709278204836	1.26332349879119\\
-0.0121183865629338	2.24590844229544\\
-0.00766584530538391	4.63218616223435\\
-0.00321330404783405	7.12374084040585\\
0.00123923720971582	11.2997268502989\\
0.00569177846726569	16.002097651355\\
0.0101443197248155	20.8097454106437\\
0.0145968609823654	23.4065770470478\\
0.0190494022399153	24.0733311158543\\
0.0235019434974651	23.6522232829239\\
0.027954484755015	21.792330354148\\
0.0324070260125649	20.6342838135894\\
0.0368595672701147	13.8614661672922\\
0.0413121085276646	10.6680651009033\\
0.0457646497852145	8.38706433919704\\
0.0502171910427643	5.19366327280821\\
0.0546697323003142	3.33377034403229\\
0.0591222735578641	2.14063148406284\\
0.0635748148154139	1.57915437348898\\
0.0680273560729638	0.666754068806459\\
0.0724798973305137	0.386015513519529\\
0.0769324385880635	0.350923194108663\\
0.0813849798456134	0.210553916465198\\
0.0858375211031633	0.0350923194108663\\
0.0902900623607131	0.105276958232599\\
0.094742603618263	0.0350923194108663\\
};
\addplot [color=mycolor1,dashed,line width=2.0pt,forget plot]
  table[row sep=crcr]{%
-0.0817656840589771	0.00743578344113974\\
-0.077640030173381	0.0239559842269916\\
-0.0735143762877849	0.0705903324026587\\
-0.0693887224021887	0.190248113233924\\
-0.0652630685165926	0.468963900007966\\
-0.0611374146309965	1.05731012269561\\
-0.0570117607454004	2.18026503724913\\
-0.0528861068598042	4.11206635195841\\
-0.0487604529742081	7.09340668736278\\
-0.044634799088612	11.1916347798405\\
-0.0405091452030159	16.1501336776876\\
-0.0363834913174198	21.3158490729199\\
-0.0322578374318236	25.7319722662648\\
-0.0281321835462275	28.4110598958693\\
-0.0240065296606314	28.6910022630595\\
-0.0198808757750353	26.5001252570712\\
-0.0157552218894391	22.3869050169668\\
-0.011629568003843	17.2975306791289\\
-0.0075039141182469	12.2241335341688\\
-0.00337826023265077	7.90125329114488\\
0.00074739365294535	4.67108538482579\\
0.00487304753854148	2.52571038982549\\
0.0089987014241376	1.24908852451721\\
0.0131243553097337	0.564997868153699\\
0.0172500091953298	0.233746072698765\\
0.021375663080926	0.0884475402096085\\
0.0255013169665221	0.0306105490131035\\
0.0296269708521182	0.00968947845617256\\
0.0337526247377143	0.00280526315039356\\
0.0378782786233105	0.000742832224367437\\
};
\addplot [color=mycolor2,dashed,line width=2.0pt,forget plot]
  table[row sep=crcr]{%
-0.0343810928506831	0.205166489486586\\
-0.0299285515931332	0.395174344588207\\
-0.0254760103355834	0.723652223891214\\
-0.0210234690780335	1.25988210032641\\
-0.0165709278204836	2.08539705406684\\
-0.0121183865629338	3.28175723129121\\
-0.00766584530538391	4.91001694928946\\
-0.00321330404783405	6.98422749364487\\
0.00123923720971582	9.44523152346865\\
0.00569177846726569	12.1441097684721\\
0.0101443197248155	14.8449112495739\\
0.0145968609823654	17.2523544510901\\
0.0190494022399153	19.0624178068732\\
0.0235019434974651	20.0247194975486\\
0.027954484755015	19.9992515838893\\
0.0324070260125649	18.9897781635076\\
0.0368595672701147	17.1429233895012\\
0.0413121085276646	14.71325366379\\
0.0457646497852145	12.0058083369505\\
0.0502171910427643	9.3139292488499\\
0.0546697323003142	6.86962939949062\\
0.0591222735578641	4.81717615234356\\
0.0635748148154139	3.21151971274494\\
0.0680273560729638	2.03557684373364\\
0.0724798973305137	1.22665732540066\\
0.0769324385880635	0.702777513177381\\
0.0813849798456134	0.382799447533185\\
0.0858375211031633	0.198236487063651\\
0.0902900623607131	0.0976011037236125\\
0.094742603618263	0.0456861656491742\\
};
\end{axis}
\end{tikzpicture}
\centerline{(b) $\mu_{\rm dif}=3$}\medskip{}
\end{minipage}
\caption{Gaussian approximation of $g({\bf x})$, $n=256,\ p=512,\ c_{1}=c_{2}=1/2,\gamma=1$, polynomial kernel with $\ftau=4,\fftau=0$, and $\ffftau=2$. $\bx\in\mathcal{N}(\bmu_a,\bC_a)$, with $\bmu_a=\left[\mathbf{0}_{a-1};\mu_{\rm dif};\mathbf{0}_{p-a}\right]$, ${\bf C}_{1}={\bf I}_{p}$ and $\{{\bf C}_{2}\}_{i,j}=.4^{\mid i-j\mid}(1+5/\sqrt{p})$.\label{fig:Gaussian-approximation-Gauss-dif-mean-2}}
\end{figure}

\medskip
More traceable special cases and discussions on the choice of kernel function $f$ will be given in the next section.

\section{Special cases and further discussions}
\label{sec:special}
\subsection{More discussions on the kernel function \texorpdfstring{$f$}{f}}
Following the discussion at the end of Section~\ref{sec:main}, if $\fftau=0$, the information about the statistical means of the two different classes is lost and will not help perform the classification. Nonetheless, we find that, rather surprisingly, if one further assumes $\tr\bC_{1}=\tr\bC_{2}+o(\sqrt{p})$ (which is beyond the minimum ``distance'' rate in Assumption~\ref{as:Growth rate}), using a kernel $f$ that satisfies $\fftau=0$ results in ${\rm Var}_{a}=0$ while ${\rm E}_{a}$ may remain non-zero, thereby ensuring a vanishing misclassification rate (as long as $\ffftau \neq 0$). Intuitively speaking, the kernels with $\fftau=0$ play an important role in extracting the information of ``shape'' of both classes, making the classification extremely accurate even in cases that are deemed impossible to classify according to Remark~\ref{rem:optimal-growth-rate}. This phenomenon was also remarked in \cite{couillet2016kernel} and deeply investigated in \cite{couillet2016random}. Figure \ref{fig:orth} substantiates this finding for $\bmu_1=\bmu_2$, ${\bf C}_{1}={\bf I}_{p}$ and $\{{\bf C}_{2}\}_{i,j}=.4^{\mid i-j\mid}$, for which $\tr\bC_1=\tr\bC_2=p$. We observe a rapid drop of the classification error as $\fftau$ gets close to $0$.

\begin{figure}[htb]
\noindent\begin{minipage}[b]{1\linewidth}%
\centering{}
\begin{tikzpicture}[font=\footnotesize]
\pgfplotsset{every major grid/.append style={densely dashed}}
/xlabel near ticks
/ylabel near ticks
\begin{axis}[%
width=7.5cm,
height=4.5cm,
scale only axis,
xmin=-3,
xmax=3,
xlabel={$f^{\prime}(\tau)$},
ymin=0,
ymax=0.65,
ylabel={Classification error\footnotemark},
grid=major,
axis background/.style={fill=white},
legend style={at={(0.98,0.98)},anchor=north east,legend cell align=left,align=left,draw=white!15!black}
]
\addplot [color=blue,smooth,solid,line width=1.0pt,mark size=2.0pt,mark=square,mark options={solid}]
  table[row sep=crcr]{%
-3	0.454736328125\\
-2.5	0.440234375\\
-2	0.424755859375\\
-1.5	0.39921875\\
-1	0.36103515625\\
-0.5	0.2416015625\\
0	0.084765625\\
0.5	0.25517578125\\
1	0.358447265625\\
1.5	0.401220703125\\
2	0.428662109375\\
2.5	0.440283203125\\
3	0.450439453125\\
};
\addlegendentry{Empirical error for $p=512$};
\addplot [color=blue,smooth,dashed,line width=1.0pt,mark size=2.0pt,mark=o,mark options={solid}]
  table[row sep=crcr]{%
-3	0.4483154296875\\
-2.5	0.4451904296875\\
-2	0.4262939453125\\
-1.5	0.40947265625\\
-1	0.3548583984375\\
-0.5	0.2361328125\\
0	0.03251953125\\
0.5	0.2376220703125\\
1	0.358251953125\\
1.5	0.404638671875\\
2	0.4264892578125\\
2.5	0.439794921875\\
3	0.452197265625\\
};
\addlegendentry{Empirical error for $p=1024$};
\addplot [color=red,smooth,solid,line width=1.0pt,mark size=2.0pt,mark=triangle,mark options={solid,rotate=270}]
  table[row sep=crcr]{%
-3	0.45168327502439\\
-2.5	0.442082772474596\\
-2	0.427747670868385\\
-1.5	0.404076598017862\\
-1	0.357858640171627\\
-0.5	0.233275225532111\\
0	0\\
0.5	0.233275225532111\\
1	0.357858640171627\\
1.5	0.404076598017862\\
2	0.427747670868385\\
2.5	0.442082772474596\\
3	0.45168327502439\\
};
\addlegendentry{Theoretical error};

\end{axis}
\end{tikzpicture}%
\caption{Performance of LS-SVM, $c_0=1/4,\ c_{1}=c_{2}=1/2,\gamma=1$, polynomial kernel with $\ftau=4,\ \ffftau=2$. $\bx\in\mathcal{N}(\bmu_a,\bC_a)$, with $\bmu_1=\bmu_2=\mathbf{0}_p$, ${\bf C}_{1}={\bf I}_{p}$ and $\{{\bf C}_{2}\}_{i,j}=.4^{\mid i-j\mid}$.}\label{fig:orth}
\end{minipage}
\end{figure}

\footnotetext{Unless particularly stated, the classification error will be understood as $c_1\rm{P}(g({\bx})>\xi_{n}|\bx\in\mathcal{C}_{1})+c_2{\rm P}(g({\bx})<\xi_{n}|{\bx}\in\mathcal{C}_{2})$.}

\begin{remark}[Condition on Kernel Function $f$]\label{rem:kernel-function}
\normalfont From Theorem~\ref{thm:Gaussian Approximation} and Corollary~\ref{cor:Gaussian Approximation normal}, one observes that $|{\rm E}_1-{\rm E}_2|$ is always proportional to the ``informative'' term $\mathfrak{D}$ and should, for fixed ${\rm Var}_a$, be made as large as possible to avoid the overlap of $g(\bx)$ for $\bx$ from different classes. Since ${\rm Var}_a$ does not depend on the signs of $\fftau$ and $\ffftau$, it is easily deduced that, to achieve optimal classification performance, one needs to choose the kernel function $f$ such that $\ftau>0,\fftau<0$ and $\ffftau>0$.
\end{remark}

\medskip

Incidentally, the condition in Remark~\ref{rem:kernel-function} is naturally satisfied for Gaussian kernel $f(x)=\exp\left(-x/(2\sigma^2)\right)$ for any $\sigma$, meaning that, even without specific tuning of the kernel parameter $\sigma$ through cross validation or other techniques, LS-SVM is expected to perform rather well with a Gaussian kernel (as shown in Figure~\ref{fig:gauss_sigma_loop}), which is not always the case for polynomial kernels. This especially entails, for a second-order polynomial kernel given by $f(x)=a_2 x^2+a_1 x+a_0$, that attention should be paid to meeting the aforementioned condition when tuning the kernel parameters $a_2,a_1$ and $a_0$. Figure~\ref{fig:gauss_poly_loop} attests of this remark with Gaussian input data. A rapid increase in classification error rate can be observed both in theory and in practice as soon as the condition $\fftau<0,\ffftau>0$ is no longer satisfied.

\begin{figure}[htb]
\begin{minipage}[b]{1\linewidth}%
\centering{}
\begin{tikzpicture}[font=\footnotesize]
\pgfplotsset{every major grid/.append style={densely dashed}}
/xlabel near ticks
/ylabel near ticks
\begin{axis}[%
width=7.5cm,
height=4.5cm,
scale only axis,
xmode=log,
log basis x={2},
xmin=0.03125,
xmax=256,
xminorticks=true,
xlabel={$\sigma^2$},
ymin=0,
ymax=0.27,
ylabel={Classification error},
ytick = {0,0.05,0.1,0.15,0.2,0.25},
yticklabels={$0$,$0.05$,$0.1$,$0.15$,$0.2$,$0.25$},
grid=major,
axis background/.style={fill=white},
legend style={at={(0.98,0.98)},anchor=north east,legend cell align=left,align=left,draw=white!15!black}
]
\addplot [color=blue,smooth,solid,line width=1.0pt,mark size=2.0pt,mark=square,mark options={solid}]
  table[row sep=crcr]{%
0.03125	0.205729166666667\\
0.0625	0.129947916666667\\
0.125	0.0779947916666667\\
0.25	0.0548177083333333\\
0.5	0.0615885416666667\\
1	0.0861979166666667\\
2	0.119140625\\
4	0.143880208333333\\
8	0.157291666666667\\
16	0.163411458333333\\
32	0.171875\\
64	0.166015625\\
128	0.182682291666667\\
256	0.171614583333333\\
};
\addlegendentry{Empirical error for $n=256$};

\addplot [color=blue,smooth,dashed,line width=1.0pt,mark size=2.0pt,mark=o,mark options={solid}]
  table[row sep=crcr]{%
0.03125	0.133984375\\
0.0625	0.0997395833333333\\
0.125	0.0671223958333333\\
0.25	0.0548177083333333\\
0.5	0.0577473958333333\\
1	0.0835286458333333\\
2	0.1115234375\\
4	0.135872395833333\\
8	0.149544270833333\\
16	0.156770833333333\\
32	0.1630859375\\
64	0.1654296875\\
128	0.1646484375\\
256	0.1638671875\\
};
\addlegendentry{Empirical error for $n=512$};

\addplot [color=red,smooth,solid,line width=1.0pt,mark size=2.0pt,mark=triangle,mark options={solid,rotate=270}]
  table[row sep=crcr]{%
0.03125	0.0857839239487618\\
0.0625	0.0766797224319311\\
0.125	0.064139958174741\\
0.25	0.0539342227778432\\
0.5	0.0583112336182385\\
1	0.0823559359632419\\
2	0.112311869953727\\
4	0.134937465756942\\
8	0.148550890788372\\
16	0.155963756172566\\
32	0.159823691729607\\
64	0.161792136735953\\
128	0.162785981551089\\
256	0.163285309287987\\
};
\addlegendentry{Theoretical error};

\end{axis}
\end{tikzpicture}
\caption{Performance of LS-SVM, $c_0=2, c_{1}=c_{2}=1/2,\gamma=1$, Gaussian kernel. $\bx\in\mathcal{N}(\bmu_a,\bC_a)$, with $\bmu_a=\left[\mathbf{0}_{a-1};2;\mathbf{0}_{p-a}\right]$, ${\bf C}_{1}={\bf I}_{p}$ and $\{{\bf C}_{2}\}_{i,j}=.4^{\mid i-j\mid}(1+4/\sqrt{p})$. \label{fig:gauss_sigma_loop}}
\end{minipage}
\end{figure}

Note also from both Figure~\ref{fig:orth} and Figure~\ref{fig:gauss_sigma_loop} that, when $n,p$ are doubled (from $2048,512$ to $4\,096,1\,024$ in Figure~\ref{fig:orth} and from $256,512$ to $512,1\,024$ in Figure~\ref{fig:gauss_sigma_loop}), the empirical error becomes closer to the theoretical one, which confirms the asymptotic result as $n,p\to\infty$. 

\begin{figure}[htb]
\begin{minipage}[b]{0.45\linewidth}%
\centering{}
\begin{tikzpicture}[font=\footnotesize]
\pgfplotsset{every major grid/.append style={densely dashed}}
/xlabel near ticks
/ylabel near ticks
\begin{axis}[
width=3.5cm,
height=3cm,
scale only axis,
xmin=-5,
xmax=1,
xlabel={$f^{\prime}(\tau)$},
ymin=0,
ymax=0.6,
ylabel={Classification error},
grid=major,
axis background/.style={fill=white},
legend style={at={(0.03,0.97)},anchor=north west,legend cell align=left,align=left,draw=white!15!black, font=\footnotesize}
]
\addplot [color=blue,smooth,solid,line width=1.0pt,mark size=2.0pt,mark=square,mark options={solid}]
  table[row sep=crcr]{%
-5	0.122135416666667\\
-4	0.118359375\\
-3	0.101953125\\
-2	0.0899739583333333\\
-1	0.0618489583333333\\
-0.5	0.055859375\\
-0.4	0.06640625\\
-0.3	0.0641927083333333\\
-0.2	0.071484375\\
-0.1	0.08125\\
0	0.106119791666667\\
0.1	0.127864583333333\\
0.2	0.158854166666667\\
0.3	0.204166666666667\\
0.4	0.249609375\\
0.5	0.28046875\\
0.6	0.330338541666667\\
0.7	0.3828125\\
0.8	0.409895833333333\\
0.9	0.44765625\\
1	0.489713541666667\\
};
\addlegendentry{Empirical error};

\addplot [color=red,smooth,solid,line width=1.0pt,mark size=2.0pt,mark=triangle,mark options={solid,rotate=270}]
  table[row sep=crcr]{%
-5	0.124912390292323\\
-4	0.116474563145825\\
-3	0.104323935597384\\
-2	0.0861329139611201\\
-1	0.0617814858867468\\
-0.5	0.0575203019489714\\
-0.4	0.0599461354596914\\
-0.3	0.0645678498414669\\
-0.2	0.0722110911994839\\
-0.1	0.0839481020158166\\
0	0.101041257488825\\
0.1	0.124683588648435\\
0.2	0.155493671609363\\
0.3	0.192991462328379\\
0.4	0.235486934658724\\
0.5	0.280550413595428\\
0.6	0.325736940898843\\
0.7	0.369121107485369\\
0.8	0.409479123547842\\
0.9	0.446214593543344\\
1	0.479183805076305\\
};
\addlegendentry{Theoretical error};
\end{axis}
\end{tikzpicture}
\centerline{(a) $f(\tau)=4,\ \ffftau=1$}\medskip{}
\end{minipage}\hfill{}%
\begin{minipage}[b]{0.48\linewidth}%
\centering{}
\begin{tikzpicture}[font=\footnotesize]
\pgfplotsset{every major grid/.append style={densely dashed}}
/xlabel near ticks
/ylabel near ticks
\begin{axis}[%
width=3.5cm,
height=3cm,
scale only axis,
xmin=-1,
xmax=5,
xlabel={$f^{\prime\prime}(\tau)$},
ymin=0,
ymax=0.6,
grid=major,
axis background/.style={fill=white},
legend style={at={(0.97,0.97)},anchor=north east,legend cell align=left,align=left,draw=white!15!black, font=\footnotesize}
]
\addplot [color=blue,smooth,solid,line width=1.0pt,mark size=2.0pt,mark=square,mark options={solid}]
  table[row sep=crcr]{%
-1	0.522005208333333\\
-0.9	0.486979166666667\\
-0.8	0.450520833333333\\
-0.7	0.41328125\\
-0.6	0.386197916666667\\
-0.5	0.341536458333333\\
-0.4	0.298177083333333\\
-0.3	0.269401041666667\\
-0.2	0.231119791666667\\
-0.1	0.197916666666667\\
0	0.164583333333333\\
0.1	0.142447916666667\\
0.2	0.125130208333333\\
0.3	0.114453125\\
0.4	0.0932291666666667\\
0.5	0.0854166666666667\\
1	0.06015625\\
2	0.058984375\\
3	0.06484375\\
4	0.06796875\\
5	0.0684895833333333\\
};
\addlegendentry{Empirical error};

\addplot [color=red,smooth,solid,line width=1.0pt,mark size=2.0pt,mark=triangle,mark options={solid,rotate=270}]
  table[row sep=crcr]{%
-1	0.520816194923695\\
-0.9	0.488430148831111\\
-0.8	0.453638024056461\\
-0.7	0.416785778746669\\
-0.6	0.378434652227475\\
-0.5	0.339369250687701\\
-0.4	0.300564915617891\\
-0.3	0.263103739994676\\
-0.2	0.228044575189662\\
-0.1	0.196274843332875\\
0	0.168388301451422\\
0.1	0.144628670317083\\
0.2	0.124912390292323\\
0.3	0.108911047964176\\
0.4	0.0961561582385449\\
0.5	0.0861329139611201\\
1	0.0617814858867468\\
2	0.0575203019489714\\
3	0.0627372684437821\\
4	0.0679505719090806\\
5	0.0722110911994839\\
};
\addlegendentry{Theoretical error};
\end{axis}
\end{tikzpicture}
\centerline{(b) $f(\tau)=4,\ f^{\prime}(\tau)=-1$}\medskip{}
\end{minipage}
\caption{Performance of LS-SVM, $n=256,\ p=512,\ c_{1}=c_{2}=1/2,\gamma=1$, polynomial kernel. $\bx\in\mathcal{N}(\bmu_a,\bC_a)$,  with $\bmu_a=\left[\mathbf{0}_{a-1};2;\mathbf{0}_{p-a}\right]$, ${\bf C}_{1}={\bf I}_{p}$ and $\{{\bf C}_{2}\}_{i,j}=.4^{\mid i-j\mid}(1+4/\sqrt{p})$.}\label{fig:gauss_poly_loop}
\end{figure}

Clearly, for practical use, one needs to know in advance the value of $\tau$ before training so that the kernel $f$ can be properly chosen during the training step. The estimation of $\tau$ is possible, in the large $n,p$ regime, with the following lemma.

\begin{lemma}\label{lem:estim-tau}
Under Assumptions~\ref{as:Growth rate} and~\ref{as:Kernel function}, as $n\to\infty$,
\begin{equation}
\frac{2}{n}\sum_{i=1}^n \frac{\|\bx_i-\bar{\bx}\|^2}{p}\asc\tau
\end{equation}
with $\bar{\bx}\triangleq\frac{1}{n}\sum_{i=1}^n{\bx_i}$.
\end{lemma}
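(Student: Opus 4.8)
The plan is to reduce the statement to the uniform concentration already recorded in Equation \ref{eq:concertration}. The first step is the elementary identity
\[
\sum_{i=1}^{n}\|\bx_i-\bar{\bx}\|^2=\frac{1}{2n}\sum_{i,j=1}^{n}\|\bx_i-\bx_j\|^2,
\]
which follows by expanding both sides and using $\sum_{i=1}^{n}(\bx_i-\bar{\bx})=\mathbf{0}_p$. Dividing by $np/2$ and noting that the diagonal terms $i=j$ vanish, this gives
\[
\frac{2}{n}\sum_{i=1}^{n}\frac{\|\bx_i-\bar{\bx}\|^2}{p}=\frac{1}{n^{2}}\sum_{i\neq j}\frac{\|\bx_i-\bx_j\|^2}{p}.
\]

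The second step is to invoke the uniform version of Equation \ref{eq:concertration}: under Assumption \ref{as:Growth rate} one has $\max_{i\neq j}\big|\,\|\bx_i-\bx_j\|^2/p-\tau\,\big|\asc 0$. Consequently,
\[
\left|\frac{1}{n^{2}}\sum_{i\neq j}\frac{\|\bx_i-\bx_j\|^2}{p}-\frac{n(n-1)}{n^{2}}\tau\right|\le\frac{n(n-1)}{n^{2}}\max_{i\neq j}\left|\frac{\|\bx_i-\bx_j\|^2}{p}-\tau\right|\asc 0,
\]
and since $n(n-1)/n^{2}\to 1$, combining the two displays yields $\frac{2}{n}\sum_{i=1}^{n}\|\bx_i-\bar{\bx}\|^2/p\asc\tau$.

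If instead one prefers a self-contained argument, the alternative is to write $\bx_i=\bmu_{a_i}+\sqrt{p}\,\bomega_i$ and $\bar{\bx}=c_1\bmu_1+c_2\bmu_2+\frac{\sqrt{p}}{n}\bOmega\b1n$, and to expand $\frac{2}{n}\sum_i\|\bx_i-\bar{\bx}\|^2/p$ into three pieces: a deterministic mean-part of order $O(1/p)$; a cross term which is a centered Gaussian of variance $O(1/(np^2))$, using $\frac1n\sum_i(\bmu_{a_i}-c_1\bmu_1-c_2\bmu_2)=\mathbf{0}_p$ together with $\|\bmu_2-\bmu_1\|=O(1)$; and $\frac{2}{n}\sum_i\|\bomega_i\|^2-2\|\bar{\bomega}\|^2$ with $\bar{\bomega}=\frac1n\bOmega\b1n$, whose expectation equals $\frac{2}{np}(n_1\tr\bC_1+n_2\tr\bC_2)-\frac{2}{np}\tr\bC^\circ=\frac{2}{p}\tr\bC^\circ-O(1/n)\to\tau$, and which concentrates about its mean by the Hanson--Wright inequality for quadratic forms in Gaussians and Borel--Cantelli.

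There is no genuine obstacle here: the only points requiring a little care are the bookkeeping of the $i=j$ terms (which are zero) and of the factor $n(n-1)/n^{2}$, both handled above, so that the bulk of the argument simply amounts to quoting the already-established uniform concentration of Equation \ref{eq:concertration}.
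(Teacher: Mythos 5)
Your proposal is correct. Your primary argument, however, takes a genuinely different route from the paper's. The identity $\sum_{i=1}^n\|\bx_i-\bar{\bx}\|^2=\frac{1}{2n}\sum_{i,j}\|\bx_i-\bx_j\|^2$ is easily verified by expansion, the diagonal terms indeed vanish, and the triangle-inequality bound together with $n(n-1)/n^2\to1$ correctly transfers the uniform concentration of Equation \ref{eq:concertration} to the average, so the reduction is clean. What this buys you is economy: the whole lemma becomes a two-line corollary of a fact the paper has already asserted, with no new probabilistic estimates. What it costs you is that the load-bearing step --- the \emph{uniformity} of the convergence $\max_{i\neq j}\big|\|\bx_i-\bx_j\|^2/p-\tau\big|\asc0$ --- is only stated in the paper with a citation to external work, not proved there; the paper's own proof of the lemma deliberately avoids leaning on it and is instead self-contained, decomposing $\frac{2}{n}\sum_i\|\bx_i-\bar{\bx}\|^2/p$ into a deterministic mean term of order $O(n^{-1})$, a centered Gaussian cross term handled by Chebyshev's inequality and Borel--Cantelli, and the noise term $\frac{2}{n}\sum_i\|\bomega_i\|^2-2\|\bar{\bomega}\|^2$ handled by the strong law of large numbers and the fact that $\bar{\bomega}\sim\mathcal{N}(\mathbf{0},\bC^\circ/np)$. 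Your ``alternative self-contained argument'' is essentially this same decomposition (with Hanson--Wright in place of the paper's more elementary SLLN/Chebyshev treatment), and your expectation computation $\frac{2}{p}\tr\bC^\circ-O(1/n)\to\tau$ is right. Either version stands; if you want the proof to be independent of the unproved uniformity remark, use the second.
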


\begin{IEEEproof}
Since
\begin{equation*}
\frac{2}{n}\sum_{i=1}^n \frac{\|\bx_i-\bar{\bx}\|^2}{p}=\frac{2c_1c_2\|\bmu_2-\bmu_1\|^2}{p}+\frac{2}{n}\sum_{i=1}^n\|\bomega_i-\bar{\bomega}\|^2+\kappa
\end{equation*}
with $\kappa=\frac{4}{n\sqrt{p}}(\bmu_2-\bmu_1)^{\T}\left(-c_2\sum_{\bx_i\in\mathcal{C}_1}\bomega_i+c_1\sum_{\bx_j\in\mathcal{C}_2}\bomega_j\right)$ and $\bar{\bomega}=\frac{1}{n}\sum_{i=1}^n\bomega_i$.

According to Assumption~\ref{as:Growth rate} we have $\frac{2c_1c_2}{p}\|\bmu_2-\bmu_1\|^2=O(n^{-1})$. The term $\kappa$ is a linear combination of independent zero-mean Gaussian variables and thus $\kappa\sim\mathcal{N}(0,{\rm Var}[\kappa])$ with ${\rm Var}[\kappa]=\frac{16c_1c_2}{np^2}(\bmu_2-\bmu_1)^{\T}\left(c_2\bC_1+c_1\bC_2\right)(\bmu_2-\bmu_1)=O(n^{-3})$. We thus deduce from Chebyshev's inequality and Borel-Cantelli lemma that $\kappa\asc0$.

We then work on the last term $\frac{2}{n}\sum_{i=1}^n\|\bomega_i-\bar{\bomega}\|^2$ as
\[
\frac{2}{n}\sum_{i=1}^n\|\bomega_i-\bar{\bomega}\|^2=\frac{2}{n}\sum_{i=1}^n\|\bomega_i\|^2-2\|\bar{\bomega}\|^2.
\]
Since $\bar{\bomega}\sim\mathcal{N}(\mathbf{0},\bC^\circ/np)$, we deduce that $\|\bar{\bomega}\|^2\asc0$. Ultimately by the strong law of large numbers, the term $\frac{2}{n}\sum_{i=1}^n\|\bomega_i\|^2\asc\tau$, which concludes the proof.
\end{IEEEproof}

\subsection{Some limiting cases}
\label{subsec:limiting-cases}

\subsubsection{Dominant information in means}
When $\|\bmu_2-\bmu_1\|^2$ is largely dominant over $(\tr({\bf C}_{2}-{\bf C}_{1}))^{2}/p$ and $\tr((\mathbf{C}_{2}-\mathbf{C}_{1})^{2})/p$, from Theorem~\ref{thm:Gaussian Approximation}, both $\rm{E}_a-(c_2-c_1)$ and $\sqrt{\rm{Var}_a}$ are (approximately) proportional to $\fftau$, which eventually makes the choice of the kernel irrelevant (as long as $\fftau\neq 0$). This result also holds true for $\rm{E}_a^*$ and $\sqrt{\rm{Var}_a^*}$ when normalized labels $\by^*$ are applied, as a result of Lemma~\ref{lem:norml-y}.
\subsubsection{\texorpdfstring{$c_0$}{c0} large or small}
Note that, different from both $\mathcal{V}_1$ and $\mathcal{V}_2$, $\mathcal{V}_3$ is a function of $c_0$ as it can be rewritten as 
\begin{equation*}
\mathcal{V}_{3}^{a} =\frac{2c_0\left(\fftau\right)^{2}}{p^{3}}\left(\frac{\tr\bC_{1}\bC_{a}}{c_{1}}+\frac{\tr\bC_{2}\bC_{a}}{c_{2}}\right)
\end{equation*}
which indicates that the variance of $g(\bx)$ grows as $c_0$ becomes large. This result is easily understood since, with $p$ fixed, a small $c_0$ means a larger $n$, and with more training samples, one may ``gain'' more information of the two different classes, which reduces the ``uncertainty'' of the classifier. When $n\to\infty$ with a fixed $p$, we have $c_0 \to 0$ and the LS-SVM is considered ``well-trained'' and its performance can be described with Theorem~\ref{thm:Gaussian Approximation} by taking $\mathcal{V}_3=0$. However, it is worthy noting that the misclassification rate may not be $0$ even in this case, since $\mathcal{V}_1$ and $\mathcal{V}_2$ may differ from $0$, which indicated the theoretical limitation of LS-SVM in separating high dimensional Gaussian vectors. On the contrary, when $c_0\to\infty$, with few training data, LS-SVM does not sample sufficiently the high dimensional space of the $\bx$'s, thus resulting in a classifier with arbitrarily large variance (for fixed means). Moreover, since the term $\mathcal{V}_3^a$ is proportional to $n^{-1}$, we see that for $\fftau$ away from zero and fixed large $p$, as $n$ grows large, the two Gaussians $G_1$ and $G_2$ in Theorem~\ref{thm:Gaussian Approximation} separate from each other at a rate of $n^{-\frac12}$, the overlapping section of the Gaussian tails then provides the misclassification rate via Corollary~\ref{cor:Asymptotic Error Rate}. Figure~\ref{fig:gauss-loop-c0} confirms this result with $p$ fixed to $256$ while $n$ varies from $8$ to $8\,192$.

\begin{figure}[htb]
\noindent\begin{minipage}[b]{1\linewidth}%
\centering{}
\begin{tikzpicture}[font=\footnotesize]
\pgfplotsset{every major grid/.append style={densely dashed}}
/xlabel near ticks
/ylabel near ticks
\begin{axis}[%
width=7.5cm,
height=4.5cm,
scale only axis,
xmode=log,
log basis x={2},
xmin=0.03125,
xmax=32,
xlabel={$c_0=\frac{p}{n}$},
ymin=0,
ymax=0.35,
ytick = {0,0.05,0.1,0.15,0.20,0.25,0.30,0.35},
yticklabels={$0$,$0.05$,$0.1$,$0.15$,$0.20$,$0.25$,$0.30$,$0.35$},
ylabel={Classification error},
axis background/.style={fill=white},
grid=major,
legend style={at={(0.02,0.75)},anchor=south west,legend cell align=left,align=left,draw=white!15!black}
]
\addplot [color=blue,smooth,solid,line width=1.0pt,mark size=2.0pt,mark=square,mark options={solid}]
  table[row sep=crcr]{%
0.03125	0.0245068791730966\\
0.0625	0.0254791516344504\\
0.125	0.0274429523823023\\
0.25	0.0314310235624709\\
0.5	0.0395396680343861\\
1	0.0556944130595506\\
2	0.0906\\
4    	0.1425\\
8    	0.2151\\
16    	0.2781\\
32    	0.3417\\
};
\addlegendentry{Empirical error};
\addplot [color=red,smooth,solid,line width=1.0pt,mark size=2.0pt,mark=triangle,mark options={solid,rotate=270}]
  table[row sep=crcr]{%
0.03125	0.0281534830729167\\
0.0625	0.0292643229166667\\
0.125	0.0312337239583333\\
0.25	0.0350260416666667\\
0.5	0.044140625\\
1	0.0640625\\
2	0.0856\\
4    	0.1334\\
8        	0.1957\\
16        0.2618\\
32         0.3214\\
};
\addlegendentry{Theoretical error};

\end{axis}
\end{tikzpicture}%
\caption{Performance of LS-SVM, $p=256,\ c_{1}=c_{2}=1/2,\gamma=1$, Gaussian kernel with $\sigma^2=1$. $\bx\in\mathcal{N}(\bmu_a,\bC_a)$, with $\bmu_a=\left[\mathbf{0}_{a-1};2;\mathbf{0}_{p-a}\right]$, ${\bf C}_{1}={\bf I}_{p}$ and $\{{\bf C}_{2}\}_{i,j}=.4^{\mid i-j\mid}(1+4/\sqrt{p})$. \label{fig:gauss-loop-c0}}
\end{minipage}
\end{figure}

\subsubsection{\texorpdfstring{$c_1\to0$}{c1to0} }
As revealed in Remark~\ref{rem:Dominant Bias}, the ratio $c_1/c_2$ plays a significant role in the performance of classification. A natural question arises: what happens when one class is strongly dominant over the other? Take the case of $c_1\to 0, c_2\to 1$. From Corollary~\ref{cor:Gaussian Approximation normal}, one has $\rm{E}_1^*\to -\gamma\mathfrak{D}$, $\rm{E}_2^*\to 0$ and $\mathcal{V}_3^a\to\infty$ because of $c_1\to 0$ in the denominator, which then makes the ratio $\frac{\rm{E}_a^*}{\sqrt{\rm{Var}_a^*}}$ (and thus  $\frac{\rm{E}_a-(c_2-c_1)}{\sqrt{\rm{Var}_a}}$) go to zero, resulting in a poorly-performing LS-SVM. The same occurs when $c_1\to 1$ and $ c_2\to 0$. Figure~\ref{fig:gauss-loop-c1} collaborates this remark with $c_1=1/32$ in subfigure (a) and $1/2$ in (b). Note that in subfigure (a), even with a smartly chosen threshold $\xi$, LS-SVM is impossible to perform as well as in the case $c_1=c_2$, as a result of the significant overlap between the two histograms.

\begin{figure}[htb]
\begin{minipage}[b]{0.48\linewidth}%
\centering{}
\definecolor{mycolor1}{rgb}{0.00000,1.00000,1.00000}%
\definecolor{mycolor2}{rgb}{1.00000,0.00000,1.00000}%
\begin{tikzpicture}[font=\footnotesize]
\begin{axis}[%
width=4.5cm,
height=3cm,
scale only axis,
xmin=0.9355,
xmax=0.939,
xtick = {0.936,0.937,0.938},
xticklabels={$0.936$,$0.937$,$0.938$},
every outer y axis line/.append style={white},
every y tick label/.append style={font=\color{white}},
ymin=0,
ymax=1200,
ytick={\empty},
axis background/.style={fill=white},
axis x line*=bottom,
axis y line*=left,
]
\addplot[ybar,bar width=2,draw=white,fill=blue,area legend] plot table[row sep=crcr] {%
0.935575154991359	28.0202674157689\\
0.935664376124152	0\\
0.935753597256945	0\\
0.935842818389738	0\\
0.935932039522531	28.0202674157689\\
0.936021260655324	112.081069663076\\
0.936110481788117	0\\
0.93619970292091	56.0405348315378\\
0.936288924053704	252.18240674192\\
0.936378145186497	420.304011236533\\
0.93646736631929	364.263476404996\\
0.936556587452083	532.385080899609\\
0.936645808584876	728.526952809991\\
0.936735029717669	868.628289888836\\
0.936824250850462	840.608022473067\\
0.936913471983255	1008.72962696768\\
0.937002693116048	1176.85123146229\\
0.937091914248841	1008.72962696768\\
0.937181135381634	644.466150562685\\
0.937270356514427	868.628289888836\\
0.93735957764722	588.425615731147\\
0.937448798780013	560.405348315378\\
0.937538019912806	532.385080899609\\
0.937627241045599	168.121604494613\\
0.937716462178392	168.121604494613\\
0.937805683311185	112.081069663076\\
0.937894904443978	84.0608022473067\\
0.937984125576771	0\\
0.938073346709564	28.0202674157689\\
0.938162567842358	28.0202674157689\\
};
\addplot[ybar,bar width=2,draw=white,fill=red,area legend] plot table[row sep=crcr] {%
0.935795544961955	0.745263878128964\\
0.935903755172669	0\\
0.936011965383384	0.745263878128964\\
0.936120175594098	3.72631939064482\\
0.936228385804813	6.70737490316068\\
0.936336596015528	16.3958053188372\\
0.936444806226242	41.734777175222\\
0.936553016436957	63.347429640962\\
0.936661226647671	119.242220500634\\
0.936769436858386	216.871788535529\\
0.936877647069101	327.170842498615\\
0.936985857279815	479.949937515053\\
0.93709406749053	664.775379291036\\
0.937202277701244	847.365029432632\\
0.937310487911959	1010.57781874288\\
0.937418698122674	1079.14209553074\\
0.937526908333388	1050.82206816184\\
0.937635118544103	959.899875030106\\
0.937743328754817	735.575447713288\\
0.937851538965532	612.606907822009\\
0.937959749176246	442.686743608605\\
0.938067959386961	266.05920449204\\
0.938176169597676	149.798039503922\\
0.93828437980839	83.469554350444\\
0.938392590019105	37.2631939064482\\
0.938500800229819	14.1600136844503\\
0.938609010440534	6.70737490316068\\
0.938717220651249	2.98105551251586\\
0.938825430861963	0\\
0.938933641072678	0.745263878128964\\
};
\addplot [color=mycolor1,dashed,line width=2.0pt,forget plot]
  table[row sep=crcr]{%
0.935575154991359	0.0241108407600384\\
0.935664376124152	0.0856808239528971\\
0.935753597256945	0.281436289449637\\
0.935842818389738	0.854479682729518\\
0.935932039522531	2.39799662685222\\
0.936021260655324	6.22043243800178\\
0.936110481788117	14.9148115211948\\
0.93619970292091	33.0552270176687\\
0.936288924053704	67.7154394626382\\
0.936378145186497	128.221378303842\\
0.93646736631929	224.418349073708\\
0.936556587452083	363.062561914708\\
0.936645808584876	542.912392694759\\
0.936735029717669	750.417969094186\\
0.936824250850462	958.742273842001\\
0.936913471983255	1132.20667755376\\
0.937002693116048	1235.87551821697\\
0.937091914248841	1246.9496828878\\
0.937181135381634	1162.91590807034\\
0.937270356514427	1002.4734430476\\
0.93735957764722	798.771615566417\\
0.937448798780013	588.298246209735\\
0.937538019912806	400.495523493806\\
0.937627241045599	252.013009775686\\
0.937716462178392	146.579568412681\\
0.937805683311185	78.8041517510907\\
0.937894904443978	39.1606544586813\\
0.937984125576771	17.9877136879934\\
0.938073346709564	7.63707707879133\\
0.938162567842358	2.99711608135245\\
};
\addplot [color=mycolor2,dashed,line width=2.0pt,forget plot]
  table[row sep=crcr]{%
0.935795544961955	0.00401840454060586\\
0.935903755172669	0.0186279029197551\\
0.936011965383384	0.0781579781074563\\
0.936120175594098	0.29681217132132\\
0.936228385804813	1.02020896621651\\
0.936336596015528	3.17391729692232\\
0.936444806226242	8.93719257780433\\
0.936553016436957	22.7774795943791\\
0.936661226647671	52.5423213568464\\
0.936769436858386	109.701328350401\\
0.936877647069101	207.306804094124\\
0.936985857279815	354.580025209509\\
0.93709406749053	548.926271656499\\
0.937202277701244	769.153121217994\\
0.937310487911959	975.462631421154\\
0.937418698122674	1119.71502780451\\
0.937526908333388	1163.33131047985\\
0.937635118544103	1093.95225334272\\
0.937743328754817	931.091490028836\\
0.937851538965532	717.274493455943\\
0.937959749176246	500.123725415772\\
0.938067959386961	315.622944928665\\
0.938176169597676	180.284636198061\\
0.93828437980839	93.2068665493347\\
0.938392590019105	43.6150262463702\\
0.938500800229819	18.4724035782547\\
0.938609010440534	7.0812465967088\\
0.938717220651249	2.45694292927039\\
0.938825430861963	0.771577329708999\\
0.938933641072678	0.219312249248852\\
};
\end{axis}
\end{tikzpicture}
\centerline{(a) $c_1=1/32$}\medskip{}
\end{minipage}\hfill{}%
\begin{minipage}[b]{0.48\linewidth}%
\centering{}
\definecolor{mycolor1}{rgb}{0.00000,1.00000,1.00000}%
\definecolor{mycolor2}{rgb}{1.00000,0.00000,1.00000}%
\begin{tikzpicture}[font=\footnotesize]
\begin{axis}[%
width=4.5cm,
height=3cm,
scale only axis,
scaled x ticks = false,
xmin=-0.008,
xmax=0.008,
x tick label style={/pgf/number format/fixed},
every outer y axis line/.append style={white},
every y tick label/.append style={font=\color{white}},
ymin=0,
ymax=300,
ytick={\empty},
axis background/.style={fill=white},
axis x line*=bottom,
axis y line*=left,
]
\addplot[ybar,bar width=2,draw=white,fill=blue,area legend] plot table[row sep=crcr] {%
-0.00747959251665285	0.812813288529122\\
-0.00709512537713856	0.406406644264561\\
-0.00671065823762426	2.03203322132281\\
-0.00632619109810996	4.87687973117473\\
-0.00594172395859567	10.9729793951431\\
-0.00555725681908137	25.1972119444028\\
-0.00517278967956707	39.827851137927\\
-0.00478832254005278	62.1802165724779\\
-0.00440385540053848	93.0671215365845\\
-0.00401938826102418	132.082159385982\\
-0.00363492112150989	186.947056361698\\
-0.00325045398199559	215.801928104482\\
-0.00286598684248129	259.28743904079\\
-0.002481519702967	258.881032396525\\
-0.0020970525634527	274.730891522843\\
-0.0017125854239384	255.629779242409\\
-0.00132811828442411	233.683820452123\\
-0.000943651144909809	175.974076966555\\
-0.000559184005395513	134.114192607305\\
-0.000174716865881216	92.6607148923199\\
0.00020975027363308	54.8648969757158\\
0.000594217413147377	37.3894112723396\\
0.000978684552661674	22.3523654345509\\
0.00136315169217597	15.0370458377888\\
0.00174761883169027	7.3153195967621\\
0.00213208597120456	2.84484650985193\\
0.00251655311071886	1.21921993279368\\
0.00290102025023316	0\\
0.00328548738974745	0.406406644264561\\
0.00366995452926175	0.406406644264561\\
};
\addplot[ybar,bar width=2,draw=white,fill=red,area legend] plot table[row sep=crcr] {%
-0.00327095501014032	0.419540949519955\\
-0.00289852414211949	2.93678664663969\\
-0.00252609327409866	4.61495044471951\\
-0.00215366240607783	10.9080646875188\\
-0.001781231538057	13.0057694351186\\
-0.00140880067003617	30.2069483654368\\
-0.00103636980201534	41.9540949519955\\
-0.000663938933994514	71.7415023679123\\
-0.000291508065973685	99.0116640867094\\
8.09228020471455e-05	146.839332331984\\
0.000453353670067975	192.149754880139\\
0.000825784538088804	230.327981286455\\
0.00119821540610963	247.109619267254\\
0.00157064627413046	288.224632320209\\
0.00194307714215129	282.35105902693\\
0.00231550801017212	240.396964074934\\
0.00268793887819295	217.322211851337\\
0.00306036974621378	193.408377728699\\
0.00343280061423461	128.799071502626\\
0.00380523148225544	91.4599269953502\\
0.00417766235027627	68.3851747717527\\
0.0045500932182971	36.0805216587162\\
0.00492252408631793	23.0747522235975\\
0.00529495495433876	13.8448513341585\\
0.00566738582235959	5.87357329327937\\
0.00603981669038042	2.51724569711973\\
0.00641224755840125	0.419540949519955\\
0.00678467842642208	1.25862284855987\\
0.00715710929444291	0\\
0.00752954016246374	0.419540949519955\\
};
\addplot [color=mycolor1,dashed,line width=2.0pt,forget plot]
  table[row sep=crcr]{%
-0.00747959251665285	0.150433141356924\\
-0.00709512537713856	0.417690686852108\\
-0.00671065823762426	1.07671972737247\\
-0.00632619109810996	2.57683825334012\\
-0.00594172395859567	5.72543196747701\\
-0.00555725681908137	11.8104370392231\\
-0.00517278967956707	22.6183171662351\\
-0.00478832254005278	40.2152915265799\\
-0.00440385540053848	66.3832886411952\\
-0.00401938826102418	101.733249495208\\
-0.00363492112150989	144.745036587861\\
-0.00325045398199559	191.196991704679\\
-0.00286598684248129	234.474187661327\\
-0.002481519702967	266.959657048993\\
-0.0020970525634527	282.18429685734\\
-0.0017125854239384	276.921489835923\\
-0.00132811828442411	252.299905210893\\
-0.000943651144909809	213.409685581713\\
-0.000559184005395513	167.58987431361\\
-0.000174716865881216	122.185039976237\\
0.00020975027363308	82.7036936183554\\
0.000594217413147377	51.9718752524172\\
0.000978684552661674	30.3213462240816\\
0.00136315169217597	16.4234798590201\\
0.00174761883169027	8.25882941121021\\
0.00213208597120456	3.85574602823804\\
0.00251655311071886	1.67122511971159\\
0.00290102025023316	0.672508999385299\\
0.00328548738974745	0.251245248164094\\
0.00366995452926175	0.0871433444547605\\
};
\addplot [color=mycolor2,dashed,line width=2.0pt,forget plot]
  table[row sep=crcr]{%
-0.00327095501014032	0.655562240040019\\
-0.00289852414211949	1.48287016244432\\
-0.00252609327409866	3.15965241952699\\
-0.00215366240607783	6.34194514908504\\
-0.001781231538057	11.9909237238034\\
-0.00140880067003617	21.3564855796773\\
-0.00103636980201534	35.8305868092644\\
-0.000663938933994514	56.6271940621905\\
-0.000291508065973685	84.3030236665521\\
8.09228020471455e-05	118.224707543789\\
0.000453353670067975	156.178171075962\\
0.000825784538088804	194.347699689115\\
0.00119821540610963	227.816651489571\\
0.00157064627413046	251.558218995654\\
0.00194307714215129	261.660728391062\\
0.00231550801017212	256.380841292683\\
0.00268793887819295	236.635325776875\\
0.00306036974621378	205.740871741738\\
0.00343280061423461	168.503358238187\\
0.00380523148225544	130.000054187495\\
0.00417766235027627	94.4768693043375\\
0.0045500932182971	64.6776812022719\\
0.00492252408631793	41.7090575539975\\
0.00529495495433876	25.3368896385367\\
0.00566738582235959	14.4985045876047\\
0.00603981669038042	7.81520021370727\\
0.00641224755840125	3.96829540582133\\
0.00678467842642208	1.89808154927532\\
0.00715710929444291	0.855209917309999\\
0.00752954016246374	0.362975721316017\\
};
\end{axis}
\end{tikzpicture}
\centerline{(b) $c_1=1/2$}\medskip{}
\end{minipage}
\caption{Gaussian approximation of $g({\bf x})$, $n=256,\ p=512,\ c_2=1-c_1,\gamma=1$, Gaussian kernel with $\sigma^2=1$. $\bx\in\mathcal{N}(\bmu_a,\bC_a)$, with $\bmu_a=\left[\mathbf{0}_{a-1};2;\mathbf{0}_{p-a}\right]$, ${\bf C}_{1}={\bf I}_{p}$ and $\{{\bf C}_{2}\}_{i,j}=.4^{\mid i-j\mid}(1+4/\sqrt{p})$.\label{fig:gauss-loop-c1}}
\end{figure}

\subsection{Applying to real-world datasets}
When the classification performance of real-world datasets is concerned, our theory may be limited by: i) the fact that it is an asymptotic result and allows for an estimation error of order $O(n^{-\frac12})$ between theory and practice and ii) the strong Gaussian assumption for the input data. 

However, when applied to real-world datasets, here to the popular MNIST \cite{lecun1998gradient} and Fashion-MNIST \cite{xiao2017fashion} datasets, our asymptotic results, which are theoretically only applicable for Gaussian data, show an unexpectedly similar behavior. Here we consider a two-class classification problem with a training set of $n=256$ vectorized images of size $p=784$ randomly selected from the MNIST and Fashion-MNIST datasets (numbers $8$ and $9$ in both cases as an example). Then a test set of $n_{\rm test}=256$ is used to evaluate the classification performance. Means and covariances are empirically obtained from the full set of $11\,800$ MNIST images ($5\,851$ images of number $8$ and $5\,949$ of number $9$) and of $11\,800$ Fashion-MNIST images ($5\,851$ images of number $8$ and $5\,949$ of number $9$), respectively. Despite the obvious non-Gaussianity as well as the clearly different nature of the input data (from the two datasets), the distribution of $g(\bx)$ is still surprisingly close to its Gaussian approximation computed from Theorem~\ref{thm:Gaussian Approximation}, as shown in Figure~\ref{fig:Gaussian-approximation-MINIST} and~\ref{fig:Gaussian-approximation-Fashion-MINIST} for MNIST and Fashion-MNIST, respectively. In both cases we plot the results from (a) raw images as well as (b) when Gaussian white noise is artificially added to the image vectors.

\begin{figure}[htb]
\begin{minipage}[b]{0.48\linewidth}%
\centering{}
\definecolor{mycolor1}{rgb}{0.00000,1.00000,1.00000}%
\definecolor{mycolor2}{rgb}{1.00000,0.00000,1.00000}%
\begin{tikzpicture}[font=\footnotesize]
\begin{axis}[%
width=4.5cm,
height=3cm,
scale only axis,
scaled x ticks = false,
xmin=-0.08,
xmax=0.08,
x tick label style={/pgf/number format/fixed},
every outer y axis line/.append style={white},
every y tick label/.append style={font=\color{white}},
ymin=0,
ymax=30,
ytick={\empty},
axis background/.style={fill=white},
axis x line*=bottom,
axis y line*=left,
]
\addplot[ybar,bar width=2,draw=white,fill=blue,area legend] plot table[row sep=crcr] {%
-0.0761123310464332	0.756253825488907\\
-0.0723244714194338	0.825004173260626\\
-0.0685366117924343	2.95626495418391\\
-0.0647487521654348	4.53752295293344\\
-0.0609608925384353	7.56253825488907\\
-0.0571730329114359	11.2063066867902\\
-0.0533851732844364	14.5063233798327\\
-0.0495973136574369	19.0438463327661\\
-0.0458094540304374	19.8688505060267\\
-0.0420215944034379	20.9001057226025\\
-0.0382337347764385	18.5625938983641\\
-0.034445875149439	20.2126022448853\\
-0.0306580155224395	19.5250987671681\\
-0.02687015589544	20.0063512015702\\
-0.0230822962684406	17.8063400728752\\
-0.0192944366414411	12.7875646855397\\
-0.0155065770144416	13.4750681632569\\
-0.0117187173874421	9.69379903581235\\
-0.00793085776044265	8.38754242814969\\
-0.00414299813344317	6.05003060391125\\
-0.000355138506443697	5.01877538733547\\
0.00343272112055579	3.16251599749906\\
0.00722058074755524	2.75001391086875\\
0.0110084403745547	1.37500695543438\\
0.0147963000015542	1.16875591211922\\
0.0185841596285537	0.756253825488907\\
0.0223720192555532	0.55000278217375\\
0.0261598788825526	0.412502086630313\\
0.0299477385095521	0.0687503477717188\\
0.0337355981365516	0.0687503477717188\\
};
\addplot[ybar,bar width=2,draw=white,fill=red,area legend] plot table[row sep=crcr] {%
-0.0396403779881778	0.201748202406226\\
-0.0357679766944754	0.403496404812452\\
-0.031895575400773	0.672494008020753\\
-0.0280231741070706	0.470745805614527\\
-0.0241507728133682	0.739743408822828\\
-0.0202783715196659	1.27773861523943\\
-0.0164059702259635	1.54673621844773\\
-0.0125335689322611	0.941491611229054\\
-0.00866116763855869	2.82447483368716\\
-0.0047887663448563	3.16072183769754\\
-0.00091636505115391	4.9092062585515\\
0.00295603624254848	4.77470745694735\\
0.00682843753625087	7.66643169143658\\
0.0107008388299533	8.60792330266564\\
0.0145732401236556	12.1721415451756\\
0.018445641417358	15.198364581269\\
0.0223180427110604	18.0900888157583\\
0.0261904440047628	20.7800648478413\\
0.0300628452984652	22.7975468719035\\
0.0339352465921676	24.142534887945\\
0.03780764788587	24.9495276975699\\
0.0416800491795724	25.891019308799\\
0.0455524504732747	20.4438178438309\\
0.0494248517669771	16.946849002123\\
0.0532972530606795	11.2978993347486\\
0.0571696543543819	4.10221344892659\\
0.0610420556480843	2.55547723047886\\
0.0649144569417867	0.537995206416602\\
0.0687868582354891	0.0672494008020753\\
0.0726592595291915	0.0672494008020753\\
};
\addplot [color=mycolor1,dashed,line width=2.0pt,forget plot]
  table[row sep=crcr]{%
-0.0761123310464332	1.98190587147365\\
-0.0723244714194338	2.97013585455531\\
-0.0685366117924343	4.28375570406921\\
-0.0647487521654348	5.9460446765302\\
-0.0609608925384353	7.94303966643713\\
-0.0571730329114359	10.211754978068\\
-0.0533851732844364	12.6348220944962\\
-0.0495973136574369	15.0450268683738\\
-0.0458094540304374	17.2413748941778\\
-0.0420215944034379	19.0154196283977\\
-0.0382337347764385	20.1834323526335\\
-0.034445875149439	20.6176529051237\\
-0.0306580155224395	20.2692890411629\\
-0.02687015589544	19.1775400573742\\
-0.0230822962684406	17.4623372854159\\
-0.0192944366414411	15.3026602884448\\
-0.0155065770144416	12.9058494053135\\
-0.0117187173874421	10.4751762784461\\
-0.00793085776044265	8.18259729727627\\
-0.00414299813344317	6.1514302949012\\
-0.000355138506443697	4.45057503269785\\
0.00343272112055579	3.09892623300124\\
0.00722058074755524	2.07664107825106\\
0.0110084403745547	1.33926566171816\\
0.0147963000015542	0.831241309998387\\
0.0185841596285537	0.496526684491364\\
0.0223720192555532	0.285438900510571\\
0.0261598788825526	0.157920611592617\\
0.0299477385095521	0.0840852008186828\\
0.0337355981365516	0.0430879076509504\\
};
\addplot [color=mycolor2,dashed,line width=2.0pt,forget plot]
  table[row sep=crcr]{%
-0.0396403779881778	0.00055418435342997\\
-0.0357679766944754	0.00165499721359977\\
-0.031895575400773	0.00465976785380335\\
-0.0280231741070706	0.0123695899248751\\
-0.0241507728133682	0.0309578171295307\\
-0.0202783715196659	0.0730481742798073\\
-0.0164059702259635	0.1625071255321\\
-0.0125335689322611	0.340846986072431\\
-0.00866116763855869	0.674016479274295\\
-0.0047887663448563	1.25662466207917\\
-0.00091636505115391	2.20884199075483\\
0.00295603624254848	3.66056167022305\\
0.00682843753625087	5.71945718308636\\
0.0107008388299533	8.42531031542184\\
0.0145732401236556	11.7014856657595\\
0.018445641417358	15.3221641783218\\
0.0223180427110604	18.9157351782489\\
0.0261904440047628	22.016604758357\\
0.0300628452984652	24.1602527874524\\
0.0339352465921676	24.9963506889537\\
0.03780764788587	24.3823606704137\\
0.0416800491795724	22.4232676318606\\
0.0455524504732747	19.4422291206288\\
0.0494248517669771	15.8934148815489\\
0.0532972530606795	12.249331584609\\
0.0571696543543819	8.900851772743\\
0.0610420556480843	6.09782215090212\\
0.0649144569417867	3.93860062563223\\
0.0687868582354891	2.39846367072907\\
0.0726592595291915	1.37704569182447\\
};
\end{axis}
\end{tikzpicture}
\centerline{(a) without noise}\medskip{}
\end{minipage}\hfill{}%
\begin{minipage}[b]{0.48\linewidth}%
\centering{}
\definecolor{mycolor1}{rgb}{0.00000,1.00000,1.00000}%
\definecolor{mycolor2}{rgb}{1.00000,0.00000,1.00000}%
\begin{tikzpicture}[font=\footnotesize]
\begin{axis}[%
width=4.5cm,
height=3cm,
scale only axis,
scaled x ticks = false,
xmin=-0.03,
xmax=0.03,
x tick label style={/pgf/number format/fixed},
every outer y axis line/.append style={white},
every y tick label/.append style={font=\color{white}},
ymin=0,
ymax=80,
ytick={\empty},
axis background/.style={fill=white},
axis x line*=bottom,
axis y line*=left,
]
\addplot[ybar,bar width=2,draw=white,fill=blue,area legend] plot table[row sep=crcr] {%
-0.0324618289180731	0.194111476730229\\
-0.0308519292600827	0.388222953460458\\
-0.0292420296020924	2.52344919749297\\
-0.027632129944102	5.24100987171618\\
-0.0260222302861116	7.76445906920915\\
-0.0244123306281213	13.0054689409253\\
-0.0228024309701309	21.7404853937856\\
-0.0211925313121405	27.9520526491529\\
-0.0195826316541502	35.1341772881714\\
-0.0179727319961598	46.0044199850642\\
-0.0163628323381694	48.333757705827\\
-0.0147529326801791	55.7099938215757\\
-0.0131430330221887	48.5278691825572\\
-0.0115331333641983	57.2628856354175\\
-0.00992323370620798	49.1102036127479\\
-0.00831333404821762	48.333757705827\\
-0.00670343439022725	34.745954334711\\
-0.00509353473223689	31.4460592302971\\
-0.00348363507424652	21.9345968705159\\
-0.00187373541625616	20.9640394868647\\
-0.000263835758265794	15.5289181384183\\
0.00134606389972457	9.12323940632075\\
0.00295596355771493	7.18212463901846\\
0.00456586321570529	5.82334430190686\\
0.00617576287369566	2.91167215095343\\
0.00778566253168603	1.94111476730229\\
0.00939556218967639	1.55289181384183\\
0.0110054618476668	0.582334430190686\\
0.0126153615056571	0\\
0.0142252611636475	0.194111476730229\\
};
\addplot[ybar,bar width=2,draw=white,fill=red,area legend] plot table[row sep=crcr] {%
-0.0176601116334843	0.579576407496546\\
-0.0160425509669804	0.386384271664364\\
-0.0144249903004766	0.772768543328729\\
-0.0128074296339728	0.386384271664364\\
-0.011189868967469	1.15915281499309\\
-0.00957230830096517	1.35234495082527\\
-0.00795474763446136	1.73872922248964\\
-0.00633718696795754	4.05703485247582\\
-0.00471962630145373	6.76172475412637\\
-0.00310206563494991	8.30726184078383\\
-0.0014845049684461	11.3983360140987\\
0.000133055698057719	13.9098337799171\\
0.00175061636456153	20.6715585340435\\
0.00336817703106535	27.626475424002\\
0.00498573769756916	35.1609687214571\\
0.00660329836407298	43.468230562241\\
0.0082208590305768	54.6733744405075\\
0.00983841969708061	59.8895621079765\\
0.0114559803635844	70.5151295787465\\
0.0130735410300882	61.6282913304661\\
0.0146911016965921	65.2989419112776\\
0.0163086623630959	49.4571867730386\\
0.0179262230295997	30.3311653256526\\
0.0195437836961035	23.9558248431906\\
0.0211613443626073	14.875794459078\\
0.0227789050291111	7.14810902579074\\
0.024396465695615	1.54553708665746\\
0.0260140263621188	0.965960679160911\\
0.0276315870286226	0\\
0.0292491476951264	0.193192135832182\\
};
\addplot [color=mycolor1,dashed,line width=2.0pt,forget plot]
  table[row sep=crcr]{%
-0.0324618289180731	1.39900936709009\\
-0.0308519292600827	2.4095933198976\\
-0.0292420296020924	3.97055664352574\\
-0.027632129944102	6.2595564267803\\
-0.0260222302861116	9.44104871875889\\
-0.0244123306281213	13.6232722585133\\
-0.0228024309701309	18.8073302435529\\
-0.0211925313121405	24.8403312155557\\
-0.0195826316541502	31.3886134246441\\
-0.0179727319961598	37.9464714153236\\
-0.0163628323381694	43.8889490319813\\
-0.0147529326801791	48.5650099649571\\
-0.0131430330221887	51.4133970797094\\
-0.0115331333641983	52.0731243370503\\
-0.00992323370620798	50.4586338961864\\
-0.00831333404821762	46.778022483852\\
-0.00670343439022725	41.4889788139174\\
-0.00509353473223689	35.2053081432183\\
-0.00348363507424652	28.5803847951085\\
-0.00187373541625616	22.1979316927611\\
-0.000263835758265794	16.4945877969304\\
0.00134606389972457	11.7261363671097\\
0.00295596355771493	7.97540839679922\\
0.00456586321570529	5.18961857374122\\
0.00617576287369566	3.23074339109518\\
0.00778566253168603	1.92421682259632\\
0.00939556218967639	1.09645311831685\\
0.0110054618476668	0.597737694493644\\
0.0126153615056571	0.311756658942485\\
0.0142252611636475	0.155562656254379\\
};
\addplot [color=mycolor2,dashed,line width=2.0pt,forget plot]
  table[row sep=crcr]{%
-0.0176601116334843	0.00199085514544633\\
-0.0160425509669804	0.00598848439012986\\
-0.0144249903004766	0.0169301932971602\\
-0.0128074296339728	0.0449857169385218\\
-0.011189868967469	0.112345352362296\\
-0.00957230830096517	0.263695859135967\\
-0.00795474763446136	0.581727030572258\\
-0.00633718696795754	1.20615442167644\\
-0.00471962630145373	2.35046786045651\\
-0.00310206563494991	4.30500297497869\\
-0.0014845049684461	7.41071871899399\\
0.000133055698057719	11.9898828010309\\
0.00175061636456153	18.2321246456947\\
0.00336817703106535	26.0571767844127\\
0.00498573769756916	35.0013803899797\\
0.00660329836407298	44.1886486394352\\
0.0082208590305768	52.4329150170904\\
0.00983841969708061	58.4742985326346\\
0.0114559803635844	61.2905859171347\\
0.0130735410300882	60.3796039993699\\
0.0146911016965921	55.90549311333\\
0.0163086623630959	48.6504027057873\\
0.0179262230295997	39.7911170648083\\
0.0195437836961035	30.5881744276692\\
0.0211613443626073	22.0998191450049\\
0.0227789050291111	15.0069216816827\\
0.024396465695615	9.57772244370471\\
0.0260140263621188	5.74513996308385\\
0.0276315870286226	3.23896842566112\\
0.0292491476951264	1.71624999998823\\
};
\end{axis}
\end{tikzpicture}
\centerline{(b) with $0\db$ noise}\medskip{}
\end{minipage}
\caption{Gaussian approximation of $g({\bx})$, $n=256,\ p=784,\ c_{1}=c_{2}=\frac{1}{2},\gamma=1$,
Gaussian kernel with $\sigma=1$, MNIST data (numbers $8$ and $9$) without and with $0\db$ noise.\label{fig:Gaussian-approximation-MINIST} }
\end{figure}

\begin{figure}[htb]
\begin{minipage}[b]{0.48\linewidth}%
\centering{}
\definecolor{mycolor1}{rgb}{0.00000,1.00000,1.00000}%
\definecolor{mycolor2}{rgb}{1.00000,0.00000,1.00000}%
\begin{tikzpicture}[font=\footnotesize]
\begin{axis}[%
width=4.5cm,
height=3cm,
scale only axis,
scaled x ticks = false,
xmin=-0.12,
xmax=0.12,
x tick label style={/pgf/number format/fixed},
every outer y axis line/.append style={white},
every y tick label/.append style={font=\color{white}},
ymin=0,
ymax=25,
ytick={\empty},
axis background/.style={fill=white},
axis x line*=bottom,
axis y line*=left,
]
\addplot[ybar,bar width=2,draw=white,fill=blue,area legend] coordinates{
(-0.109873,0.423375)(-0.104706,1.844707)(-0.099539,5.685327)(-0.094372,8.981607)(-0.089205,13.033915)(-0.084039,14.485487)(-0.078872,16.874534)(-0.073705,17.902732)(-0.068538,17.388633)(-0.063371,19.384546)(-0.058205,16.178989)(-0.053038,14.092353)(-0.047871,13.275843)(-0.042704,9.314259)(-0.037537,6.804248)(-0.032370,5.473639)(-0.027204,4.445442)(-0.022037,3.387003)(-0.016870,1.874948)(-0.011703,1.300367)(-0.006536,0.604822)(-0.001370,0.120964)(0.003797,0.060482)(0.008964,0.060482)(0.014131,0.120964)(0.019298,0.211688)(0.024464,0.090723)(0.029631,0.060482)(0.034798,0.030241)(0.039965,0.030241)
};
\addplot[ybar,bar width=2,draw=white,fill=red,area legend] coordinates{
(-0.040972,0.212401)(-0.035822,0.091029)(-0.030673,0.030343)(-0.025523,0.091029)(-0.020374,0.303431)(-0.015224,0.273088)(-0.010075,0.697890)(-0.004926,0.546175)(0.000224,1.517153)(0.005373,1.759897)(0.010523,3.034306)(0.015672,4.156999)(0.020822,3.701853)(0.025971,5.583123)(0.031121,7.403707)(0.036270,7.343020)(0.041419,9.163604)(0.046569,11.682078)(0.051718,13.229574)(0.056868,15.323245)(0.062017,15.201873)(0.067167,18.509266)(0.072316,19.753332)(0.077466,19.449901)(0.082615,14.443296)(0.087764,11.591049)(0.092914,6.341699)(0.098063,2.457788)(0.103213,0.273088)(0.108362,0.030343)
};
\addplot [color=mycolor1,dashed,line width=2.0pt,forget plot] coordinates{
(-0.109873,4.583486)(-0.104706,6.154070)(-0.099539,7.934181)(-0.094372,9.822337)(-0.089205,11.676177)(-0.084039,13.327834)(-0.078872,14.608028)(-0.073705,15.374346)(-0.068538,15.537274)(-0.063371,15.077387)(-0.058205,14.049162)(-0.053038,12.570364)(-0.047871,10.799865)(-0.042704,8.909676)(-0.037537,7.057950)(-0.032370,5.368690)(-0.027204,3.921310)(-0.022037,2.750218)(-0.016870,1.852150)(-0.011703,1.197728)(-0.006536,0.743727)(-0.001370,0.443447)(0.003797,0.253889)(0.008964,0.139578)(0.014131,0.073683)(0.019298,0.037350)(0.024464,0.018179)(0.029631,0.008497)(0.034798,0.003813)(0.039965,0.001643)
};
\addplot [color=mycolor2,dashed,line width=2.0pt,forget plot] coordinates{
(-0.040972,0.000001)(-0.035822,0.000004)(-0.030673,0.000018)(-0.025523,0.000073)(-0.020374,0.000272)(-0.015224,0.000950)(-0.010075,0.003076)(-0.004926,0.009259)(0.000224,0.025895)(0.005373,0.067295)(0.010523,0.162500)(0.015672,0.364620)(0.020822,0.760222)(0.025971,1.472836)(0.031121,2.651439)(0.036270,4.435296)(0.041419,6.894094)(0.046569,9.957389)(0.051718,13.363719)(0.056868,16.665674)(0.062017,19.312213)(0.067167,20.794805)(0.072316,20.806129)(0.077466,19.343781)(0.082615,16.711102)(0.087764,13.414746)(0.092914,10.006298)(0.098063,6.935505)(0.103213,4.466798)(0.108362,2.673180)
};
\end{axis}
\end{tikzpicture}
\centerline{(a) without noise}\medskip{}
\end{minipage}\hfill{}%
\begin{minipage}[b]{0.48\linewidth}%
\centering{}
\definecolor{mycolor1}{rgb}{0.00000,1.00000,1.00000}%
\definecolor{mycolor2}{rgb}{1.00000,0.00000,1.00000}%
\begin{tikzpicture}[font=\footnotesize]
\begin{axis}[%
width=4.5cm,
height=3cm,
scale only axis,
scaled x ticks = false,
xmin=-0.05,
xmax=0.05,
x tick label style={/pgf/number format/fixed},
every outer y axis line/.append style={white},
every y tick label/.append style={font=\color{white}},
ymin=0,
ymax=55,
ytick={\empty},
axis background/.style={fill=white},
axis x line*=bottom,
axis y line*=left,
]
\addplot[ybar,bar width=2,draw=white,fill=blue,area legend] coordinates{
(-0.046707,0.426942)(-0.044511,1.067356)(-0.042315,3.557854)(-0.040119,8.111907)(-0.037923,15.441086)(-0.035727,23.339522)(-0.033532,30.811015)(-0.031336,33.016884)(-0.029140,33.799612)(-0.026944,35.863167)(-0.024748,36.788209)(-0.022552,39.563336)(-0.020357,41.840362)(-0.018161,35.365068)(-0.015965,29.530187)(-0.013769,25.758862)(-0.011573,20.920181)(-0.009377,14.089101)(-0.007181,10.317776)(-0.004986,7.400336)(-0.002790,3.557854)(-0.000594,2.063555)(0.001602,0.996199)(0.003798,0.284628)(0.005994,0.284628)(0.008189,0.640414)(0.010385,0.142314)(0.012581,0.213471)(0.014777,0.142314)(0.016973,0.071157)
};
\addplot[ybar,bar width=2,draw=white,fill=red,area legend] coordinates{
(-0.013436,0.311118)(-0.011427,0.155559)(-0.009418,0.466677)(-0.007409,0.544456)(-0.005401,1.322251)(-0.003392,1.166692)(-0.001383,1.633369)(0.000626,2.877840)(0.002635,4.433430)(0.004644,6.066798)(0.006653,8.711300)(0.008662,12.289156)(0.010671,17.811498)(0.012679,21.233794)(0.014688,26.678357)(0.016697,33.522950)(0.018706,40.523102)(0.020715,45.812105)(0.022724,52.345580)(0.024733,53.123375)(0.026742,54.134508)(0.028751,44.256516)(0.030759,33.678509)(0.032768,19.444866)(0.034777,10.655787)(0.036786,3.655635)(0.038795,0.544456)(0.040804,0.233338)(0.042813,0.077779)(0.044822,0.077779)
};
\addplot [color=mycolor1,dashed,line width=2.0pt,forget plot] coordinates{
(-0.046707,3.458433)(-0.044511,5.451469)(-0.042315,8.197977)(-0.040119,11.761397)(-0.037923,16.097933)(-0.035727,21.020369)(-0.033532,26.186025)(-0.031336,31.121307)(-0.029140,35.286216)(-0.026944,38.169052)(-0.024748,39.389155)(-0.022552,38.779389)(-0.020357,36.423718)(-0.018161,32.638231)(-0.015965,27.901525)(-0.013769,22.755601)(-0.011573,17.705478)(-0.009377,13.142744)(-0.007181,9.307294)(-0.004986,6.288106)(-0.002790,4.052987)(-0.000594,2.492239)(0.001602,1.462053)(0.003798,0.818268)(0.005994,0.436905)(0.008189,0.222555)(0.010385,0.108155)(0.012581,0.050144)(0.014777,0.022179)(0.016973,0.009359)
};
\addplot [color=mycolor2,dashed,line width=2.0pt,forget plot] coordinates{
(-0.013436,0.000319)(-0.011427,0.001104)(-0.009418,0.003567)(-0.007409,0.010772)(-0.005401,0.030399)(-0.003392,0.080169)(-0.001383,0.197585)(0.000626,0.455087)(0.002635,0.979565)(0.004644,1.970470)(0.006653,3.704281)(0.008662,6.507817)(0.010671,10.684749)(0.012679,16.394221)(0.014688,23.507949)(0.016697,31.501859)(0.018706,39.450739)(0.020715,46.171245)(0.022724,50.499320)(0.024733,51.617501)(0.026742,49.306697)(0.028751,44.016179)(0.030759,36.721150)(0.032768,28.629759)(0.034777,20.860111)(0.036786,14.204077)(0.038795,9.038720)(0.040804,5.375246)(0.042813,2.987358)(0.044822,1.551578)
};
\end{axis}
\end{tikzpicture}
\centerline{(b) with $0\db$ noise}\medskip{}
\end{minipage}
\caption{Gaussian approximation of $g({\bx})$, $n=256,\ p=784,\ c_{1}=c_{2}=\frac{1}{2},\gamma=1$,
Gaussian kernel with $\sigma=1$, Fashion-MNIST data (numbers $8$ and $9$) without and with $0\db$ noise.\label{fig:Gaussian-approximation-Fashion-MINIST} }
\end{figure}

In Figure~\ref{fig:MNIST_threshold_loop} we plot the misclassification rate as a function of the decision threshold $\xi$ for MNIST and Fashion-MNIST data (number $8$ and $9$). We observe that although derived from a Gaussian mixture model, the conclusion from Remark~\ref{rem:Dominant Bias}, Lemma~\ref{lem:norml-y} and Corollary~\ref{cor:Gaussian Approximation normal} that the decision threshold should approximately be $c_2 - c_1$ rather than $0$ approximately holds true in both cases.

\begin{figure}[htb]
\begin{minipage}[b]{0.48\linewidth}%
\centering{}
\begin{tikzpicture}[font=\footnotesize]
\pgfplotsset{every major grid/.append style={densely dashed}}
/xlabel near ticks
/ylabel near ticks
\begin{axis}[%
width=5.3cm,
height=4.5cm,
xmin=0.03,
xmax=0.23,
xminorticks=true,
xlabel={Decision threshold $\xi$},
ymin=0,
ymax=0.6,
ylabel={Classification error},
ytick = {0.1,0.2,0.3,0.4,0.5,0.6},
yticklabels={$0.1$,$0.2$,$0.3$,$0.4$,$0.5$,$0.6$},
grid=major,
axis background/.style={fill=white},
legend style={at={(0.98,0.95)},anchor=north east,legend cell align=left,align=left,draw=white!15!black}
]
\addplot [color=blue,smooth,solid,line width=1pt,mark size=1.5pt,mark=o]
  coordinates{ (0.000000,0.437500)(0.005000,0.437500)(0.010000,0.437500)(0.015000,0.437500)(0.020000,0.437500)(0.025000,0.437500)(0.030000,0.437500)(0.035000,0.437500)(0.040000,0.437500)(0.045000,0.437500)(0.050000,0.437435)(0.055000,0.436003)(0.060000,0.432292)(0.065000,0.423503)(0.070000,0.403711)(0.075000,0.374544)(0.080000,0.333398)(0.085000,0.284180)(0.090000,0.234961)(0.095000,0.195052)(0.100000,0.156120)(0.105000,0.121029)(0.110000,0.095638)(0.115000,0.093359)(0.120000,0.092057)(0.125000,0.096940)(0.130000,0.108529)(0.135000,0.138607)(0.140000,0.171484)(0.145000,0.225781)(0.150000,0.280404)(0.155000,0.344336)(0.160000,0.383789)(0.165000,0.451302)(0.170000,0.496549)(0.175000,0.526888)(0.180000,0.549805)(0.185000,0.558333)(0.190000,0.562174)(0.195000,0.562500)(0.200000,0.562500)(0.205000,0.562500)(0.210000,0.562500)(0.215000,0.562500)(0.220000,0.562500)(0.225000,0.562500)(0.230000,0.562500)
};
\draw[densely dashed,thick,color=red,line width=1.5pt] (axis cs:.12,0) -- (axis cs:.12,.6) ;
\draw[->,thick,color=red] (axis cs:.1,.5) -- (axis cs:.12,.5) node [left,pos=0,font=\footnotesize] {\textcolor{red}{$\xi_{\rm opt} $ }}; 
\draw[densely dashed,thick,color=mycolor2,line width=1.5pt] (axis cs:.125,0) -- (axis cs:.125,.6) ;
\draw[->,thick,color=mycolor2] (axis cs:.14,.06) -- (axis cs:.125,.06) node [right,pos=0,font=\footnotesize] {\textcolor{mycolor2}{$\xi = c_2 - c_1$ }};
\end{axis}
\end{tikzpicture}
\end{minipage}\hfill{}%
\begin{minipage}[b]{0.48\linewidth}%
\centering{}
\begin{tikzpicture}[font=\footnotesize]
\pgfplotsset{every major grid/.append style={densely dashed}}
/xlabel near ticks
/ylabel near ticks
\begin{axis}[%
width=5.3cm,
height=4.5cm,
xmin=0.03,
xmax=0.23,
xminorticks=true,
xlabel={Decision threshold $\xi$},
ymin=-0.05,
ymax=0.6,
ytick = {},
yticklabels={},
grid=major,
axis background/.style={fill=white},
legend style={at={(0.98,0.95)},anchor=north east,legend cell align=left,align=left,draw=white!15!black}
]
\addplot [color=blue,smooth,solid,line width=1pt,mark size=1.5pt,mark=o]
  coordinates{ (0.000000,0.437500)(0.005000,0.437500)(0.010000,0.437435)(0.015000,0.437240)(0.020000,0.434635)(0.025000,0.426367)(0.030000,0.408724)(0.035000,0.380143)(0.040000,0.347526)(0.045000,0.313346)(0.050000,0.268880)(0.055000,0.235612)(0.060000,0.197070)(0.065000,0.156250)(0.070000,0.122591)(0.075000,0.095703)(0.080000,0.065495)(0.085000,0.048242)(0.090000,0.033333)(0.095000,0.019401)(0.100000,0.012174)(0.105000,0.009375)(0.110000,0.006836)(0.115000,0.007617)(0.120000,0.007878)(0.125000,0.012891)(0.130000,0.017448)(0.135000,0.028646)(0.140000,0.036393)(0.145000,0.053255)(0.150000,0.066536)(0.155000,0.086979)(0.160000,0.108919)(0.165000,0.138542)(0.170000,0.178516)(0.175000,0.230664)(0.180000,0.271159)(0.185000,0.323503)(0.190000,0.389648)(0.195000,0.444987)(0.200000,0.494987)(0.205000,0.530599)(0.210000,0.555729)(0.215000,0.561393)(0.220000,0.562500)(0.225000,0.562500)(0.230000,0.562500)
};
\draw[densely dashed,thick,color=red,line width=1.5pt] (axis cs:.11,-.05) -- (axis cs:.11,.6) ;
\draw[->,thick,color=red] (axis cs:.1,.5) -- (axis cs:.11,.5) node [left,pos=0,font=\footnotesize] {\textcolor{red}{$\xi_{\rm opt} $ }}; 
\draw[densely dashed,thick,color=mycolor2,line width=1.5pt] (axis cs:.125,-.05) -- (axis cs:.125,.6) ;
\draw[->,thick,color=mycolor2] (axis cs:.14,-.003) -- (axis cs:.125,-.003) node [right,pos=0,font=\footnotesize] {\textcolor{mycolor2}{$\xi = c_2 - c_1$ }};
\end{axis}
\end{tikzpicture}
\end{minipage}
\caption{$n=512,\ p=784,\ c_{2}-c_1 = 0.125, \gamma=1$, Gaussian kernel with $\sigma=1$ for MNIST (left) and Fashion-MNIST data (right). With optimal decision threshold $\xi_{\rm opt} = 0.12$ (left) and $0.11$ (right) in red. }\label{fig:MNIST_threshold_loop}
\end{figure}

In Figure~\ref{fig:MNIST_sigma_loop} and~\ref{fig:Fashion-MNIST_sigma_loop} we evaluated the performance of LS-SVM on the MNIST and Fashion-MNIST datasets (with and without noise) as a function of the kernel parameter $\sigma$ of Gaussian kernel $f(x)=\exp(-x/2\sigma^2)$. Surprisingly, compared to Figure~\ref{fig:gauss_sigma_loop}, we face the situation where there is little difference in the performance of LS-SVM as soon as $\sigma^2$ is away from $0$, which likely comes from the fact that the difference in means $\bmu_{2}-\bmu_{1}$ is so large that it becomes predominant over the influence of covariances as mentioned in the first paragraph of Section~\ref{subsec:limiting-cases}. This argument is numerically sustained by Table \ref{tab:Empirical-estimation-MNIST}. The gap between theory and practice observed as $\sigma^2\to 0$ is likely a result of the finite $n,p$ (as in Figure~\ref{fig:gauss_sigma_loop}) rather than of the Gaussian assumption of the input data, since we observe a similar behavior even when Gaussian white noise is added.


\begin{figure}[htb]
\noindent\begin{minipage}[b]{1\linewidth}%
\centering{}
\begin{tikzpicture}[font=\footnotesize]
\pgfplotsset{every major grid/.append style={densely dashed}}
/xlabel near ticks
/ylabel near ticks
\begin{axis}[%
width=7.5cm,
height=4cm,
scale only axis,
xmode=log,
log basis x={2},
xmin=0.03125,
xmax=256,
xminorticks=true,
xlabel={$\sigma^2$},
ymin=0,
ymax=0.31,
ylabel={Classification error},
ytick = {0,0.05,0.1,0.15,0.2,0.25,0.30},
yticklabels={$0$,$0.05$,$0.1$,$0.15$,$0.2$,$0.25$,$0.30$},
grid=major,
axis background/.style={fill=white},
legend style={at={(0.98,0.95)},anchor=north east,legend cell align=left,align=left,draw=white!15!black}
]
\addplot [color=blue,smooth,solid,line width=1.0pt,mark size=2.0pt,mark=square,mark options={solid}]
  table[row sep=crcr]{%
0.03125	0.309505208333333\\
0.0625	0.193229166666667\\
0.125	0.102734375\\
0.25	0.064453125\\
0.5	0.0541666666666667\\
1	0.055078125\\
2	0.057421875\\
4	0.0569010416666667\\
8	0.0604166666666667\\
16	0.0584635416666667\\
32	0.0631510416666667\\
64	0.0635416666666667\\
128	0.0634114583333333\\
256	0.0673177083333333\\
};
\addlegendentry{Empirical error without noise};

\addplot [color=red,smooth,solid,line width=1.0pt,mark size=2.0pt,mark=triangle,mark options={solid,rotate=270}]
  table[row sep=crcr]{%
0.03125	0.0497040761023577\\
0.0625	0.0276483828889777\\
0.125	0.0192351044541105\\
0.25	0.0220228166039465\\
0.5	0.028907288280632\\
1	0.0350707246661068\\
2	0.0391205621635671\\
4	0.0414319154558093\\
8	0.0426651639057672\\
16	0.0433019441865176\\
32	0.0436254694959261\\
64	0.0437885279981571\\
128	0.0438703827199064\\
256	0.0439113916371357\\
};
\addlegendentry{Theoretical error without noise};

\addplot [color=blue,smooth,dashed,line width=1.0pt,mark size=2.0pt,mark=o,mark options={solid}]
  table[row sep=crcr]{%
0.03125	0.190364583333333\\
0.0625	0.0967447916666667\\
0.125	0.0509114583333333\\
0.25	0.0528645833333333\\
0.5	0.0610677083333333\\
1	0.0569010416666667\\
2	0.0641927083333333\\
4	0.061328125\\
8	0.0670572916666667\\
16	0.0592447916666667\\
32	0.0596354166666667\\
64	0.061328125\\
128	0.0671875\\
256	0.0634114583333333\\
};
\addlegendentry{Empirical error with $0\db$ noise};

\addplot [color=red,smooth,dashed,line width=1.0pt,mark size=2.0pt,mark=*,mark options={solid}]
  table[row sep=crcr]{%
0.03125	0.0291804816884539\\
0.0625	0.0237575193140776\\
0.125	0.0280293526313407\\
0.25	0.0359578979940397\\
0.5	0.0429300405034688\\
1	0.0462731613202686\\
2	0.0485067191840601\\
4	0.0494038051144224\\
8	0.04985940949061\\
16	0.0496946739818815\\
32	0.0497289461864644\\
64	0.0502670565062374\\
128	0.0510058266539532\\
256	0.0509217853468831\\
};
\addlegendentry{Theoretical error with $0\db$ noise};

\end{axis}
\end{tikzpicture}
\caption{$n=256,\ p=784,\ c_{1}=c_{2}=1/2,\gamma=1$, Gaussian kernel, MNIST data (numbers $8$ and $9$) with and without noise.}\label{fig:MNIST_sigma_loop}
\end{minipage}
\end{figure}

\begin{figure}[htb]
\noindent\begin{minipage}[b]{1\linewidth}%
\centering{}
\begin{tikzpicture}[font=\footnotesize]
\pgfplotsset{every major grid/.append style={densely dashed}}
/xlabel near ticks
/ylabel near ticks
\begin{axis}[%
width=7.5cm,
height=4cm,
scale only axis,
xmode=log,
log basis x={2},
xmin=0.03125,
xmax=256,
xminorticks=true,
xlabel={$\sigma^2$},
ymin=-0.01,
ymax=0.27,
ylabel={Classification error},
ytick = {0,0.05,0.1,0.15,0.2,0.25,0.30},
yticklabels={$0$,$0.05$,$0.1$,$0.15$,$0.2$,$0.25$,$0.30$},
grid=major,
axis background/.style={fill=white},
legend style={at={(0.98,0.95)},anchor=north east,legend cell align=left,align=left,draw=white!15!black}
]
\addplot [color=blue,smooth,solid,line width=1.0pt,mark size=2.0pt,mark=square,mark options={solid}]
  coordinates{ (0.031250,0.267318)(0.062500,0.193359)(0.125000,0.117708)(0.250000,0.039323)(0.500000,0.013542)(1.000000,0.006510)(2.000000,0.007031)(4.000000,0.005599)(8.000000,0.006250)(16.000000,0.005859)(32.000000,0.006120)(64.000000,0.009115)(128.000000,0.006250)(256.000000,0.007422)
};
\addlegendentry{Empirical error without noise};
\addplot [color=red,smooth,solid,line width=1.0pt,mark size=2.0pt,mark=triangle,mark options={solid,rotate=270}]
  coordinates{ (0.031250,0.068979)(0.062500,0.042419)(0.125000,0.017988)(0.250000,0.005925)(0.500000,0.003081)(1.000000,0.003114)(2.000000,0.003780)(4.000000,0.004392)(8.000000,0.004795)(16.000000,0.005025)(32.000000,0.005147)(64.000000,0.005211)(128.000000,0.005243)(256.000000,0.005259)
};
\addlegendentry{Theoretical error without noise};
\addplot [color=blue,smooth,dashed,line width=1.0pt,mark size=2.0pt,mark=o,mark options={solid}]
  coordinates{ (0.031250,0.189453)(0.062500,0.101042)(0.125000,0.033203)(0.250000,0.011589)(0.500000,0.007812)(1.000000,0.007682)(2.000000,0.008724)(4.000000,0.008073)(8.000000,0.006771)(16.000000,0.006510)(32.000000,0.007682)(64.000000,0.011328)(128.000000,0.007812)(256.000000,0.007292)
};
\addlegendentry{Empirical error with $0\db$ noise};
\addplot [color=red,smooth,dashed,line width=1.0pt,mark size=2.0pt,mark=*,mark options={solid}]
  coordinates{ (0.031250,0.037421)(0.062500,0.015659)(0.125000,0.005196)(0.250000,0.003515)(0.500000,0.004697)(1.000000,0.004951)(2.000000,0.005810)(4.000000,0.006673)(8.000000,0.006222)(16.000000,0.006458)(32.000000,0.007022)(64.000000,0.006859)(128.000000,0.006514)(256.000000,0.006853)
};
\addlegendentry{Theoretical error with $0\db$ noise};

\end{axis}
\end{tikzpicture}
\caption{$n=256,\ p=784,\ c_{1}=c_{2}=1/2,\gamma=1$, Gaussian kernel, Fashion-MNIST data (numbers $8$ and $9$) with and without noise.}\label{fig:Fashion-MNIST_sigma_loop}
\end{minipage}
\end{figure}

\begin{table}[htb]
\centering
\caption{Empirical estimation of differences in means and covariances of MNIST/Fashion-MNIST data (numbers $8$ and $9$)}
\label{tab:Empirical-estimation-MNIST}
\begin{tabular}{c|c|c}
  & \makecell{MNIST/Fashion-MNIST\\ without noise} & \makecell{MNIST/Fashion-MNIST \\ with $0\db$ noise} \\ 
  \hline
$\|\bmu_{2}-\bmu_{2}\|^{2}$ & $251/483$           & $96/197$         \\
$\frac1p\left(\tr\left({\bf C}_{2}-{\bf C}_{1}\right)\right)^{2}$ & $19/89$            & $3/13$          \\
$\frac1p\tr\left((\mathbf{C}_{2}-\mathbf{C}_{1})^{2}\right)$ & $30/86$            & $5/13$         
\end{tabular}
\end{table}

\section{Concluding remarks}
\label{sec:conclusion}
In this work, through a performance analysis of LS-SVM for large dimensional data, we reveal the significance of balanced dataset with $c_1=c_2$, as well as the interplay between the pivotal kernel function $f$ and the statistical structure of the data. The normalized labels $y^*_i\in\{-1/c_1,1/c_2\}$ are proposed to mitigate the damage of $c_2-c_1$ in the decision function. We prove the irrelevance of $\gamma$ when it is considered to remain constant in the large $n,p$ regime; however, this argument is not guaranteed to hold true when $\gamma$ scales with $n,p$. Our theoretical results, even though built upon the assumption of Gaussian data, provide similar results when tested on real-world large dimensional datasets, which offers a possible application despite the strong Gaussian assumption in the general context of large scale supervised learning.

The major difference of the present work compared to other theoretical analyses (for example \citep{caponnetto2007optimal}) is that, by studying the rather simple problem of a two-class Gaussian mixture separation with comparably large instance number and data dimension, together with sufficiently smooth kernel function $f$ and regularization parameter $\gamma$ of order $O(1)$, we deduce \emph{explicit} results for the output of LS-SVM which surprisingly coincide with observations on some large dimensional real-world datasets (including MNIST and beyond) and therefore allowing for novel insights into the behavior of LS-SVM for large dimensional datasets. Of interest to future work is the remark that, unlike in the work of \citep{caponnetto2007optimal} where, in the large $n$ alone asymptotics, $\gamma$ is best scaled large with $n$, in the present large $p$, large $n$ setting, where we demonstrate rate-optimality of LS-SVM for $\gamma=O(1)$. This apparent paradox could be deciphered through the analysis of more advanced (normalized inner product) kernels of the type $f(\bx_i^\T \bx_j/\sqrt{p})$, studied notably in \citep{cheng2013spectrum}, for which we believe that other scalings for $\gamma$ would be optimal; it is also importantly believed that such kernels could lead to improved performances (not in rate, as those are already optimal in the present setting, but possibly in absolute performance). These technically more involved considerations are left for future investigations.

The extension of the present work to the asymptotic performance analysis of the classical SVM requires more efforts since, there, the decision function $g(\bx)$ depends implicitly (through the solution to a quadratic programming problem) rather than explicitly on the underlying kernel matrix $\bK$. Additional technical tools are thus required to cope with this dependence structure.

The link between LS-SVM and extreme learning machine (ELM) was brought to light in \cite{huang2012extreme} and the performance analysis of ELM in large dimension has been investigated in the recent article \cite{louart2018random}. Together with these works, we have the possibility to identify the tight but subtle relation between the kernel function and the activation function in the context of some simple structured neural networks. This is notably of interest when the datasets are so large that computing $\bK$ and the decision function $g(\bx)$ becomes prohibitive, a problem largely alleviated by neural networks with controllable number of neurons. This link also generally opens up a possible direction of research into the complex neural networks realm.

{\footnotesize
\bibliographystyle{IEEEtran}
\bibliography{IEEEabrv,RMT4LSSVM}

\begin{thebibliography}{10}
\providecommand{\url}[1]{#1}
\csname url@samestyle\endcsname
\providecommand{\newblock}{\relax}
\providecommand{\bibinfo}[2]{#2}
\providecommand{\BIBentrySTDinterwordspacing}{\spaceskip=0pt\relax}
\providecommand{\BIBentryALTinterwordstretchfactor}{4}
\providecommand{\BIBentryALTinterwordspacing}{\spaceskip=\fontdimen2\font plus
\BIBentryALTinterwordstretchfactor\fontdimen3\font minus
  \fontdimen4\font\relax}
\providecommand{\BIBforeignlanguage}[2]{{%
\expandafter\ifx\csname l@#1\endcsname\relax
\typeout{** WARNING: IEEEtran.bst: No hyphenation pattern has been}%
\typeout{** loaded for the language `#1'. Using the pattern for}%
\typeout{** the default language instead.}%
\else
\language=\csname l@#1\endcsname
\fi
#2}}
\providecommand{\BIBdecl}{\relax}
\BIBdecl

\bibitem{cortes1995support}
C.~Cortes and V.~Vapnik, ``Support-vector networks,'' \emph{Machine learning},
  vol.~20, no.~3, pp. 273--297, 1995.

\bibitem{suykens1999least}
J.~A. Suykens and J.~Vandewalle, ``Least squares support vector machine
  classifiers,'' \emph{Neural processing letters}, vol.~9, no.~3, pp. 293--300,
  1999.

\bibitem{lee2001rsvm}
Y.-J. Lee and O.~L. Mangasarian, ``Rsvm: Reduced support vector machines,'' in
  \emph{Proceedings of the 2001 SIAM International Conference on Data
  Mining}.\hskip 1em plus 0.5em minus 0.4em\relax SIAM, 2001, pp. 1--17.

\bibitem{fung2005multicategory}
G.~M. Fung and O.~L. Mangasarian, ``Multicategory proximal support vector
  machine classifiers,'' \emph{Machine learning}, vol.~59, no. 1-2, pp. 77--97,
  2005.

\bibitem{osuna1997training}
E.~Osuna, R.~Freund, and F.~Girosit, ``Training support vector machines: an
  application to face detection,'' in \emph{Computer vision and pattern
  recognition, 1997. Proceedings., 1997 IEEE computer society conference
  on}.\hskip 1em plus 0.5em minus 0.4em\relax IEEE, 1997, pp. 130--136.

\bibitem{papageorgiou2000trainable}
C.~Papageorgiou and T.~Poggio, ``A trainable system for object detection,''
  \emph{International Journal of Computer Vision}, vol.~38, no.~1, pp. 15--33,
  2000.

\bibitem{lecun1995learning}
Y.~LeCun, L.~Jackel, L.~Bottou, C.~Cortes, J.~S. Denker, H.~Drucker, I.~Guyon,
  U.~Muller, E.~Sackinger, P.~Simard \emph{et~al.}, ``Learning algorithms for
  classification: A comparison on handwritten digit recognition,'' \emph{Neural
  networks: the statistical mechanics perspective}, vol. 261, p. 276, 1995.

\bibitem{joachims1998text}
T.~Joachims, ``Text categorization with support vector machines: Learning with
  many relevant features,'' in \emph{European conference on machine
  learning}.\hskip 1em plus 0.5em minus 0.4em\relax Springer, 1998, pp.
  137--142.

\bibitem{sculley2007relaxed}
D.~Sculley and G.~M. Wachman, ``Relaxed online svms for spam filtering,'' in
  \emph{Proceedings of the 30th annual international ACM SIGIR conference on
  Research and development in information retrieval}.\hskip 1em plus 0.5em
  minus 0.4em\relax ACM, 2007, pp. 415--422.

\bibitem{vapnik2013nature}
V.~Vapnik, \emph{The nature of statistical learning theory}.\hskip 1em plus
  0.5em minus 0.4em\relax Springer science \& business media, 2013.

\bibitem{scholkopf2002learning}
B.~Sch{\"o}lkopf and A.~J. Smola, \emph{Learning with kernels: support vector
  machines, regularization, optimization, and beyond}.\hskip 1em plus 0.5em
  minus 0.4em\relax MIT press, 2002.

\bibitem{murphy2012machine}
K.~P. Murphy, \emph{Machine Learning: A Probabilistic Perspective}.\hskip 1em
  plus 0.5em minus 0.4em\relax MIT Press, 2012.

\bibitem{caponnetto2007optimal}
A.~Caponnetto and E.~De~Vito, ``Optimal rates for the regularized least-squares
  algorithm,'' \emph{Foundations of Computational Mathematics}, vol.~7, no.~3,
  pp. 331--368, 2007.

\bibitem{steinwart2009optimal}
I.~Steinwart, D.~R. Hush, C.~Scovel \emph{et~al.}, ``Optimal rates for
  regularized least squares regression.'' in \emph{COLT}, 2009.

\bibitem{gestel2002bayesian}
T.~V. Gestel, J.~A. Suykens, G.~Lanckriet, A.~Lambrechts, B.~D. Moor, and
  J.~Vandewalle, ``Bayesian framework for least-squares support vector machine
  classifiers, gaussian processes, and kernel fisher discriminant analysis,''
  \emph{Neural computation}, vol.~14, no.~5, pp. 1115--1147, 2002.

\bibitem{ye2007svm}
J.~Ye and T.~Xiong, ``Svm versus least squares svm,'' in \emph{Artificial
  Intelligence and Statistics}, 2007, pp. 644--651.

\bibitem{huang2012extreme}
G.-B. Huang, H.~Zhou, X.~Ding, and R.~Zhang, ``Extreme learning machine for
  regression and multiclass classification,'' \emph{IEEE Transactions on
  Systems, Man, and Cybernetics, Part B (Cybernetics)}, vol.~42, no.~2, pp.
  513--529, 2012.

\bibitem{evgeniou2000regularization}
T.~Evgeniou, M.~Pontil, and T.~Poggio, ``Regularization networks and support
  vector machines,'' \emph{Advances in computational mathematics}, vol.~13,
  no.~1, p.~1, 2000.

\bibitem{el2010spectrum}
N.~El~Karoui \emph{et~al.}, ``The spectrum of kernel random matrices,''
  \emph{The Annals of Statistics}, vol.~38, no.~1, pp. 1--50, 2010.

\bibitem{couillet2016kernel}
R.~Couillet, F.~Benaych-Georges \emph{et~al.}, ``Kernel spectral clustering of
  large dimensional data,'' \emph{Electronic Journal of Statistics}, vol.~10,
  no.~1, pp. 1393--1454, 2016.

\bibitem{mai2017random}
X.~Mai and R.~Couillet, ``A random matrix analysis and improvement of
  semi-supervised learning for large dimensional data,'' \emph{arXiv preprint
  arXiv:1711.03404}, 2017.

\bibitem{couillet2016random}
R.~Couillet and A.~Kammoun, ``Random matrix improved subspace clustering,'' in
  \emph{Signals, Systems and Computers, 2016 50th Asilomar Conference
  on}.\hskip 1em plus 0.5em minus 0.4em\relax IEEE, 2016, pp. 90--94.

\bibitem{cherkassky2004practical}
V.~Cherkassky and Y.~Ma, ``Practical selection of svm parameters and noise
  estimation for svm regression,'' \emph{Neural networks}, vol.~17, no.~1, pp.
  113--126, 2004.

\bibitem{chapelle2002choosing}
O.~Chapelle, V.~Vapnik, O.~Bousquet, and S.~Mukherjee, ``Choosing multiple
  parameters for support vector machines,'' \emph{Machine learning}, vol.~46,
  no. 1-3, pp. 131--159, 2002.

\bibitem{ayat2005automatic}
N.-E. Ayat, M.~Cheriet, and C.~Y. Suen, ``Automatic model selection for the
  optimization of svm kernels,'' \emph{Pattern Recognition}, vol.~38, no.~10,
  pp. 1733--1745, 2005.

\bibitem{weston2001feature}
J.~Weston, S.~Mukherjee, O.~Chapelle, M.~Pontil, T.~Poggio, and V.~Vapnik,
  ``Feature selection for svms,'' in \emph{Advances in neural information
  processing systems}, 2001, pp. 668--674.

\bibitem{huang2006ga}
C.-L. Huang and C.-J. Wang, ``A ga-based feature selection and parameters
  optimizationfor support vector machines,'' \emph{Expert Systems with
  applications}, vol.~31, no.~2, pp. 231--240, 2006.

\bibitem{lecun1998gradient}
Y.~LeCun, L.~Bottou, Y.~Bengio, and P.~Haffner, ``Gradient-based learning
  applied to document recognition,'' \emph{Proceedings of the IEEE}, vol.~86,
  no.~11, pp. 2278--2324, 1998.

\bibitem{xiao2017fashion}
H.~Xiao, K.~Rasul, and R.~Vollgraf, ``Fashion-mnist: a novel image dataset for
  benchmarking machine learning algorithms,'' \emph{arXiv preprint
  arXiv:1708.07747}, 2017.

\bibitem{smola2004tutorial}
A.~J. Smola and B.~Sch{\"o}lkopf, ``A tutorial on support vector regression,''
  \emph{Statistics and computing}, vol.~14, no.~3, pp. 199--222, 2004.

\bibitem{ali2018random}
H.~T. Ali, A.~Kammoun, and R.~Couillet, ``Random matrix asymptotics of inner
  product kernel spectral clustering,'' in \emph{2018 IEEE International
  Conference on Acoustics, Speech, and Signal Processing (ICASSP'18), Calgary
  (AB)}, 2018.

\bibitem{couillet2018classif}
R.~Couillet, Z.~Liao, and X.~Mai, ``{Classification Asymptotics in the Random
  Matrix Regime},'' in \emph{26th European Signal Processing Conference
  (EUSIPCO'2018)}.\hskip 1em plus 0.5em minus 0.4em\relax IEEE, 2018.

\bibitem{billingsley2008probability}
P.~Billingsley, \emph{Probability and measure}.\hskip 1em plus 0.5em minus
  0.4em\relax John Wiley \& Sons, 2008.

\bibitem{evgeniou2000image}
T.~Evgeniou, M.~Pontil, C.~Papageorgiou, and T.~Poggio, ``Image representations
  for object detection using kernel classifiers,'' in \emph{Asian Conference on
  Computer Vision}.\hskip 1em plus 0.5em minus 0.4em\relax Citeseer, 2000, pp.
  687--692.

\bibitem{baudat2000generalized}
G.~Baudat and F.~Anouar, ``Generalized discriminant analysis using a kernel
  approach,'' \emph{Neural computation}, vol.~12, no.~10, pp. 2385--2404, 2000.

\bibitem{mika1999fisher}
S.~Mika, G.~Ratsch, J.~Weston, B.~Scholkopf, and K.-R. Mullers, ``Fisher
  discriminant analysis with kernels,'' in \emph{Neural networks for signal
  processing IX, 1999. Proceedings of the 1999 IEEE signal processing society
  workshop.}\hskip 1em plus 0.5em minus 0.4em\relax Ieee, 1999, pp. 41--48.

\bibitem{cheng2013spectrum}
X.~Cheng and A.~Singer, ``The spectrum of random inner-product kernel
  matrices,'' \emph{Random Matrices: Theory and Applications}, vol.~2, no.~04,
  p. 1350010, 2013.

\bibitem{louart2018random}
C.~Louart, Z.~Liao, R.~Couillet \emph{et~al.}, ``A random matrix approach to
  neural networks,'' \emph{The Annals of Applied Probability}, vol.~28, no.~2,
  pp. 1190--1248, 2018.

\bibitem{bai2010spectral}
Z.~Bai and J.~W. Silverstein, ``Spectral analysis of large dimensional random
  matrices,'' 2010.

\end{thebibliography}
}

\clearpage

\begin{center}
  {\Large \textbf{Supplementary Material\\}} \vskip 0.1in \textbf{A Large Dimensional Analysis of \\Least Squares Support Vector Machines}
\end{center}
\vskip 0.3in




\appendices

\section{Proof of Theorem \ref{thm:Random Equivalent} }
\label{app:proof-theo-1}

Our key interest here is on the decision function of LS-SVM: $g(\bx)=\balpha^{\T}\bk(\bx)+b$ with $(\balpha,b)$ given by
\begin{equation*}
\begin{cases}
\balpha & ={\bS}^{-1}\left(\iden-\frac{\mathbf{1}_n\mathbf{1}_n^\T\bS^{-1}}{\mathbf{1}_n^{\T}\bS^{-1}\mathbf{1}_n}\right)\by\\
b & =\frac{{\bf 1}_{n}^{\T}{\bS}^{-1}\by}{{\bf 1}_{n}^{\T}{\bS}^{-1}{\bf 1}_{n}}
\end{cases}
\end{equation*}
and $\bS^{-1}=\left({\bK}+\frac{n}{\gamma}\iden\right)^{-1}$. 

Before going into the detailed proof, as we will frequently deal with random variables evolving as $n,p$ grow large, we shall use the extension of the $O(\cdot)$ notation introduced in \cite{couillet2016kernel}: for a random variable $x\equiv x_n$ and $u_n\ge 0$, we write $x=O(u_n)$ if for any $\eta>0$ and $D>0$, we have $n^D{\rm P}(x\ge n^\eta u_n)\to 0$. Note that under Assumption~\ref{as:Growth rate} it is equivalent to use either $O(u_n)$ or $O(u_p)$ since $n,p$ scales linearly. In the following we shall use constantly $O(u_n)$ for simplicity.

When multidimensional objects are concerned, $\bv=O(u_n)$ means the maximum entry of a vector (or a diagonal matrix) $\bv$ in absolute value is of order $O(u_n)$ and $\bM=O(u_n)$ means that the operator norm of $\bM$ is of order $O(u_n)$. We refer the reader to \cite{couillet2016kernel} for more discussions on these practical definitions.

Under the growth rate settings of Assumption~\ref{as:Growth rate}, from \cite{couillet2016kernel}, the approximation of the kernel matrix $\bK$ is given by
\begin{equation}
\bK = -2\fftau\left(\bP\bOmega^{\T}\bOmega\bP+\bA\right)+\beta\iden+O(n^{-\frac{1}{2}})
\end{equation}
with $\beta=f(0)-\ftau+\tau\fftau$ and $\bA = \bA_n+\bA_{\sqrt{n}}+\bA_1$, $\bA_n=-\frac{\ftau}{2\fftau}\mathbf{1}_n\mathbf{1}_n^{\T}$ and $\bA_{\sqrt{n}}$, $\bA_1$ given by \eqref{eq:VAV2} and \eqref{eq:VAV3} at the top of next page, where we denote 
\begin{align*}
t_a &\triangleq \frac{\tr(\bC_a-\bC^\circ)}{\sqrt{p}}=O(1)\\
(\bpsi)^2 &\triangleq[(\bpsi_1)^2,\ldots,(\bpsi_n)^2]^{\sf T}.
\end{align*}

\begin{figure*}[!t]
\normalsize
\setcounter{MYtempeqncnt}{\value{equation}}
\setcounter{equation}{17}
\begin{align}
\label{eq:VAV2}
\bA_{\sqrt{n}}&=-\frac{1}{2}\left[\bpsi\mathbf{1}_n^{\T}+\mathbf{1}_n\bpsi^{\T}+\left\{t_a\frac{\mathbf{1}_{n_a}}{\sqrt{p}}\right\}_{a=1}^2\mathbf{1}_n^{\T}+\mathbf{1}_n\left\{t_b\frac{\mathbf{1}_{n_b}^{\T}}{\sqrt{p}}\right\}_{b=1}^2\right]\\
\label{eq:VAV3}
\bA_1&=-\frac{1}{2}\left[ \left\{\|\bmu_a-\bmu_b\|^2\frac{\mathbf{1}_{n_a}\mathbf{1}_{n_b}^{\T}}{p}\right\}_{a,b=1}^2
+2\left\{\frac{(\bOmega\bP)_a^{\T}(\bmu_b-\bmu_a)\mathbf{1}_{n_b}^{\T}}{\sqrt{p}} \right\}_{a,b=1}^2 
-2\left\{\frac{\mathbf{1}_{n_a}(\bmu_b-\bmu_a)^{\T}(\bOmega\bP)_b}{\sqrt{p}} \right\}_{a,b=1}^2\right] \nonumber \\
& - \frac{\ffftau}{4\fftau}\left[(\bpsi)^2\mathbf{1}_n^{\T}+\mathbf{1}_n[(\bpsi)^2]^{\T}+\left\{t_a^2\frac{\mathbf{1}_{n_a}}{p} \right\}_{a=1}^2\mathbf{1}_n^{\T}+\mathbf{1}_n\left\{t_b^2\frac{\mathbf{1}_{n_b}^{\T}}{p} \right\}_{b=1}^2 +2\left\{t_a t_b\frac{\mathbf{1}_{n_a}\mathbf{1}_{n_b}^{\T}}{p} \right\}_{a,b=1}^2 + 2\mathcal{D}\{t_a \mathbf{I}_{n_a}\}_{a=1}^2\bpsi\frac{\mathbf{1}_n^{\T}}{\sqrt{p}} \right. \nonumber\\
&+\left.2\bpsi\left\{t_b\frac{\mathbf{1}_{n_b}^{\T}}{\sqrt{p}}\right\}_{b=1}^2+ 2\frac{\mathbf{1}_n}{\sqrt{p}}(\bpsi)^{\T}\mathcal{D}\{t_a \mathbf{1}_{n_a}\}_{a=1}^2 +2\left\{t_a\frac{\mathbf{1}_{n_a}}{\sqrt{p}}\right\} _{a=1}^2(\bpsi)^{\T}+ 4\left\{\tr(\bC_a\bC_b)\frac{\mathbf{1}_{n_a}\mathbf{1}_{n_b}^{\T}}{p^2} \right\}_{a,b=1}^2 +2\bpsi(\bpsi)^{\T} \right]\\
\label{eq:tilde-k}
\tilde{\bk}(\bx)&=\fftau\bigg[\left\{\frac{\|\bmu_b-\bmu_a\|^2}{p}\mathbf{1}_{n_b}\right\}_{b=1}^2-\frac{2}{\sqrt{p}}\left\{\mathbf{1}_{n_b}(\bmu_b-\bmu_a)^{\T}\right\}_{b=1}^2\bomega_\bx+\frac{2}{\sqrt{p}}\mathcal{D}\left(\left\{\mathbf{1}_{n_b}(\bmu_b-\bmu_a)^{\T}\right\}_{b=1}^2\bOmega\right) \bigg] \nonumber\nonumber\\
&+\frac{\ffftau}{2}\bigg[\left\{\frac{(t_a+t_b)^2}{p}\mathbf{1}_{n_b}\right\}_{b=1}^2 +2\mathcal{D}\bigg(\left\{\frac{t_a+t_b}{\sqrt{p}}\mathbf{1}_{n_b}\right\}_{b=1}^2\bigg)\bpsi+2\left\{\frac{t_a+t_b}{\sqrt{p}}\mathbf{1}_{n_b}\right\}_{b=1}^2\psi_\bx+(\bpsi)^2+2\psi_\bx\bpsi+\psi_\bx^2\mathbf{1}_n\nonumber\\
&+\bigg\{\frac{4}{p^2}\tr(\bC_a\bC_b)\mathbf{1}_{n_b}\bigg\}_{b=1}^2\bigg]
\end{align}
\setcounter{equation}{\value{MYtempeqncnt}}
\hrulefill
\vspace*{4pt}
\end{figure*}

We start with the term $\bS^{-1}$. The terms of leading order in $\bK$, i.e.,$-2\fftau\bA_n$ and $\frac{n}{\gamma}\iden$ are both of operator norm $O(n)$. Therefore a Taylor expansion can be performed as
\begin{align*}
&\bS^{-1}=\left(\bK+\frac{n}{\gamma}\iden\right)^{-1}=\frac{1}{n}\bigg[\bL^{-1}-\frac{2\fftau}{n}\\
&\left(\bA_{\sqrt{n}}+\bA_1+\bP\bOmega^{\T}\bOmega\bP\right)+\frac{\beta\iden}{n}+O(n^{-\frac{3}{2}})\bigg]^{-1}\\
&=\frac{\bL}{n}+\frac{2\fftau}{n^2}\bL\bA_{\sqrt{n}}\bL+\bL\left(\bQ-\frac{\beta}{n^2}\iden\right)\bL+O(n^{-\frac{5}{2}})
\end{align*}
with $\bL=\left(\ftau\vnones+\frac{\iden}{\gamma}\right)^{-1}$ of order $O(1)$ and 
$\bQ= \frac{2\fftau}{n^2}\left(\bA_1+\bP\bOmega^{\T}\bOmega\bP+\frac{2\fftau}{n}\bA_{\sqrt{n}}\bL\bA_{\sqrt{n}}\right)$.

With the Sherman-Morrison formula we are able to compute explicitly $\bL$ as
\begin{align}
\bL&=\left(\ftau\vnones+\frac{\iden}{\gamma}\right)^{-1}\nonumber=\gamma\left(\iden-\frac{\gamma\ftau}{1+\gamma\ftau}\vnones\right)\nonumber\\
&=\frac{\gamma}{1+\gamma\ftau}\iden+\frac{\gamma^2\ftau}{1+\gamma\ftau}\bP=O(1).\label{eq:L}
\end{align}

Writing $\bL$ as a linear combination of $\iden$ and $\bP$ is useful when computing $\bL\mathbf{1}_n$ or $\mathbf{1}_n^{\T}\bL$, because by the definition of $\bP=\iden-\vnones$, we have $\mathbf{1}_n^{\T}\bP=\bP\mathbf{1}_n=\mathbf{0}$.

We shall start with the term $\mathbf{1}_n^{\T}\bS^{-1}$, since it is the basis of several other terms appearing in $\balpha$ and $b$,
\begin{align*}
\mathbf{1}_n^{\T}\bS^{-1}&=\frac{\gamma\mathbf{1}_n^{\T}}{1+\gamma\ftau}\left[\frac{\iden}{n}+\frac{2\fftau}{n^2}\bA_{\sqrt{n}}\bL+\left(\bQ-\frac{\beta}{n^2}\iden\right)\bL \right]\\
&+O(n^{-\frac{3}{2}})
\end{align*}
since $\mathbf{1}_n^{\T}\bL=\frac{\gamma}{1+\gamma\ftau}\mathbf{1}_n^{\T}$.

With $\mathbf{1}_n^{\T}\bS^{-1}$ at hand, we next obtain,

\begin{align}
\mathbf{1}_n\mathbf{1}_n^{\T}\bS^{-1}&=\frac{\gamma}{1+\gamma\ftau} \bigg[ \underbrace{\frac{\mathbf{1}_n\mathbf{1}_n^{\T}}{n}}_{O(1)} + \underbrace{\frac{2\fftau}{n^2}\mathbf{1}_n\mathbf{1}_n^{\T}\bA_{\sqrt{n}}\bL}_{O(n^{-1/2})} \nonumber\\ 
&+ \underbrace{\mathbf{1}_n\mathbf{1}_n^{\T}\left(\bQ-\frac{\beta}{n^2}\iden\right)\bL}_{O(n^{-1})}  \bigg]+O(n^{-\frac{3}{2}}) \label{eq:1-1-S}
\end{align}
\begin{align*}
\mathbf{1}_n^{\T}\bS^{-1}\by&=\frac{\gamma}{1+\gamma\ftau} \bigg[ \underbrace{c_2-c_1}_{O(1)} + \underbrace{\frac{2\fftau}{n^2}\mathbf{1}_n^{\T}\bA_{\sqrt{n}}\bL\by}_{O(n^{-1/2})}\\
&+ \underbrace{\mathbf{1}_n^{\T}\left(\bQ-\frac{\beta}{n^2}\iden\right)\bL\by}_{O(n^{-1})}  \bigg]+O(n^{-\frac{3}{2}})
\end{align*}
\begin{align*}
\mathbf{1}_n^{\T}\bS^{-1}\mathbf{1}_n&=\frac{\gamma}{1+\gamma\ftau} \bigg[ \underbrace{1}_{O(1)}+  \underbrace{\frac{2\fftau}{n^2}\frac{\gamma\mathbf{1}_n^{\T}\bA_{\sqrt{n}}\mathbf{1}_n}{1+\gamma\ftau}}_{O(n^{-1/2})}\\
&+ \underbrace{\frac{\gamma}{1+\gamma\ftau}\mathbf{1}_n^{\T}\left(\bQ-\frac{\beta}{n^2}\iden\right)\mathbf{1}_n}_{O(n^{-1})}  \bigg]+O(n^{-\frac{3}{2}}).
\end{align*}

The inverse of $\mathbf{1}_n^{\T}\bS^{-1}\mathbf{1}_n$ can consequently be computed using a Taylor expansion around its leading order, allowing an error term of $O(n^{-\frac{3}{2}})$ as
\begin{align}
\frac{1}{\mathbf{1}_n^{\T}\bS^{-1}\mathbf{1}_n}&=\frac{1+\gamma\ftau}{\gamma} \bigg[ \underbrace{1}_{O(1)}-  \underbrace{\frac{2\fftau}{n^2}\frac{\gamma\mathbf{1}_n^{\T}\bA_{\sqrt{n}}\mathbf{1}_n}{1+\gamma\ftau}}_{O(n^{-1/2})} \nonumber\\
&- \underbrace{\frac{\gamma}{1+\gamma\ftau}\mathbf{1}_n^{\T}\left(\bQ-\frac{\beta}{n^2}\iden\right)\mathbf{1}_n}_{O(n^{-1})}  \bigg]+O(n^{-\frac{3}{2}}).\label{eq:inv-1-S-1}
\end{align}

Combing \eqref{eq:1-1-S} with \eqref{eq:inv-1-S-1} we deduce
\begin{align}
&\frac{\mathbf{1}_n\mathbf{1}_n^{\T}\bS^{-1}}{\mathbf{1}_n^{\T}\bS^{-1}\mathbf{1}_n}=\underbrace{\vnones}_{O(1)}+\underbrace{\frac{2\fftau}{n^2}\mathbf{1}_n\mathbf{1}_n^{\T}\bA_{\sqrt{n}}\bigg[\bL-\frac{\gamma\vnones}{1+\gamma\ftau} \bigg]}_{O(n^{-1/2})}\nonumber\\
&+\underbrace{\mathbf{1}_n\mathbf{1}_n^{\T}\left(\bQ-\frac{\beta}{n^2}\iden\right)\bigg[\bL-\frac{\gamma\vnones}{1+\gamma\ftau} \bigg]}_{O(n^{-1})}+O(n^{-\frac{3}{2}})
\end{align}
and similarly the following approximation of $b$ as
\begin{align}
b&=\underbrace{c_2-c_1}_{O(1)}-\underbrace{\frac{2\gamma}{\sqrt{p}}c_1c_2\fftau(t_2-t_1)}_{O(n^{-1/2})} -\underbrace{\frac{\gamma\fftau}{n}\by^{\T}\bP\bpsi}_{O(n^{-1})} \nonumber\\
&\underbrace{-\frac{\gamma\ffftau}{2n}\by^{\T}\bP(\bpsi)^2+\frac{4\gamma c_1c_2}{p}[c_1T_1+(c_2-c_1)D-c_2T_2]}_{O(n^{-1})}\nonumber\\
&+O(n^{-\frac{3}{2}})\label{eq:b}
\end{align} 
where
\begin{align*}
D&=\frac{\fftau}{2}\|\bmu_2-\bmu_1\|^2+\frac{\ffftau}{4}(t_1+t_2)^2+\ffftau\frac{\tr\bC_1\bC_2}{p}\\
T_a&=\ffftau t_a^2+\ffftau\frac{\tr\bC_1\bC_2}{p}
\end{align*}
which gives the asymptotic approximation of $b$.

Moving to $\balpha$, note from \eqref{eq:L} that $\bL-\frac{\gamma}{1+\gamma\ftau}\vnones =\gamma\bP$, and we can thus rewrite:
\begin{align*}
\frac{\mathbf{1}_n\mathbf{1}_n^{\T}\bS^{-1}}{\mathbf{1}_n^{\T}\bS^{-1}\mathbf{1}_n}&=\vnones+\frac{2\gamma\fftau}{n^2}\mathbf{1}_n\mathbf{1}_n^{\T}\bA_{\sqrt{n}}\bP\\
&+\gamma\mathbf{1}_n\mathbf{1}_n^{\T}\left(\bQ-\frac{\beta}{n^2}\iden\right)\bP+O(n^{-\frac{3}{2}}).
\end{align*}

\addtocounter{equation}{3}

At this point, for $\balpha =\bS^{-1}\left(\iden-\frac{\mathbf{1}_n\mathbf{1}_n^{\T}\bS^{-1}}{\mathbf{1}_n^{\T}\bS^{-1}\mathbf{1}_n}\right)\by$, we have
\begin{align*}
\balpha&=\bS^{-1}\bigg[\iden-\frac{2\gamma\fftau}{n^2}\mathbf{1}_n\mathbf{1}_n^{\T}\bA_{\sqrt{n}}\\
&-\gamma\mathbf{1}_n\mathbf{1}_n^{\T}\left(\bQ-\frac{\beta}{n^2}\iden\right)\bigg]\bP\by+O(n^{-\frac{5}{2}}).
\end{align*}

Here again, we use $\mathbf{1}_n^{\T}\bL=\frac{\gamma}{1+\gamma\ftau}\mathbf{1}_n^{\T}$ and $\bL-\frac{\gamma}{1+\gamma\ftau}\vnones=\gamma\bP$, to eventually get
\begin{align}
\balpha&=\underbrace{\frac{\gamma}{n}\bP\by}_{O(n^{-1})}+\underbrace{\gamma^2\bP\left(\bQ-\frac{\beta}{n^2}\iden\right)\bP\by}_{O(n^{-2})}\label{eq:alpha}\\
&-\underbrace{\frac{\gamma^2}{1+\gamma\ftau}\left(\frac{2\fftau}{n^2}\right)^2\bL\bA_{\sqrt{n}}\mathbf{1}_n\mathbf{1}_n^{\T}\bA_{\sqrt{n}}\bP\by}_{O(n^{-2})}+O(n^{-\frac{5}{2}}).\nonumber
\end{align}
Note here the absence of a term of order $O(n^{-3/2})$ in the expression of $\balpha$ since $\bP\bA_{\sqrt{n}}\bP=0$ from \eqref{eq:VAV2}. 

We shall now work on the vector $\bk(\bx)$ for a new datum $\bx$, following the same analysis as in \cite{couillet2016kernel} for the kernel matrix $\bK$, assuming that $\bx\sim\mathcal{N}(\bmu_a,\bC_a)$ and recalling the random variables definitions,
\begin{align*}
\bomega_\bx &\triangleq (\bx-\bmu_a)/\sqrt{p}\\
\psi_\bx &\triangleq \|\bomega_\bx\|^2-\mathbb{E}\|\bomega_\bx\|^2
\end{align*}
we show that the $j$-th entry of $\bk(\bx)$ can be written as
\begin{align}
&[\bk(\bx)]_j=\underbrace{\ftau}_{O(1)}+\fftau\bigg[\underbrace{\frac{t_a+t_b}{\sqrt{p}}+\psi_x+\psi_j-2(\bomega_\bx)^{\T}\bomega_j}_{O(n^{-1/2})}\nonumber\\
&+\underbrace{\frac{\|\bmu_b-\bmu_a\|^2}{p}+\frac{2}{\sqrt{p}}(\bmu_b-\bmu_a)^{\T}(\bomega_j-\bomega_\bx)}_{O(n^{-1})}\bigg]+\frac{\ffftau}{2}\nonumber\\
&\bigg[\underbrace{\left(\frac{t_a+t_b}{\sqrt{p}}+\psi_j+\psi_\bx\right)^2+\frac{4}{p^2}\tr \bC_a\bC_b}_{O(n^{-1})}\bigg]+O(n^{-\frac{3}{2}}).\label{eq:k(x)}
\end{align}

Combining \eqref{eq:alpha} and \eqref{eq:k(x)}, we deduce
\begin{align}
&\balpha^{\T}\bk(\bx)=\underbrace{\frac{2\gamma}{\sqrt{p}}c_1c_2\fftau(t_2-t_1)}_{O(n^{-1/2})}+\underbrace{\frac{\gamma}{n}\by^{\T}\bP\tilde{\bk}(\bx)}_{O(n^{-1})}\nonumber\\
&+\underbrace{\frac{\gamma\fftau}{n}\by^{\T}\bP(\bpsi-2\bP\bOmega^{\T}\bomega_\bx)}_{O(n^{-1})}+O(n^{-\frac{3}{2}})\label{eq:alpha-k(x)-1}
\end{align}
with $\tilde{\bk}(\bx)$ given in \eqref{eq:tilde-k}.

At this point, note that the term of order $O(n^{-\frac{1}{2}})$ in the final object $g(\bx)=\balpha^{\T}\bk(\bx)+b$ disappears because in both \eqref{eq:b} and \eqref{eq:alpha-k(x)-1} the term of order $O(n^{-1/2})$ is $\frac{2\gamma}{\sqrt{p}}c_1c_2\fftau(t_2-t_1)$ but of opposite signs. Also, we see that the leading term $c_2-c_1$ in $b$ will remain in $g(\bx)$ as stated in Remark~\ref{rem:Dominant Bias}.

The development of $\by^{\T}\bP\tilde{\bk}(\bx)$ induces many simplifications, since i) $\bP\mathbf{1}_n=\mathbf{0}$ and ii) random variables as $\bomega_\bx$ and $\bpsi$ in $\tilde{\bk}(\bx)$, once multiplied by $\by^{\T}\bP$, thanks to probabilistic averaging of independent zero-mean terms, are of smaller order and thus become negligible. We thus get
\begin{align}
&\frac{\gamma}{n}\by^{\T}\bP\tilde{\bk}(\bx)=2\gamma c_1c_2\fftau\bigg[ \frac{\|\bmu_2-\bmu_a\|^2-\|\bmu_1-\bmu_a\|^2}{p}\nonumber\\
&-2(\bomega_\bx)^{\T}\frac{\bmu_2-\bmu_1}{\sqrt{p}}\bigg]+\frac{\gamma\ffftau}{2n}\by^{\T}\bP(\bpsi)^2+\gamma c_1c_2\ffftau\bigg[ \nonumber\\
&2\left(\frac{t_a}{\sqrt{p}}+\psi_\bx\right)\frac{t_2-t_1}{\sqrt{p}}+\frac{t_2^2-t_1^2}{p}+\frac{4}{p^2}\tr(\bC_a\bC_2-\bC_a\bC_1) \bigg]\nonumber\\
&+O(n^{-\frac{3}{2}}).\label{eq:alpha-k(x)-n-1}
\end{align}

This result, together with \eqref{eq:alpha-k(x)-1}, completes the analysis of the term $\balpha^{\T}\bk(\bx)$. Combining \eqref{eq:alpha-k(x)-1}-\eqref{eq:alpha-k(x)-n-1} with \eqref{eq:b} we conclude the proof of Theorem~\ref{thm:Random Equivalent}.

\section{Proof of Theorem \ref{thm:Gaussian Approximation}}
\label{app:proof-theo-2}

This section is dedicated to the proof of the central limit theorem for
\[\hat{g}(\bx)=c_2-c_1+\gamma\left(\mathfrak{P}+c_\bx\mathfrak{D}\right)\]
 with the shortcut $c_\bx=-2c_1c_2^2$ for $\bx\in\mathcal{C}_1$ and $c_\bx=2c_1^2c_2$ for $\bx\in\mathcal{C}_2$, and $\mathfrak{P}, \mathfrak{D}$ as defined in \eqref{eq:P} and \eqref{eq:D}.

 Our objective is to show that for $a\in\{1,2\}$, $n(\hat{g}(\bx)-G_a)\cd 0$ with 
 \[
 	G_a\sim\mathcal{N}({\rm E}_a, {\rm Var}_a)
 \] 
where ${\rm E}_a$ and ${\rm Var}_a$ are given in Theorem~\ref{thm:Gaussian Approximation}. We recall that $\bx=\bmu_a+\sqrt{p}\bomega_\bx$ with $\bomega_\bx\sim\mathcal{N}(0,\bC_a/p)$.

Letting $\bz_\bx$ such that $\bomega_\bx=\bC_a^{1/2} \bz_\bx /\sqrt{p}$, we have $ \bz_\bx\sim\mathcal{N}(\mathbf{0},\iden)$ and we can rewrite $\hat{g}(\bx)$ in the following quadratic form (of $\bz_\bx$) as 
\[
\hat{g}(\bx)=\bz_\bx^{\T}\bA\bz_\bx+\bz_\bx^{\T}\bb+c
\]
with
\begin{align*}
\bA&=2\gamma c_1c_2\ffftau\frac{\tr (\bC_2-\bC_1)}{p}\frac{\bC_a}{p}\\
\bb&=-\frac{2\gamma\fftau}{n}\frac{\left(\bC_a\right)^{\frac{1}{2}}}{\sqrt{p}}\bOmega\bP\by-\frac{4c_1 c_2\gamma\fftau}{\sqrt{p}}\frac{\left(\bC_a\right)^{\frac{1}{2}}}{\sqrt{p}}\left(\bmu_2-\bmu_1\right)\\
c&=c_2-c_1+\gamma c_\bx\mathfrak{D}-2\gamma c_1c_2\ffftau\frac{\tr (\bC_2-\bC_1)}{p}\frac{\tr \bC_a}{p}.
\end{align*}

Since $\bz_\bx$ is (standard) Gaussian and has the same distribution as $\bU\bz_\bx$ for any orthogonal matrix $\bU$ (i.e., such that $\bU^{\T}\bU=\bU\bU^{\T}=\iden$), we choose $\bU$ that diagonalize $\bA$ such that $\bA=\bU\bLambda\bU^{\T}$, with $\bLambda$ diagonal so that $\hat{g}(\bx)$ and $\tilde{g}(\bx)$ have the same distribution where
\begin{equation*}
\tilde{g}(\bx)=\bz_\bx^{\T}\bLambda\bz_\bx+\bz_\bx^{\T}\tilde{\bb}+c=\sum_{i=1}^{n}\left(z_i^2\lambda_i + z_i\tilde{b}_i+\frac{c}{n}\right)
\end{equation*}
and $\tilde{\bb}=\bU^{\T}\bb$, $\lambda_i$ the diagonal elements of $\bLambda$ and $z_i$ the elements of $\bz_\bx$.

Conditioning on $\bOmega$, we thus result in the sum of independent but not identically distributed random variables $r_i=z_i^2\lambda_i + z_i\tilde{b}_i+\frac{c}{n}$. We then resort to the Lyapunov CLT \cite[Theorem~27.3]{billingsley2008probability}.

We begin by estimating the expectation and the variance
\begin{align*}
\E[r_i|\bOmega]&=\lambda_i +\frac{c}{n}\\
{\rm Var}[r_i|\bOmega]&=\sigma_i^2=2\lambda_i^2+\tilde{b}_i^2
\end{align*}
of $r_i$, so that 
\begin{align*}
\sum_{i=1}^{n}\E[r_i|\bOmega]&=c_2-c_1+\gamma c_\bx\mathfrak{D}={\rm E}_a\\
s^2_n&=\sum_{i=1}^{n}{\sigma_i^2}=2\tr(\bA^2) + \bb^{\T}\bb\\
&=8\gamma^2c_1^2c_2^2\left(\ffftau\right)^2\frac{\left(\tr\left(\bC_2-\bC_1\right)\right)^2}{p^2}\frac{\tr \bC_a^2}{p^2}\\
&+4\gamma^2\left(\frac{\fftau}{n}\right)^2\by^{\T}\bP\bOmega^{\T}\frac{\bC_a}{p}\bOmega\bP\by\\
&+\frac{16\gamma^2c_1^2c_2^2(\fftau)^2}{p}(\bmu_2-\bmu_1)^{\T}\frac{\bC_a}{p}(\bmu_2-\bmu_1)\\
&+O(n^{-\frac{5}{2}}).
\end{align*}

We shall rewrite $\bOmega$ into two blocks as:
\begin{equation*}
\bOmega=
\begin{bmatrix}
\frac{\left(\bC_1\right)^{\frac{1}{2}}}{\sqrt{p}}\bZ_1,& \frac{\left(\bC_2\right)^{\frac{1}{2}}}{\sqrt{p}}\bZ_2
\end{bmatrix}
\end{equation*}
where $\bZ_1\in\mathbb{R}^{p\times n_1}$ and $\bZ_2\in\mathbb{R}^{p\times n_2}$ with i.i.d. Gaussian entries with zero mean and unit variance. Then 
\begin{equation*}
\bOmega^{\T}\frac{\bC_a}{p}\bOmega=\frac{1}{p^2}
\begin{bmatrix}
\bZ_1^{\T}(\bC_1)^{\frac{1}{2}}\bC_a(\bC_1)^{\frac{1}{2}}\bZ_1 & \bZ_1^{\T}(\bC_1)^{\frac{1}{2}}\bC_a(\bC_2)^{\frac{1}{2}}\bZ_2 \\  \bZ_2^{\T}(\bC_2)^{\frac{1}{2}}\bC_a(\bC_1)^{\frac{1}{2}}\bZ_1 & \bZ_2^{\T}(\bC_2)^{\frac{1}{2}}\bC_a(\bC_2)^{\frac{1}{2}}\bZ_2
\end{bmatrix}
\end{equation*}
and with $\bP\by=\by-(c_2-c_1)\mathbf{1}_n$, we deduce
\begin{align*}
&\by^{\T}\bP\bOmega^{\T}\frac{\bC_a}{p}\bOmega\bP\by=\frac{4}{p^2}\left(c_2^2\mathbf{1}_{n_1}^{\T}\bZ_1^{\T}(\bC_1)^{\frac{1}{2}}\bC_a(\bC_1)^{\frac{1}{2}}bZ_1\mathbf{1}_{n_1}\right.\\
&\left.-2c_1c_2\mathbf{1}_{n_1}^{\T}\bZ_1^{\T}(\bC_1)^{\frac{1}{2}}\bC_a(\bC_2)^{\frac{1}{2}}\bZ_2\mathbf{1}_{n_2} \right. \\
& \left. +c_2^2\mathbf{1}_{n_1}^{\T}\bZ_2^{\T}(\bC_2)^{\frac{1}{2}}\bC_a(\bC_2)^{\frac{1}{2}}\bZ_2\mathbf{1}_{n_2} \right).
\end{align*}

Since $\bZ_i\mathbf{1}_{n_i}\sim\mathcal{N}(\mathbf{0},n_i\mathbf{I}_{n_i})$, by applying the trace lemma \cite[Lemma~B.26]{bai2010spectral} we get
\begin{equation}
\by^{\T}\bP\bOmega^{\T}\frac{\bC_a}{p}\bOmega\bP\by-\frac{4nc_1^2c_2^2}{p^2}\left(\frac{\tr \bC_1\bC_a}{c_1}+\frac{\tr \bC_2\bC_a}{c_2}\right)\asc 0.\label{eq:Omega-limit}
\end{equation}

Consider now the events 
\begin{align*}
E&=\left\{ \left|\by^{\T}\bP\bOmega^{\T}\frac{\bC_a}{p}\bOmega\bP\by-\rho\right|<\epsilon\right\}\\
\bar{E}&=\left\{\left|\by^{\T}\bP\bOmega^{\T}\frac{\bC_a}{p}\bOmega\bP\by-\rho\right|>\epsilon\right\}
\end{align*}
for any fixed $\epsilon$ with $\rho=\frac{4nc_1^2c_2^2}{p^2}\left(\frac{\tr \bC_1\bC_a}{c_1}+\frac{\tr \bC_2\bC_a}{c_2}\right)$ and write
\begin{align}
&\E\left[\exp\left(iun\frac{\tilde{g}(\bx)-{\rm E}_a}{s_n}\right)\right]=\E\left[\exp\left(iun\frac{\tilde{g}(\bx)-{\rm E}_a}{s_n}\right)\bigg|E\right]\nonumber\\
&{\rm P}(E)+\E\left[\exp\left(iun\frac{\tilde{g}(\bx)-{\rm E}_a}{s_n}\right)\bigg|\bar{E}\right]{\rm P}(\bar{E})\label{eq:condition}
\end{align}

We start with the variable $\tilde{g}(\bx)|E$ and check that Lyapunov's condition for $\bar{r}_i=r_i-\E[r_i]$, conditioning on $E$,
\begin{equation*}
\lim_{n\to\infty}{\frac {1}{s_n^4}\sum _{i=1}^n\E[|\bar{r}_i|^4]}=0
\end{equation*}
holds by rewriting
\begin{equation*}
\lim_{n\to\infty}{\frac {1}{s_n^4}\sum _{i=1}^n\E[|\bar{r}_i|^4]}=\lim_{n\to\infty}{\sum _{i=1}^n\frac{60\lambda_i^4+12\lambda_i^2\tilde{b_i}^2+3\tilde{b_i}^4}{s_n^4}}=0
\end{equation*}
since both $\lambda_i$ and $\tilde{b}_i$ are of order $O(n^{-3/2})$.

As a consequence of the above, we have the CLT for the random variable $\tilde{g}(\bx)|E$, thus 
\[
\E\left[\exp\left(iun\frac{\tilde{g}(\bx)-{\rm E}_a}{s_n}\right)\bigg|E\right]\to\exp(-\frac{u^2}{2}).
\]

Next, we see that the second term in \eqref{eq:condition} goes to zero because $\big|\E\big[\exp\left(iun\frac{\tilde{g}(\bx)-{\rm E}_a}{s_n}\right)\big|\bar{E}\big]\big|\le 1$ and ${\rm P}(\bar{E})\to 0$ from \eqref{eq:Omega-limit} and we eventually deduce
\[\E\left[\exp\left(iun\frac{\tilde{g}(\bx)-{\rm E}_a}{s_n}\right)\right]\to\exp(-\frac{u^2}{2}).
\]

With the help of L\'evy's continuity theorem, we thus prove the CLT of the variable $n\frac{\tilde{g}(\bx)-{\rm E}_a}{s_n}$. Since $s_n^2\to{\rm Var}_a$, with Slutsky's theorem, we have the CLT for $n\frac{\tilde{g}(\bx)-{\rm E}_a}{\sqrt{{\rm Var}_a}}$ (thus for $n\frac{\hat{g}(\bx)-{\rm E}_a}{\sqrt{{\rm Var}_a}}$), and eventually for $n\frac{{g}(\bx)-{\rm E}_a}{\sqrt{{\rm Var}_a}}$ by Theorem~\ref{thm:Random Equivalent} which completes the proof.

\end{document}